\renewcommand{\hat}{\widehat}
\def\shownotes{1}  
\newcommand{\authnote}[2]{[#1: #2]}
\newcommand{\authnote}[2]{}
\renewcommand{\hat}{\widehat}
\newtheorem{theorem}{Theorem}[section]
\newtheorem{lemma}{Lemma}[section]
\newtheorem{claim}{Claim}[section]
\newtheorem*{remark*}{Remark}
\newtheorem*{observation*}{Observation}
\numberwithin{equation}{section}
\newcommand{\E}{\mathbb{E}}
\newcommand{\cov}{\textup{Cov}}
\newcommand{\R}{\mathbb{R}}
\newcommand{\cF}{\mathcal{F}}
\newcommand{\cL}{\mathcal{L}}
\newcommand{\cN}{\mathcal{N}}
\newcommand{\cS}{\mathcal{S}}
\newcommand{\cU}{\mathcal{U}}
\newcommand{\cV}{\mathcal{V}}
\newcommand{\cW}{\mathcal{W}}
\newcommand{\argmin}{\arg \min}
\newcommand{\argmax}{\arg \max}
\newcommand{\ce}{\loss^{\textup{ce}}}
\newcommand{\tr}{\textup{tr}}
\newcommand{\onevec}{\vec{1}}
\newcommand{\zerovec}{\vec{0}}
\newcommand{\diag}{\textup{diag}}
\newcommand{\probs}{p}
\newcommand{\droppr}{q}
\newcommand{\sftmax}{\textup{softmax}}
\newcommand{\deriv}{D}
 \newcommand{\var}[1]{\bm{#1}}
\newcommand{\hidden}[1]{h_{#1}} \newcommand{\net}{F}
\newcommand{\neti}[1]{\net_{#1}} \newcommand{\drop}[1]{\textsc{Dropout}_{#1}}
\newcommand{\elldrop}{\ell_{\textup{drop}}}
\newcommand{\helldrop}[1]{\hat{\ell}_{\textup{drop}, #1}}
\newcommand{\lhess}{H_{\textup{out}}}
\newcommand{\jac}[1]{J_{\net, {#1}}} \newcommand{\ljac}[1]{J_{\textup{loss}, {#1}}} \newcommand{\noise}{\eta} \newcommand{\mask}[1]{\noise_{#1}} \newcommand{\perturb}{\delta}
\newcommand{\expdrop}{L_{\textup{drop}}}
\newcommand{\nodrop}{L}\newcommand{\regdrop}{R_{\textup{drop}}}
\newcommand{\impdrop}{\xi_{\textup{drop}}} \newcommand{\wimpdrop}{\widetilde{\xi}_{\textup{drop}}}
\newcommand{\regexp}[1]{R_{\textup{approx}#1}}
\newcommand{\reglossj}{\widetilde{R}_{\textup{approx}}}
\newcommand{\impours}[1]{\xi_{\textup{approx}#1}} 
\newcommand{\matip}[2]{\left \langle {#1}, {#2} \right \rangle} 
\newcommand{\dout}{c} \newcommand{\din}{d}
\newcommand{\loss}{\ell} \newcommand{\ub}{B} \newcommand{\radius}{\sigma} \newcommand{\ploss}{\widetilde{\loss}} \newcommand{\gnorm}[1]{\| #1 \|_2}
\newcommand{\normto}[1]{\| #1 \|_{2, 1}}
\newcommand{\emplclass}{\hat{\cL}}
\newcommand{\pnlty}{s}
\newcommand{\dist}{P}
\newcommand{\edist}{P_n}
\newcommand{\erad}{\hat{\mathfrak{R}}}
\newcommand{\cover}{\cN}
\newcommand{\const}{\rho}
\newcommand{\gradm}{\mu}
\newcommand{\hessm}{\nu}
\title{The Implicit and Explicit Regularization Effects of Dropout}
\author{Colin Wei\thanks{Stanford University, email:
    colinwei@stanford.edu}~ \and Sham Kakade\thanks{Microsoft Research
    \& University of Washington, email: sham@cs.washington.edu}~ \and Tengyu Ma\thanks{Stanford University, email: tengyuma@stanford.edu}}
\date{March 3, 2020}
\begin{document}
\maketitle

\begin{abstract}
  Dropout is a widely-used regularization technique, often required to
  obtain state-of-the-art for a number of architectures. This work
  demonstrates that dropout introduces two distinct but entangled
  regularization effects: an \textit{explicit} effect (also studied in
  prior work) which occurs since dropout modifies the expected 
  training objective, and, perhaps surprisingly, an additional
  \textit{implicit} effect from the stochasticity in the dropout
  training update. This implicit regularization effect is 
  analogous to the effect of stochasticity in small mini-batch
  stochastic gradient descent.  We disentangle these two effects through 
  controlled experiments. We then derive analytic simplifications which
  characterize each effect in terms of the derivatives of the model
  and the loss, for deep neural networks.  We demonstrate these
  simplified, analytic regularizers accurately capture the important
  aspects of dropout, showing they faithfully replace dropout in
  practice.
\end{abstract}
\section{Introduction}
Dropout is a commonly used regularization technique for neural nets~\citep{hinton2012improving,srivastava2014dropout}. In NLP, dropout is the norm on both small and large models, as it is much more effective than methods such as $\ell_2$ regularization~\citep{merity2017regularizing}. 
In vision, dropout is often used to train extremely large models such as EfficientNet-B7~\citep{tan2019efficientnet}.

At training time, dropout sets a random subset of activations to zero,
perturbing the network output with a remarkable amount of
noise. Testing is performed on the full model, and it is somewhat
mysterious that dropout works so well despite this difference between
train and test. The esoteric nature of dropout has inspired a large
body of work studying its regularization effects:
\citet{wager2013dropout,helmbold2015inductive,cavazza2017dropout,mianjy2018implicit,mianjy2019dropout}~study
dropout for linear models, matrix factorization, and linearized
networks; ~\citet{arora2020dropout} study deep networks with dropout
only at the last layer. These works primarily study simpler settings
than those used in practice, and, as we demonstrate, there is an
\textit{implicit} regularization effect of dropout that is not
adressed by prior work.

A large body of recent work has studied implicit, or algorithmic
regularization in deep learning, defined to be a regularization effect
imposed by the training algorithm, not by the objective (see for
example~\citep{gunasekar2017implicit,li2017algorithmic,gunasekar2018characterizing,arora2019implicit}
and references therein). One notable example of this is in comparing
the generalization performance of SGD vs GD: the implicit
regularization effect of stochasticity in SGD has been empirically
studied in the context of small v.s. large batch
training~\citet{keskar2016large}, where it is observed that noisier
small-batch SGD converges to ``flatter'' local minima which generalize
better, whereas large-batch SGD converges ``sharper'' local minima
which generalize more poorly. The starting point of this work is
observing that in practice, dropout also introduces an implicit source
of regularization because it adds noise to the gradient updates
(somewhat analogous to the small v.s. large batch
training). Prior studies of dropout only analyze its \textit{explicit} regularization effect, focusing on how it modifies the expected loss.\footnote{Prior work~\citep{mianjy2018implicit} refers to this as the ``implicit bias'' of dropout. We refer to this as explicit regularization and reserve the term ``implicit'' to mean algorithmic regularization effect which does not change the objective.} 
Understanding dropout in practical settings requires studying both
regularization effects.

This paper focuses on a sharp characterization of the regularization
effects in dropout, where we:
\begin{itemize}
	\item  disentangle and analytically characterize the explicit and implicit regularization effects of dropout.
	\item derive simplified, analytical, and interpretable regularizers which completely replace dropout for language modeling tasks.
\end{itemize} 

More concretely, this work makes the following contributions: 

1. This work empirically shows that dropout provides both explicit and
implicit regularization effects. Dropout modifies the expected
training objective, and it is natural to define the  explicit
regularizer as the difference between the expected training
objective and the standard objective,  as follows:
\begin{align}
	R(\net) = \E_{x} \left[\E_{\textup{drop}}[\loss(\net_{\textup{drop}}(x))]\right] - \E_x\left[\loss(\net(x))\right] \nonumber
\end{align}
Here $F_{\textup{drop}}$ denotes the dropout model and $\textup{drop}$ denotes the randomness from dropout. 
Moreover, the optimization uses a stochastic approximation of the expected training loss by sampling the dropout noise, which gives rise to an implicit regularization effect. 

In practice, the two regularization effects are entangled and easy to
conflate. Section~\ref{sec:disentangle} provides results of
experiments which disentangle these effects.  

2. We then distill these two regularization effects, providing
simpler and more interpretable regularizers that depend on the
derivatives of the model and loss (Section~\ref{sec:analytic}). 
Intuitively, dropout regularizes the stability of the model and loss output evaluated on each training datapoint. 
Theoretically (in Section~\ref{sec:theory}), we
provide a generalization bound which helps justify the dependencies of
these regularizers on the loss derivatives. 

3. Empirically, detailed experiments are provided in Section~\ref{sec:experiments} showing that these
simplified, analytical regularizers can faithfully match and
replace dropout for both LSTM and Transformer architectures, on the
Penn Treebank, Wikitext-2, and Wikitext-103 datasets. To our
knowledge, these are the most accurate empirical demonstrations of
theory matching practice with regards to the analysis of dropout.\footnote{Our code is available at \url{https://github.com/cwein3/dropout-analytical}.}

4. Finally, the form of the derived explicit
regularizer provides detailed intuition on how to regularize
the stability of a deep model. When the number of output classes
(i.e. vocabulary in language modeling) is large, dropout regularizes
most heavily the stability of predictions corresponding to classes to which the model assigns
a prediction probability that is not too certain (i.e., not close to
either 0 or 1). Our ablation experiments in Section~\ref{sec:hessian}
reveal this is critical for the
effectiveness of dropout, and our theory in Section~\ref{sec:theory}
offers additional justification for this perspective.

More generally, we hope that the precise
methodological derivations that we provide can inform the future study and derivation
of data-dependent regularizers in deep learning.

\section{Preliminaries}\label{sec:prelim}
 \textup{\noindent {\bf Notation.}} For a function $f(a) : \R^{d_1} \to \R^{d_2}$ on a vector $a$, we use $\deriv^k_{\var{a}} f[b] \in \R^{d_2 \times d_1}$ to denote the $k$-th derivative of $f$ with respect to the variable $a$ (with a bold subscript to distinguish the variable from its value) evaluated at $b$. 
 We drop the subscript $\var{a}$ when the emphasis is unnecessary. 
 Let $f \circ g$ to denote the composition of $f$ with $g$. 
 For vector $v$, let $\diag(v)$ denote the matrix with entries of $v$ on its diagonal and 0 elsewhere. Let $\tr(M)$ denote the trace of $M$. 
 For matrices $M_1, M_2$, let $\matip{M_1}{M_2}$ denote the inner product of their vectorizations. 
 For a vector $v$, we use $(v)_i$ to denote the $i$-th coordinate of $v$, dropping the parenthesis when it is clear from context. For vectors $v_1, v_2$, $v_1 \odot v_2$ refers to their entrywise product. 
 We let $v_1^{\odot 2} \triangleq v_1 \odot v_1$. Let $\onevec$ denote the all 1's vector.
 Throughout the paper, we consider a neural network $\net$ with
 weights $W$ and a loss function $\loss : \R^{\dout} \times [\dout]
 \to \R$, where $\dout$ is the number of classes. We omit the dependence on the weights $W$ when it is clear from context. 
 We use $x$ and $y$ to denote inputs and labels. 
 The loss is computed via $\loss(\net(x), y)$, where we hide $y$ when it is clear from context.  
 
 \noindent {\bf Dropout.} 
 The most common variant of dropout, node dropout~\citep{hinton2012improving,srivastava2014dropout}, randomly sets hidden activations in a given layer to 0. Formally, 
 for some probability $\droppr$ and layer $\hidden{} \in \R^d$
 (i.e. $h$ is the vector of outputs of some layer), we sample a scaling vector $\noise \in \R^d$ with
 independent random coordinates:  
 $$(\noise)_k = \begin{cases}
  -1 &\text{ with probability } \droppr\\
  \frac{\droppr}{\droppr-1} &\text{ with probability } 1 - \droppr
  \end{cases}$$
Here $k$ indexes a coordinate of $h$. Note that $\noise$ is a zero mean random variable. 
 We then apply dropout by computing 
\[
{\hidden{}}_{\textup{drop}} = (\onevec + \noise) \odot \hidden{} 
\]
and using ${\hidden{}}_{\textup{drop}}$ instead of $\hidden{}$.
With slight abuse of notation, we let $\noise$ denote the collection of such vectors over all layers. 
 $\net(x, \noise)$ denotes the output of model $\net$ on input $x$
 using dropout noise $\noise$.   

\begin{figure*}
	\centering
	\includegraphics[width=0.23\textwidth]{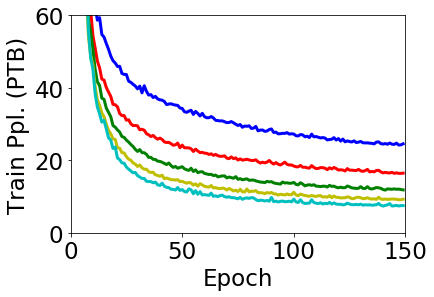}
	\includegraphics[width=0.23\textwidth]{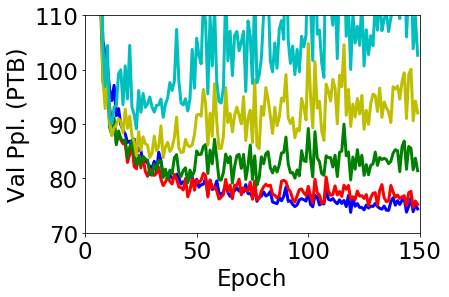}
	\includegraphics[width=0.23\textwidth]{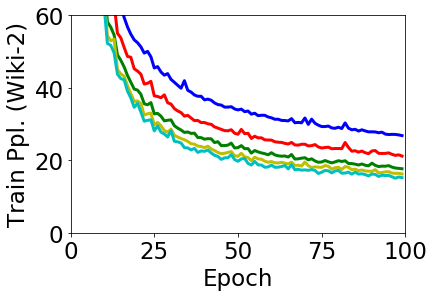}
	\includegraphics[width=0.23\textwidth]{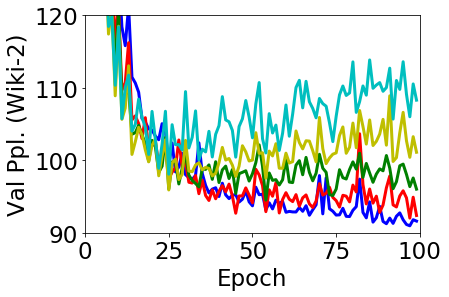}\\
	\centering \includegraphics[width=0.5\textwidth]{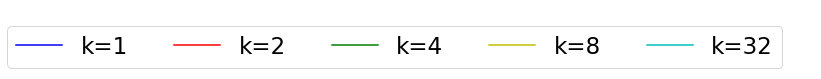}
	\caption{\textbf{Averaging dropout noise degrades performance.}
		Perplexity vs. epoch of LSTMs trained with mini-batch dropout, $\drop{k}$ for various $k$ (see Algorithm~\ref{alg:drop_k}). Training perplexity is evaluated without dropout. Increasing the number of samples of dropout noise, $k$, reduces the amount of update noise arising from the stochasticity of dropout. Though the training objective does not change with $k$, as $k$ increases, the validation performance degrades. This suggests that the update noise from dropout provides an \textit{implicit} regularization effect. \textbf{Left:} Penn Treebank. \textbf{Right:} WikiText-2. 
	}
	\label{fig:dropout_samples}
\end{figure*}

\begin{algorithm}[t]
	\caption{$\drop{k}$, mini-batch dropout update using $k$ samples of noise.}
	\label{alg:drop_k}
	\begin{algorithmic}
		\STATE {\bfseries Input:} Training examples $\{x_i\}_{i = 1}^{m}$.
		\STATE Sample noise $\noise_{ij}$ for $i \in [m], j \in [k]$.
		\STATE Compute $g = \nabla_W \left(\frac{1}{m} \sum_{i = 1}^{m} \helldrop{k}(\net, x_i, \{\noise_{ij}\}_{j = 1}^k) \right)$.
	 $\triangleright$ Use $g$ for optimization algorithm.
	\end{algorithmic}
\end{algorithm}

\section{Disentangling Explicit and Implicit Regularization in Dropout} \label{sec:disentangle}

We now present an experimental study designed to disentangle the two
regularization effects, which confirms the existence of implicit
regularization in dropout. Furthermore, this approach allows us to study each effect in
isolation.  

Let $\nodrop(F) \triangleq \E_x [\loss(\net(x))]$ denote the
population loss without dropout. This is the test criterion regardless
of whether dropout is used during training. However, dropout modifies
the \textit{expected training} objective even conditioned on a fixed
example $x$. The training loss of an example $x$ averaged over the
dropout noise $\eta$ (defined in Section~\ref{sec:prelim}) is 
\begin{align}
\elldrop(F, x) \triangleq \E_\noise [\loss(\net(x, \noise))] \ne \loss(\net(x)) \label{eq:elldrop}
\end{align}
Consequently, the expected training objective also differs from $\nodrop(F)$:
$$\expdrop(\net) \triangleq \E_{x}[\elldrop(\net, x)]  \neq \nodrop(F)$$ 

\begin{figure}
	\centering
	\includegraphics[width=0.23\textwidth]{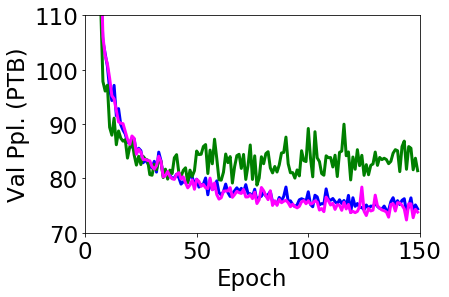}
	\includegraphics[width=0.23\textwidth]{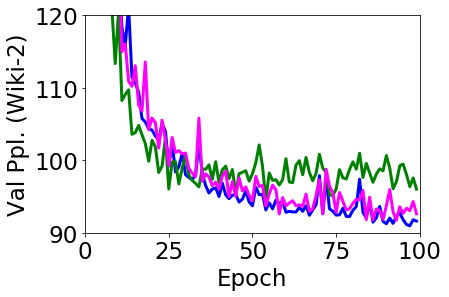}\\
	\centering\includegraphics[width=0.5\textwidth]{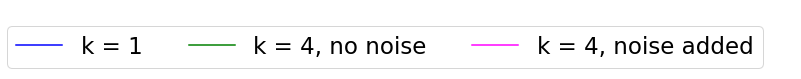}
	\caption{\textbf{Confirming implicit regularization effect.} Validation perplexity vs. epoch of LSTMs trained with $\drop{1}$, $\drop{4}$, and $\drop{4}$ with noise added via the procedure in Section~\ref{sec:drop_k_noise}. By adding noise to $\drop{4}$, we recover the performance of $\drop{1}$. Thus, the noise we add has an implicit regularization effect. \textbf{Left:} Penn Treebank. \textbf{Right:} WikiText-2.}
	\label{fig:dropout_samples_corrected}
\end{figure}

It is natural to define the  explicit
regularizer as the difference between the expected training
objective (averaged over both $x$ and $\eta$)  and the standard objective,  i.e.
$$\regdrop(F)   = \expdrop(F) - \nodrop(F).$$
Due to the fact that in practice, we only have access to a finite training
sample (and not the population), it is helpful to define explicit regularizer on a single
example as follows: 
$$\regdrop(F, x)  \triangleq \elldrop(F, x) - \ell(F(x)).$$

Previous work studies the analytical forms or properties of these
regularizers for various models. However, in practice, $\elldrop(\net,
x)$ (and its gradient $\nabla_W \elldrop(\net, x)$) are only
\textit{stochastically} estimated by sampling a single $\eta$ and
computing $\loss(\net(x, \noise))$ (and $\nabla_W \loss(\net(x,
\noise))$ respectively).  For example, SGD (with mini-batch size $1$),
performs the update:
\[
W \leftarrow W - \gamma \nabla_W \loss(\net(x, \noise)) \, ,
\]
where $\gamma$ is the stepsize, $x$ is a randomly sampled datapoint, and $\noise$ is a randomly
sampled dropout noise variable.

We demonstrate that the stochasticity from sampling $\noise$ provides an implicit regularization effect which contributes to the test-time effectiveness of dropout.\footnote{There is also an implicit regularization effect from sampling the SGD minibatch. As the minibatch size is fixed in our experiments, this is distinct from the implicit regularization effect of dropout demonstrated in Figure~\ref{fig:dropout_samples}, and studying it is orthogonal to our work.} Our strategy for disentangling the regularization effects is simple: we remove noise from the gradient estimate by optimizing a more accurate estimate of $\elldrop(F, x)$ than $\loss(\net(x, \noise))$.
Formally, we can perform ``mini-batch'' dropout by averaging the loss over $k$ samples of the noise $\{\noise_i\}_{i = 1}^k$:
\begin{align} \label{eq:drop_sample_k}
\helldrop{k}(\net, x, \{\noise_{i}\}_{i = 1}^k) \triangleq \frac{1}{k} \sum_{i = 1}^k \loss(\net(x, \eta_i))
\end{align}
For training, we now use the stochastic gradient $\nabla_W \helldrop{k}$, reducing the gradient covariance by a factor of $k$. We refer to the mini-batched dropout update by $\drop{k}$ as shorthand and formally describe it in Algorithm~\ref{alg:drop_k}. 

If there were no implicit regularization from the stochasticity of dropout, then we would expect $\drop{k}$ to have similar test performance to $\drop{1}$, which is equivalent to standard dropout. 
In Figure~\ref{fig:dropout_samples}, we plot the validation accuracy vs. training steps for models trained using $\drop{k}$ for various values of $k$. Figure~\ref{fig:dropout_samples} shows that, perhaps surprisingly, performance degrades quite sharply for larger choices of $k$. However, the explicit regularizer is still helpful, as $\drop{32}$ does not overfit as severely as the model trained without dropout (for Penn Treebank, the best perplexity without dropout is around 120, which is outside the bounds of the graph). 
$\drop{k}$ and $\drop{1}$ optimize the same expected objective, so the change in algorithm must be the cause of these performance discrepancies. 

\subsection{Injecting Dropout Noise Fixes $\drop{k}$} \label{sec:drop_k_noise}
Our proposed explanation for Figure~\ref{fig:dropout_samples} is that the gradient noise induced by dropout provides an implicit regularization effect. We verify this constructively by adding noise to the $\drop{k}$ updates in order to recover the performance of standard dropout. Let $\impdrop$ denote the fluctuation of the stochastic dropout gradient around its mean:
\begin{align} 
\impdrop(\net, x, \noise) \triangleq \nabla_W \loss(\net(x, \noise)) - \nabla_W \elldrop(\net, x)
\end{align}
Note that $\impdrop$ is exactly the gradient noise in standard dropout. Furthermore, we have $\cov(\nabla_W \helldrop{k}) = \frac{1}{k} \cov(\impdrop)$. To correct the covariance of the $\drop{k}$ gradient, we will add mean-zero noise with covariance $\left(1 -\frac{1}{k}\right) \cov(\impdrop)$. Let $\eta_1$ and $\eta_2$ be two independent draws of the dropout noise. Define $\wimpdrop$ as: 
\begin{align} \wimpdrop(\net, x, \noise_1, \noise_2) \triangleq  \nabla_W \loss(\net(x, \noise_{1})) - \nabla_W \loss(\net(x, \noise_{2}))\nonumber
\end{align} 
Note that $\cov(\wimpdrop) =  2\cov(\impdrop)$. Thus, by adding the term $\sqrt{\frac{1}{2}(1 - \frac{1}{k})} \wimpdrop$ to the $\drop{k}$ gradient, we obtain a gradient estimate with the same covariance as $\impdrop$. 

In Figure~\ref{fig:dropout_samples_corrected}, we verify that this correction procedure recovers the test performance of $\drop{1}$. Thus, we have constructed a (complicated) implicit regularizer which explains the discrepancy between $\drop{k}$ and $\drop{1}$. In Section~\ref{sec:simple_implicit}, we will explore its simplifications.

\section{Characterizing the Dropout Regularizers}
\label{sec:analytic}
Having disentangled the explicit and implicit regularization effects of dropout, we will now study them separately. 
In this section, we adapt the analysis tools of~\citep{wager2013dropout} to study both regularization effects for neural networks. 
We derive analytic simplifications for both regularizers in terms of the model and loss derivatives. At a high level, our derivations show that dropout regularizes the data-dependent stability of the model and loss on the training examples. 
This demonstrates a key difference between dropout and $\ell_2$ regularization, which is agnostic to the data. 

In Section~\ref{sec:simpleexplicit}, we present and derive our explicit regularizer. In Section~\ref{sec:simple_implicit}, we derive an update noise distribution which captures the implicit regularization effect in dropout. In Section~\ref{sec:theory}, we prove a generalization bound for the cross-entropy loss which further justifies our stability-based regularizers. In Section~\ref{sec:experiments}, we empirically demonstrate that our derivations accurately capture the regularization effects in dropout -- we can match the performance of dropout for language modeling tasks by using only our regularizers.

 For simplicity, we focus on node dropout~\citep{hinton2012improving,srivastava2014dropout}, though our analysis applies to variants such as DropConnect as well~\citep{wan2013regularization}.

\subsection{Characterizing the Explicit Regularizer} 
\label{sec:simpleexplicit}

\noindent{\bf Single-layer Dropout.} For simplicity, we start by considering node dropout applied to a single layer $i$ of the network. For the rest of the paper, we use $\hidden{i}$ to denote the $i$-th hidden layer of the network and let $\neti{i}$ denote the composition of the layers after $\hidden{i}$, that is, the function that takes in $\hidden{i}$ as input, and outputs the model prediction. (Thus, $F_i(h_i) = F(x)$). 

We rewrite the loss after applying dropout on $h_i$ by
$\loss(\net(x, \noise)) = \loss (\neti{i}(\hidden{i}(x) + \perturb)) $,
where $\perturb \triangleq \mask{i} \odot \hidden{i}(x)$ is the perturbation to the $i$-th layer. We can apply Taylor expansion to analyze the effect of this perturbation.\footnote{Taylor expansion typically requires a small level of perturbation, which may not hold if the dropout probability is large. In Section~\ref{sec:taylor_exp:app} , we argue that performing Taylor expansion around the \textit{next} layer could remedy this issue. As it does not change the final result, we omit this analysis here.} We apply Taylor expansion around $\perturb = \zerovec$:
\begin{align}
&\loss(\net(x, \noise)) - \loss(\net(x)) \approx \nonumber \\ & \deriv_{\var{\hidden{i}}} (\ell \circ \neti{i})[\hidden{i}] \perturb +  \frac{\perturb^\top (\deriv^2_{\var{\hidden{i}}} (\ell \circ \neti{i})[\hidden{i}]) \perturb}{2} \label{eq:taylor_exp}
\end{align}
This provides an approximate version of the dropout explicit regularizer $\regdrop$:
\begin{align} 
\regdrop(\net, x) &= \E_\noise [\loss(\net(x, \noise))]- \loss(\net(x)) \nonumber \\ &\approx  \frac{1}{2}\E_{\perturb} [\perturb^\top (\deriv^2_{\var{\hidden{i}}} (\ell \circ \neti{i})[\hidden{i}(x)]) \perturb] \nonumber
\end{align}
Here the expectation over the linear term in~\eqref{eq:taylor_exp} vanished because $\delta = \mask{i} \odot \hidden{i}(x)$ is a mean-zero vector. 
Next we take expectation over $\delta$: 
\begin{align}
& \E_{\perturb} [\perturb^\top \deriv^2_{\var{\hidden{i}}} (\ell \circ \neti{i})[\hidden{i}] \perturb] \nonumber \\&= \matip{\deriv^2_{\var{\hidden{i}}} (\ell \circ \neti{i})[\hidden{i}]} {\frac{\E[\perturb \perturb^\top]}{2}}\nonumber \\
& = \frac{\droppr}{2(\droppr - 1)}\matip{\deriv^2_{\var{\hidden{i}}} (\ell \circ \neti{i})[\hidden{i}]} {\diag(\hidden{i}(x)^{\odot 2})} \label{eqn:2}
\end{align}
where $\droppr$ is the dropout probability and we used the fact that $\E[\perturb \perturb^\top] = \frac{\droppr}{\droppr - 1}\diag(\hidden{i}(x)^{\odot 2})$ because $\perturb = \mask{i}\odot \hidden{i}(x)$ and the coordinates of $\mask{i}$ are independent.

We obtain an analytical approximation for the explicit regularizer $\regdrop(F,x)$ by combining the equations above. Next we will rewrite the RHS of~\eqref{eqn:2} in a more interpretable form by further dropping some terms. 

For notational simplicity, let $\jac{i}(x)$ be the Jacobians of the network output with respect to the hidden layers, and $\lhess(x)$ be the Hessian of the loss with respect to the network outputs:
\begin{align}
\jac{i}(x) \triangleq \deriv_{\var{\hidden{i}}} \neti{i} [\hidden{i}(x)] \text{ and } 
\lhess(x)  \triangleq \deriv^2_{\var{\net}} \loss [\net(x)]\nonumber
\end{align}
We claim that $\regdrop(F, x)$ (or the RHS of~\eqref{eqn:2}) can be replaced by the following analytical form
\begin{align}
\regexp{}^i(\net, x) \triangleq \matip{\jac{i}(x)^\top \lhess(x) \jac{i}(x)}{\diag(\hidden{i}(x)^{\odot 2})}\label{eqn:3}
\end{align}
Readers may find this reminiscent of the decomposition of the Hessian of neural nets loss~\cite{lecun2012efficient,sagun2017empirical}. Indeed, we decompose $\deriv^2_{\var{\hidden{i}}} (\ell \circ \neti{i})[\hidden{i}]$ into two terms, and drop the non-PSD term that depends on the Hessian of the model (which is less suitable as a regularizer and has been argued to be less important empirically~\cite{sagun2017empirical}). A full derivation and justification is given in Section~\ref{sec:analytic_hessian:app}. 

\noindent{\bf Multi-layer Dropout.} To deal with dropout on all layers, we simply take Taylor expansion with all the $\delta$'s at every layer. Cross terms cancel because the masks of different layers are independent, and the resulting regularizer is a sum of equation~\eqref{eqn:3} over $i$, giving our analytical explicit regularizer:
\begin{align} 
&\regexp{}(\net, x) \triangleq \nonumber \\ &\sum_i \matip{\jac{i}(x)^\top \lhess(x) \jac{i}(x)}{\diag(\hidden{i}(x)^{\odot 2})} \label{eq:reg_ours}
\end{align}
\noindent{\bf Interpretation. }Our regularizer ensures that the Jacobians and hidden layers of the model output are small when measured in the norm of $\lhess$.  We note that for cross entropy loss, $\lhess(x) = \diag(\probs) - \probs \probs^\top\succcurlyeq 0$, where $\probs$ is the probability vector predicted by the model encoding the distribution over output class labels. As the diagonal entries take the form $\probs_k(1 - \probs_k)$, this Hessian places stronger emphasis on output classes which the model believes are plausible but not certain. 
Our experiments in Section~\ref{sec:hessian} demonstrate that this particular weighting is an important factor for the success of dropout -- alternative ways to weight the stability of each output class in the regularizer do not perform as well. 

\citet{keskar2016large,yao2018hessian,jastrzebski2018relation} study the relationship between SGD batch size and notions of ``flatness'' of local minima via metrics related to the magnitudes of the eigenvalues of the second derivative of the loss with respect to the model parameters. They observe that flatter local minima tend to correlate with better generalization. Our regularizer encourages a notion of flatness that depends on the second derivative of the loss with respect to the \textit{hidden layers} (see~\eqref{eqn:2} in our derivation). These quantities are closely related. For example, consider weight matrix $Z$ parametrizing some linear transformation layer, such that $F(x) = F'(Z h(x))$, where $h$, $\net'$ denote the compositions of the layers before and after the application of $Z$. Then defining $h'(x) = Zh(x)$, we have 
$$
	\deriv_{\var{Z}} (\loss \circ \net)(x)[Z] = h(x) \deriv_{\var{h'}} (\loss \circ \net')[Zh(x)]
$$
Thus, the loss derivatives with respect to model parameters can be expressed in terms of those with respect to the hidden layers.

We emphasize that one benefit of $\regexp{}(\net, x)$ is that it provides an interpretable and detailed characterization of the explicit regularization effect of dropout. We hope this can help provide theoreticians and practictioners alike with precise intuitions on why dropout works, and, more broadly, how to design effective stability regularizers  in practice.
 
\subsection{Characterizing the Implicit Regularization Effect} \label{sec:simple_implicit}
In this section, we derive a gradient noise distribution which can replace the mean-zero gradient noise in dropout, $\impdrop$. 

\textup{\noindent {\bf Single Layer Dropout.}} As before, we start by considering the single-layer case. Instead of directly approximating $\impdrop$, which involves the intractable term $\nabla_W \elldrop(\net, x)$, we aim to approximate the noise $\wimpdrop$ defined in Section~\ref{sec:drop_k_noise} which we showed to be able to replace $\impdrop$.

We apply Taylor expansion to approximate $\wimpdrop$, only keeping the mean-zero linear  terms. Letting $\perturb_1 = \mask{i}^{(1)} \odot \hidden{i}(x)$ and $\perturb_2 = \mask{i}^{(2)} \odot \hidden{i}(x)$ denote two different perturbations to the $i$-th layer, 
we have\footnote{As the subscript has been used to index the layer, we use the superscript to index the different dropout noise samples.}
\begin{align}
&\wimpdrop(\net, x, \noise_i^{(1)}, \noise_i^{(2)}) \nonumber \\
&=  \nabla_W\left(\loss(\neti{i}(\hidden{i} + \perturb^{(1)})) - \loss(\neti{i}(\hidden{i} + \perturb^{(2)})) \right)\nonumber \\
&\approx \nabla_W \left(\ljac{i}(x) (\mask{i}^{(1)} - \mask{i}^{(2)}) \odot \hidden{i}(x)\right) \nonumber
\end{align}
where $\ljac{i}(x)$ denotes the Jacobian of the loss with respect to the hidden layers: 
$
	\ljac{i}(x) \triangleq \deriv_{\var{\hidden{i}}} (\loss \circ \neti{i})[\hidden{i}(x)].
$
Now we can replace the difference $\mask{i}^{(1)} - \mask{i}^{(2)}$ by $\mask{i} \sqrt{2}$, as the covariance is unchanged. 
After adjusting the scaling to match the covariance of $\impdrop$, we obtain the following analytic form for update noise:
\begin{align} \label{eq:imp_ours_i}
\impours{}^i (\net, x, \mask{i}) \triangleq \nabla_{W} \left(\ljac{i}(x) (\mask{i} \odot \hidden{i}(x))\right)
\end{align}
\textup{\noindent {\bf Multi-layer Dropout.}} To handle multi-layer dropout, we Taylor expand over all the layers, obtaining a sum of~\eqref{eq:imp_ours_i} over the layers:\footnote{To make tuning slightly simpler, we compute the noise by sampling the coordinates of $\mask{i}$ uniformly from $\{-1, +1\}$ and scaling by $\sqrt{\frac{\droppr}{\droppr - 1}}$, as this preserves the covariance.} 
{\small 
\begin{align}
	\impours{}(\net, x, \{\mask{i}\})  \triangleq \nabla_{W} \left(\sum_i \ljac{i}(x) (\mask{i} \odot \hidden{i}(x))\right) \label{eq:imp_ours}
\end{align}}To replace the implicit effect of dropout, we add the mean-zero noise $\impours{}$ to the gradients of the objective. 

\noindent{\bf Interpretation:} It is a major open question in deep learning theory to understand the regularization effects of noise~\citep{li2019towards}. For example, it is even unclear why mini-batch noise in SGD empirically helps in general. Prior works~\citep{yaida2018fluctuation,Wei2019HowNA} have (heuristically) suggested that the noise encourages the algorithm to find a solution that minimizes the trace of the covariance of the noise. 
As the covariance of $\impours{}$ is some function of $\{\ljac{i}\}, \{\hidden{i}\}$, and their gradients with respect to $W$, the induced regularizer controls some data-dependent stability of the model. Note the conceptual difference with the explicit regularizer, which multiplies the model Jacobian with the loss Hessian, whereas $\impours{}$ multiplies the model Jacobian with the loss Jacobian. 
 More precise interpretations are left for future work. 

In Section~\ref{sec:experiments}, we demonstrate that a combination of our explicit and implicit regularizer can successfully replace dropout. The general update rule which applies these regularizers in lieu of dropout is described in Algorithm~\ref{alg:update_rule}. 

\subsection{Theoretical Support for Stability-based Regularization}\label{sec:theory}
Recent works~\citep{arora2018stronger,nagarajan2019deterministic,wei2019data, wei2019improved} support our stability-based regularization by bounding generalization of the model in terms of its Jacobian norms on the training data. 
These bounds align with the Jacobian terms in the regularization~\eqref{eq:reg_ours}.  
However, they miss a crucial aspect of the regularizers derived in Section~\ref{sec:simpleexplicit} as they only consider derivatives of the model output, ignoring the \textit{loss} derivatives (the $\lhess(x)$ term in equation~\eqref{eq:reg_ours}). Though this is a subtle distinction, in Section~\ref{sec:hessian} we demonstrate that the loss derivatives are \textit{necessary} on language modeling tasks. 

In this section, we prove a new generalization bound for cross entropy loss on linear models. Our bound helps further justify the forms of our regularizers in~\eqref{eq:reg_ours} and~\eqref{eq:imp_ours}, as every term in our bound is scaled by a derivative of the loss.

Let $\ce_y$ denote the standard cross-entropy loss on $\dout$ classes with true label $y$. For linear models parameterized by weight matrix $W$, we compute the loss by
$$\ce_y(Wx) \triangleq -\log \frac{\exp((Wx)_y)}{\sum_{y'} \exp((Wx)_{y'})}$$
Let $\widebar{\ce} = \min\{\ub, \ce\}$ denote the truncation of the cross-entropy loss to some fixed bound $\ub > 0$. For matrix $M$, define the following $\normto{\cdot}$-norm of $M$: $\normto{M} \triangleq \sum_j \sqrt{\sum_i (M_{ij}^2)}$. Let $\dist$ denote the population data distribution and $\edist$ the distribution over training samples. We have the following generalization bound:\begin{theorem}\label{thm:cross_ent_gen}
	With probability $1 - \delta$ over the training examples, for all weight matrices $W$ satisfying the norm bound $\normto{W^\top } \le A$, the following holds: 
	\begin{align}
		&\E_{\dist}[\widebar{\ce_y}(Wx)] - 1.01 \E_{\edist} [\widebar{\ce_y}(Wx)] \lesssim 
		\frac{(A \gradm(W))^{\frac{2}{3}} (\theta B)^{\frac{1}{3}}}{n^{\frac{1}{3}}}  \nonumber\\ & +\frac{A \sqrt{\ub \hessm(W) \theta}}{\sqrt{n}}  + \frac{\ub A^2 \theta}{n (\log^2\left(\frac{\ub A^2 \theta}{\hessm(W) n} \right) + 1)} + \zeta \nonumber
	\end{align}Here $\gradm(W), \hessm(W)$ measure the Jacobians and Hessians of the loss and are defined by
				\begin{align}
				\gradm(W) &\triangleq \E_{\edist}[\gnorm{\deriv \ce_y[W x]}] \nonumber \\ \hessm(W) &\triangleq \E_{\edist}[\tr(\deriv^2 \ce_y[Wx])]
				\end{align}
			Additionally, we define $\theta \triangleq \log^3(n\dout) \max_i \gnorm{x_i}^2$ and $\zeta \triangleq \frac{B(\log(1/\delta) + \log \log n)}{n}$ is a low order term.
\end{theorem}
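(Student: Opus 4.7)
The plan is to combine a Bernstein-type fast-rate concentration inequality with a Maurey-style covering of the weight class, and then balance the discretization error against the covering complexity. The $1.01$ factor signals a local Rademacher / fast-rate flavor, so the goal is to exploit the self-bounding property $\mathrm{Var}(\widebar{\ce_y}(Wx)) \le B\cdot\E[\widebar{\ce_y}(Wx)]$ (which holds since $\widebar{\ce_y}\in[0,B]$) to get a Bernstein bound of the form $\E_P[\widebar{\ce_y}] \le (1+o(1))\E_{P_n}[\widebar{\ce_y}] + O(B \log|\mathcal{W}_\epsilon|/n) + \zeta$ for each fixed $W$ in a cover $\mathcal{W}_\epsilon$ of the class $\{W : \normto{W^\top} \le A\}$.

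First I would build the cover: since $\normto{W^\top}$ is a $(2,1)$-mixed norm, Maurey sparsification applied to the rows of $W$ yields an $\epsilon$-net $\mathcal{W}_\epsilon$ with $\log|\mathcal{W}_\epsilon| \lesssim A^2 \max_i \gnorm{x_i}^2 \log(n\dout)/\epsilon^2 \lesssim A^2 \theta /\epsilon^2$, such that every $W$ is within data-weighted distance $\epsilon$ of some $\hat W \in \mathcal{W}_\epsilon$ in the sense that $\max_i \gnorm{(W-\hat W)x_i} \le \epsilon$. Next, I would Taylor-expand $\widebar{\ce_y}$ in its argument: for any $W$ near $\hat W$,
\begin{align*}
\widebar{\ce_y}(Wx) - \widebar{\ce_y}(\hat W x) \le \deriv\widebar{\ce_y}[\hat Wx](W-\hat W)x + \tfrac12 \tr\bigl(\deriv^2\widebar{\ce_y}[\hat Wx]\bigr)\cdot \gnorm{(W-\hat W)x}^2,
\end{align*}
using that the loss Hessian is PSD with trace controlling the quadratic form. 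Taking empirical (or population) expectations, the discretization error becomes at most $\epsilon\,\gradm(\hat W) + \epsilon^2 \hessm(\hat W)/2$.

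Then I would union-bound the Bernstein inequality over $\mathcal{W}_\epsilon$, concluding that uniformly over the class,
\begin{align*}
\E_{\dist}[\widebar{\ce_y}(Wx)] \le 1.01\,\E_{\edist}[\widebar{\ce_y}(Wx)] + C\frac{BA^2\theta}{n\epsilon^2} + \epsilon\gradm(W) + \epsilon^2\hessm(W) + \zeta.
\end{align*}
The three error terms in the theorem statement then arise by choosing $\epsilon$ to optimize different regimes: balancing $BA^2\theta/(n\epsilon^2)$ against $\epsilon\gradm(W)$ gives $\epsilon^3 \sim A^2B\theta/(n\gradm)$ and yields the cube-root term $(A\gradm)^{2/3}(\theta B)^{1/3}/n^{1/3}$; balancing instead against $\epsilon^2\hessm(W)$ yields the $\sqrt{1/n}$ term $A\sqrt{B\hessm\theta/n}$; and the third, cover-dominated term $BA^2\theta/(n\log^2(\cdots))$ is what remains as the floor when $\epsilon$ is driven all the way down (with the $\log^2$ in the denominator coming from a standard chaining/peeling refinement over scales of $\gradm(W)$ and $\hessm(W)$, which cannot be known in advance and must be union-bounded geometrically).

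The main obstacles I expect are two. First, turning the per-$W$ Bernstein bound into a uniform statement where the empirical quantities $\gradm(W)$ and $\hessm(W)$ depend on $W$ requires peeling: one discretizes possible values of $\gradm(W), \hessm(W)$ into geometric shells and applies a union bound, which is where the $\log^2$ factor enters. Second, verifying that the second-order Taylor remainder really is controlled by $\hessm$ (rather than a higher-order term blowing up) needs the self-concordance-like structure of cross-entropy, namely that its third derivative is bounded in terms of its Hessian; this is standard for softmax but must be checked carefully so that $\epsilon$ can be chosen as large as the optimization above requires. Once these two pieces are in place, assembling the three balanced terms plus the low-order $\zeta$ is routine.
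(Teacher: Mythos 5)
Your high-level ingredients (a Maurey/Zhang-type cover of the $\normto{\cdot}$-ball with $\log|\cW_\epsilon|\lesssim A^2\theta/\epsilon^2$, derivative-based control of the discretization error, a fast-rate $1.01$ factor, and a scale optimization) all appear in the paper, but there are two genuine gaps. First, your discretization step must also be applied under the \emph{population} expectation: to compare $\E_{\dist}[\widebar{\ce_y}(Wx)]$ with $\E_{\dist}[\widebar{\ce_y}(\hat W x)]$ you incur a population-side error involving loss derivatives at unseen $x$, whereas the theorem's $\gradm(W),\hessm(W)$ are purely empirical quantities. Your sketch never explains how that population error is converted into empirical derivative terms; a naive attempt would require uniform convergence of the derivative functionals themselves, which is a separate (and not obviously easier) problem. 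The paper's way around this is the perturbed loss $\ploss_\radius(W,x,y)=\max_{\delta} \pnlty_\radius(\gnorm{\delta})\,\widebar{\loss}(Wx+\delta\gnorm{x},y)$: it pointwise upper-bounds $\widebar{\loss}$, its generalization is controlled uniformly over the $\normto{\cdot}$-ball by the covering plus local Rademacher argument (Lemma~\ref{lem:ploss_gen}), and it is then upper-bounded \emph{only on training points} by $\loss$ plus Jacobian/Hessian terms (Lemma~\ref{lem:ell_growth_bound}), so the data-dependent quantities never need to concentrate. Some device of this kind is missing from your plan.

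Second, your account of the third term is wrong. You claim $\ub A^2\theta/(n\log^2(\cdot))$ is ``the floor when $\epsilon$ is driven all the way down,'' with the $\log^2$ coming from peeling over scales of $\gradm,\hessm$; but as $\epsilon\to 0$ the cover term $\ub A^2\theta/(n\epsilon^2)$ blows up, and peeling only adds an additive $O(B\log(\#\text{shells})/n)$ to the bound, never a $\log^2$ in a denominator. In the paper this term comes from the opposite regime: the self-concordance-type condition~\eqref{eq:exp_tail_cond} (Lemma~\ref{lem:cross_entropy_exp_tailed}) replaces your plain quadratic remainder $\epsilon^2\hessm/2$ by $\hessm\bigl(\exp(\tau\epsilon\kappa)-\tau\epsilon\kappa-1\bigr)/\tau^2$, valid for \emph{large} perturbations, and minimizing $a_1(\exp\beta-\beta-1)+a_2/\beta^2$ (Claim~\ref{claim:exp_opt_bound}) lets one take the scale as large as $\beta\sim\log(a_2/a_1)$, which is exactly what produces $\ub A^2\theta/\bigl(n(\log^2(\ub A^2\theta/(\hessm n))+1)\bigr)$. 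Relatedly, no peeling over $\gradm,\hessm$ is needed at all: the paper only union-bounds over a geometric grid of the scale parameter (a set of size $O(\log n)$, contributing the $\log\log n$ inside $\zeta$), because the $W$-dependent derivative quantities appear only on the empirical side. With the second-order expansion you propose, the second ($\sqrt{1/n}$) term is obtainable, but the third term cannot be recovered, and the truncation-versus-Taylor interaction for $\widebar{\ce_y}$ at large perturbations is exactly the point your ``to be checked carefully'' remark glosses over.
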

We provide the proof in Section~\ref{sec:theory:app}. Theorem~
\ref{thm:cross_ent_gen} helps justify regularizing the loss derivatives, showing that even if the weights are large, one can guarantee good generalization by ensuring that the loss Hessians and Jacobians are small. 
Note that the third term in the bound can be independent of the weight matrix norms if the loss Hessian is sufficiently small: when the data and weights are well-aligned, the third term can be as small as $O \left(\frac{\ub \log^3(nc)}{n}\right)$ (see Section~\ref{sec:theory:app_discussion}). In contrast, prior bounds for this setting contain a term scaling with some power of $\|W\|/\sqrt{n}$~\citep{kakade2009complexity,srebro2010smoothness}. This scaling suggests using $\ell_2$ regularization, which does not work for language modeling (see Table~\ref{tab:PTB_results} and~\ref{tab:wiki2_results} in Section~\ref{sec:experimental_result_app}). 

\section{Experiments} \label{sec:experiments}
In this section, we empirically confirm that our derivations in Section~\ref{sec:analytic} provide accurate characterizations of dropout. 
Our focus is on language modeling tasks using the LSTM and Transformer architectures. 

\begin{algorithm}[t]
	\caption{The general form of update for combinations of our explicit and implicit regularizers.}
	\label{alg:update_rule}
	\begin{algorithmic}
		\STATE {\bfseries Input:} minibatch $\{x_i\}_{i = 1}^{m}$, explicit regularizer $R$, gradient noise $\xi$, regularization strengths $\lambda_1, \lambda_2$.
		\STATE Compute $g = \frac{1}{m}\sum_{i = 1}^m  \nabla_W (\loss(\net(x_i)) + \lambda_1 R(\net, x_i)) $.
		\STATE Update $g = g + \frac{1}{m} \sum_{i = 1}^m \lambda_2 \xi(\net, x_i)$.\\
		$\triangleright$ Use $g$ for optimization algorithm.
	\end{algorithmic}
\end{algorithm}

\subsection{Our Derived Regularizers can Replace Dropout}\label{sec:replace_dropout}
In this section, we show that the regularizers derived in Section~\ref{sec:analytic} can replace dropout for LSTMs on language modeling tasks. We work with Penn Treebank~\citep{marcus1994penn}, a corpus of 887,521 tokens and Wikitext-2~\citep{merity2016pointer}, a corpus of 2,088,628 tokens. In Section~\ref{sec:add_exp}, we study whether our findings can also scale to larger datasets and architectures such as Transformer-XL~\citep{dai2019transformer}. 

For the experiments in this section, we base our model and code on~\cite{merity2017regularizing,merity2018analysis}. For the dropout-trained models, we use node dropout on the output, hidden, and embedding layers as well as DropConnect~\citep{wan2013regularization} on the weight matricess. We fix the dropout probability to $\droppr = 0.4$ for these experiments. To compute the update gradients for our regularizers, we follow the general rule described in Algorithm~\ref{alg:update_rule}. We specify additional hyperparameters in Section~\ref{sec:experimental_result_app}.

We study three settings described in detail below: our explicit regularizer $\regexp{}$ only, adding our noise $\impours{}$ to $\drop{k}$ updates, and combining our explicit and implicit regularizers. Tables~\ref{tab:PTB_results} and Tables~\ref{tab:wiki2_results} in Section~\ref{sec:experimental_result_app} summarize the experimental results on our regularizers for the Penn Treebank and Wikitext-2 datasets. We obtain our results \textit{without tuning}, as we use the regularization coefficient suggested in Section~\ref{sec:analytic} to match the dropout strength. The Jacobian optimization required for the analytical regularizers results in around 3x runtime slowdown compared to dropout, though we note that the analytical regularizers appear to optimize in fewer iterations (see Figure~\ref{fig:exp_imp_reg}).

\begin{figure}
	\centering
		\includegraphics[width=0.23\textwidth]{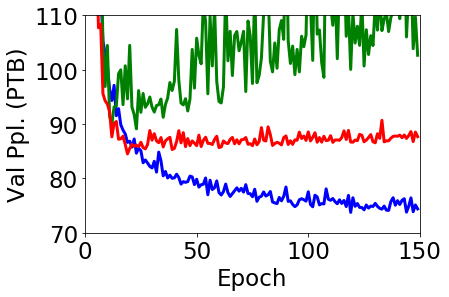}
	\includegraphics[width=0.23\textwidth]{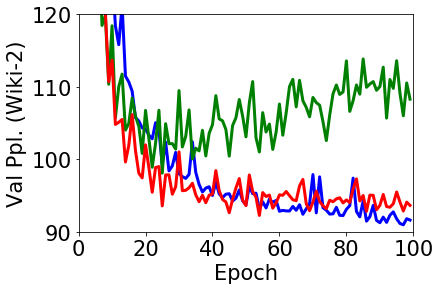}\\
	\centering \includegraphics[width=0.4\textwidth]{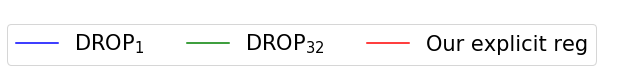}
			\caption{\textbf{Our explicit regularizer v.s. dropout.} Validation perplexity vs. epoch of LSTMs trained with $\drop{1}$, $\drop{32}$ (see Algorithm~\ref{alg:drop_k}), and our explicit regularizer only. Our explicit regularizer~\eqref{eq:reg_ours} outperforms $\drop{32}$ but does not match $\drop{1}$ since it is missing the implicit regularization benefit of dropout. \textbf{Left:} Penn Treebank. \textbf{Right:} WikiText-2.}
	\label{fig:exp_reg_only}
\end{figure}
\textup{\noindent {\bf Replacing Dropout Explicit Regularization.}} In Figure~\ref{fig:exp_reg_only}, we compare our explicit regularizer~\eqref{eq:reg_ours} to mini-batch dropout, $\drop{k}$, with $k = 1, 32$. 
For $k = 32$, the implicit regularization effect of dropout is heavily reduced, bringing the training procedure closer to training on $\elldrop$ exactly. Our explicit regularizer outperforms $\drop{32}$, confirming that it matches the explicit regularization effect of dropout. It does not match the performance of $\drop{1}$ because it is missing the implicit regularization effect.

\begin{figure}
		\centering
		\includegraphics[width=0.23\textwidth]{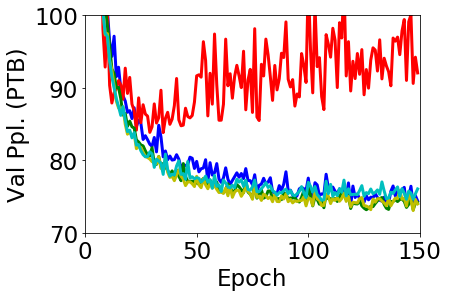}
	\includegraphics[width=0.23\textwidth]{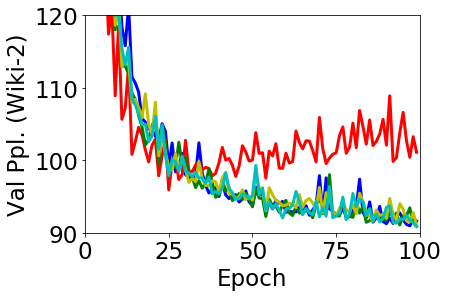}\\
	\centering \includegraphics[width=0.45\textwidth]{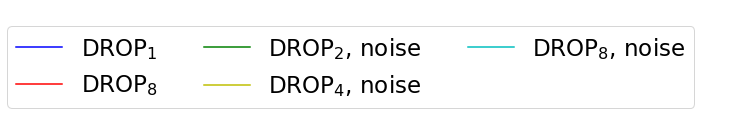}
			\caption{\textbf{Our implicit regularizer v.s. dropout.} Validation perplexity vs. epoch of LSTMs trained using mini-batch dropout, $\drop{k}$, with injection of noise $\impours{}$ (Algorithm~\ref{alg:drop_k_our_noise}). For reference, we also plot $\drop{1}$ and $\drop{8}$ with no noise  (Algorithm~\ref{alg:drop_k}). $\drop{k}$ with noise injection matches $\drop{1}$ for $k = 2, 4, 8$, affirming that our noise distribution $\impours{}$ captures the implicit regularization effect of dropout noise. \textbf{Left:} Penn Treebank. \textbf{Right:} WikiText-2.}
	\label{fig:imp_reg_only}
\end{figure}

\textup{\noindent {\bf Replacing Dropout Implicit Regularization.}} We demonstrate that the our update noise derived in~\eqref{eq:imp_ours} can effectively replicate the implicit regularization effect of dropout. We inject appropriately scaled $\impours{}$ noise into the $\drop{k}$ training procedure. As the covariance of  $\nabla_W \helldrop{k}$ scales with $\frac{1}{k}$, we scale $\impours{}$ by a factor $\sqrt{1 - \frac{1}{k}}$. Thus, if $\impours{}$ and $\impdrop$ had the same covariance, the covariance of the updates would remain constant across $k$. Algorithm~\ref{alg:drop_k_our_noise} in Section~\ref{sec:implementation_app} formally describes this procedure. In Figure~\ref{fig:imp_reg_only}, we demonstrate that this procedure can closely track the performance of $\drop{1}$ for various values of $k$, affirming that $\impours{}$ captures essential properties of $\impdrop$.

\textup{\noindent {\bf Completely Replacing Dropout.}} We demonstrate that the combination of our regularizers can completely replace dropout. We apply algorithm~\ref{alg:update_rule}, setting $R = \regexp{}$ and $\xi = \impours{}$. In Figure~\ref{fig:exp_imp_reg}, we plot the validation perplexity vs. time of a model trained with our regularization vs. $\drop{1}$. Figure~\ref{fig:exp_imp_reg} demonstrates that our regularization is enough to replace dropout, confirming the validity of our derivations. We note that our regularizer appears to require fewer iterations to decrease the validation perplexity. This raises the exciting possibility of designing more efficient regularizers than dropout, which we leave for future work.  
\begin{figure}
	\centering
		\includegraphics[width=0.23\textwidth]{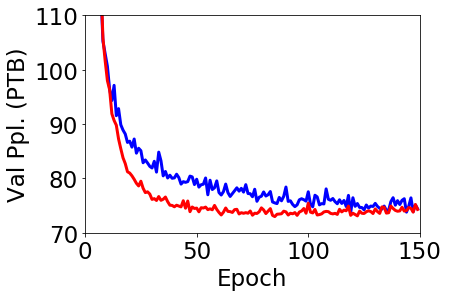}
	\includegraphics[width=0.23\textwidth]{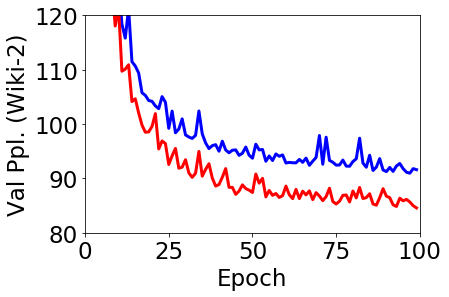}\\
	\centering \includegraphics[width=0.4\textwidth]{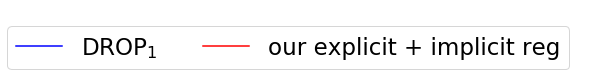}
			\caption{\textbf{Our combined regularizer v.s. dropout.} Validation perplexity vs. epoch of LSTMs trained with our regularizers vs standard dropout. Our regularizers can match dropout and appear to improve the validation perplexity faster. \textbf{Left:} Penn Treebank. \textbf{Right:} WikiText-2.}
	\label{fig:exp_imp_reg}
\end{figure}

\subsection{Regularizing the Loss Hessian is Necessary} \label{sec:hessian}
We argue that simply regularizing the stability of the model outputs is not sufficient. As argued in Section~\ref{sec:simpleexplicit}, our derivations show that dropout enforces stronger stability for output coordinates where the model assigns non-trivial probability mass but is not extremely confident. To demonstrate this is helpful, we experiment with replacing $\lhess$ in our explicit regularizer (see~\eqref{eq:reg_ours}) with two alternative quantities.

\textup{\noindent{\bf Identity Cannot Replace Loss Hessian.}}  For the first variant, we use an identity matrix instead of the loss Hessian (so the regularizer weights each output coordinate equally). We provide implementation details in Section~\ref{sec:implementation_app}. On Penn Treebank, this was ineffective: after thoroughly tuning the regularization strength, the best validation accuracy we obtained was 108.76, which is comparable to the performance of $\ell_2$ regularization and much worse than dropout. 

\textup{\noindent{\bf Using the Loss Jacobian Instead of Hessian.}} For cross entropy loss, in the case where the model predicts the true label very confidently, the loss Hessian $\lhess(x)$ approaches the outer product of the loss Jacobian with itself: $\lhess(x) \approx  \deriv_{\var{F}} \ce [\net(x)]^\top \deriv_{\var{F}} \ce [\net(x)]$ (see Section~\ref{sec:exp_reg_imp}). Substituting this approximation into our explicit regularizer gives the following alternative regularizer:
\begin{align} \label{eq:reglossj}
\reglossj(F, x) \triangleq \sum_i \ljac{i} \diag(\hidden{i}(x)^{\odot 2}) \ljac{i}^\top
\end{align}
On Penn Treebank, we find that this regularizer is much more effective than $\ell_2$ regularization but cannot match dropout. We test whether $\reglossj{}$ performs well as $\regexp{}$ with or without implicit regularization. In both cases, we tune the explicit regularization strength. Table~\ref{tab:loss_jac} summarizes the results compared to the Hessian-based regularizer. $\reglossj{}$ on its own significantly outperforms $\ell_2$ regularization and can match $\regexp{}$ after tuning. However, with update noise it does not match dropout or $\regexp{}$ (even after tuning). \begin{table}
	\centering
	\caption{Regularization effect of $\reglossj{}$ (see~\eqref{eq:reglossj}) on Penn Treebank with and without implicit regularization. $\reglossj{}$ can significantly outperform $\ell_2$ regularization but does not match dropout even with implicit regularization, whereas $\regexp{}$ can.}	\label{tab:loss_jac}
	\begin{tabular}{c c c} 
				\cline{1-3}
		&Training Method & Best Val. Ppl.\\
		\cline{1-3}
		\multirow{3}{*}{No implicit} & $\ell_2$ reg (tuned)& 112.04 \\
		&$\reglossj{}$ (tuned) & 84.06\\
		&$\regexp{}$~\eqref{eq:reg_ours}   & 84.52\\
		\cline{2-3} \addlinespace[0.1cm]
		\multirow{3}{*}{With implicit} &$\drop{1}$ &73.76\\
		&$\reglossj{}$ (tuned) and $\impours{}$& 79.54\\
		&$\regexp{}$ and $\impours{}$~\eqref{eq:imp_ours_i} & 72.99
	\end{tabular}
\end{table}

\subsection{Additional Settings} \label{sec:add_exp}
We test how well our findings translate to larger datasets and different architectures. We use the Wikitext-103 dataset, which contains 103,227,021 tokens, and the Transformer-XL~\citep{dai2019transformer} and QRNN~\citep{bradbury2016quasi} architectures. First, we explore whether the implicit regularization effect of dropout is as important on larger datasets. We train the Transformer-XL and QRNN architectures on the Wikitext-103 corpus using $\drop{k}$ for $k = 1, 2, 4$. Table~\ref{tab:wiki-103-transformer} shows that for Transformer-XL trained on the full dataset, the implicit regularization effect disappears. We observe the same for QRNN (see Section~\ref{sec:experimental_result_app}).

In Table~\ref{tab:wiki-103-transformer}, we also demonstrate that there \textit{is} an implicit regularization effect when we downsample Wikitext-103 by a factor of 5, though it is not as crucial. Thus, the importance of the implicit regularization depends on the dataset size.

Finally, we confirm that our explicit regularizer is effective on a larger dataset. For Wikitext-103 and Transformer-XL, Table~\ref{tab:wiki-103-transformer} shows that our explicit regularizer achieves validation perplexity of 24.12, within $0.7$ of dropout. 
\begin{table}
	\centering
	\caption{Experimental results on WikiText-103 dataset for Transformer architecture. The implicit regularization effect of dropout noise appears to decrease with dataset size.}	\label{tab:wiki-103-transformer}
	\begin{tabular}{c c c} 
		\cline{1-3}
		Dataset Size & Training Method & Best Val. Ppl.\\
		\cline{1-3}
		\multirow{5}{*}{Full Dataset}& No regularization & 29.45\\
		&$\drop{1}$   & 23.39\\
		&$\drop{2}$ &  23.13\\
		&$\drop{4}$ & 23.14\\
		\cline{2-3}
		& $\regexp{}$ & 24.12\\
		\cline{1-3}
		\multirow{3}{*}{$0.2\times$ Dataset} & $\drop{1}$ & 46.05\\
		& $\drop{2}$ & 46.07\\
		& $\drop{4}$ & 47.40
	\end{tabular}
\end{table}

\section{Related Work}
Dropout has been the focus of several theoretical works studying its properties for both optimization and generalization~\citep{wager2013dropout,wager2014altitude,baldi2013understanding,helmbold2015inductive,gal2016dropout,helmbold2017surprising,cavazza2017dropout,mianjy2018implicit,mianjy2019dropout,arora2020dropout}.~\citet{wang2013fast,maeda2014bayesian,gal2016dropout,ma2016dropout} study dropout from a Bayesian perspective.~\citet{gao2019demystifying} empirically study the effect of applying dropout masks in one only direction of the network (either the forward or backward pass). 

\citet{wager2013dropout,helmbold2015inductive} use a Taylor expansion to analyze dropout in linear models, and  our work extends their analysis to neural networks.~\citet{cavazza2017dropout,mianjy2018implicit,mianjy2019dropout} study the expected dropout objective for matrix factorization and linearized neural nets, respectively. Recent work~\citep{arora2020dropout}~studies dropout applied to only the last layer of a deep neural net, computing an exact expression for the explicit regularizer which depends on the magnitudes of coordinates in the last hidden layer. Our analysis of the explicit regularization results in a more general expression which contains similar quantities, but considers dropout at all layers of the network. These prior works focus on explicit regularization and do not study the implicit regularization effects of dropout. 

There has been a large body of prior work studying the relationship between gradient noise and generalization~\citep{keskar2016large,keskar2017improving,smith2017bayesian,jastrzebski2018relation,xing2018walk,li2019towards,chaudhari2018stochastic}.~\citet{jastrzkebski2017three,zhu2018anisotropic,wen2019interplay} study how the noise distribution in SGD affects generalization.~\citet{wen2019interplay} inject noise with an appropriate covariance structure into the updates of large-batch SGD, making it match the behavior of small-batch SGD. We inject appropriate noise to make large-sample dropout updates match standard dropout.

Prior works have also studied data-dependent regularizers of the model and loss stability.~\citet{sokolic2017robust,hoffman2019robust} apply Jacobian-based regularization to train robust classifiers.~\citet{krueger2015regularizing} propose a data-dependent regularizer for the stability of RNN activations.~\citet{novak2018sensitivity,arora2018stronger,nagarajan2019deterministic,wei2019data,wei2019improved} study the relationship between model stability and generalization.

Finally, regularization for deep models is an important issue in NLP.~\citet{zaremba2014recurrent} demonstrated that dropout can be very helpful for NLP tasks.~\citet{semeniuta2016recurrent,gal2016theoretically} propose variants of dropout designed for recurrent neural networks.~\citet{krueger2015regularizing,merity2017revisiting} study temporal activation stability regularization.~\citet{merity2017revisiting,melis2017state,merity2018analysis} demonstrate that the proper tuning of regularizers can greatly impact performance. 

On the broader topic of generalization theory of neural networks, ~\citet{zhang2016understanding,neyshabur2018towards} observe that deep learning defies a lot of conventional statistical wisdom. Several works have studied generalization bounds for deep networks (see~\citep{bartlett2017spectrally,neyshabur2017pac,golowich2017size,dziugaite2017computing,arora2018stronger,wei2019improved} and references therein). Another line of work studies implicit regularization in deep learning (see~\citep{gunasekar2017implicit,gunasekar2018implicit,gunasekar2018characterizing,soudry2018implicit,woodworth2019kernel,arora2019implicit} and references therein). 
	\section{Conclusion}
In this work, we show that dropout actually introduces two entangled sources of regularization: an explicit one which modifies the expected objective, and an implicit one due to stochasticity in the updates. We empirically disentangle these regularizers and derive analytic simplifications which faithfully distill each regularization effect. We demonstrate that our simplified regularizers can replace dropout in practice. Our derivations show that dropout regularizes the stability of the model and loss around the training data. 

More broadly, our analytic characterizations of dropout can provide intuition on what works and what doesn't for stability-based regularizers in deep learning. 
We hope that these intuitions can help inform and motivate the design of more principled regularizers for deep networks. 

	\section*{Acknowledgements}

We would like to thank Michael Xie for suggesting the experiment which disentangled the implicit and explicit regularization effects. Sham Kakade would also like to thank Xinyi Chen, Cyril Zhang, and
Yi Zhang for numerous helpful discussions and help with earlier experimentation.
Colin Wei acknowledges support from an NSF Graduate Research Fellowship. 
The work is also partially supported by SDSI and SAIL at
Stanford.  Sham Kakade acknowledges funding from the Washington Research Foundation for Innovation in Data-intensive Discovery, and the NSF Awards CCF-1703574, and CCF-1740551.
\bibliography{refs}

\begin{thebibliography}{71}
\providecommand{\natexlab}[1]{#1}
\providecommand{\url}[1]{\texttt{#1}}
\expandafter\ifx\csname urlstyle\endcsname\relax
  \providecommand{\doi}[1]{doi: #1}\else
  \providecommand{\doi}{doi: \begingroup \urlstyle{rm}\Url}\fi

\bibitem[Arora et~al.(2020)Arora, Bartlett, Mianjy, and
  Srebro]{arora2020dropout}
Arora, R., Bartlett, P.~L., Mianjy, P., and Srebro, N.
\newblock Dropout: Explicit forms and capacity control, 2020.
\newblock URL \url{https://openreview.net/forum?id=Bylthp4Yvr}.

\bibitem[Arora et~al.(2018)Arora, Ge, Neyshabur, and Zhang]{arora2018stronger}
Arora, S., Ge, R., Neyshabur, B., and Zhang, Y.
\newblock Stronger generalization bounds for deep nets via a compression
  approach.
\newblock \emph{arXiv preprint arXiv:1802.05296}, 2018.

\bibitem[Arora et~al.(2019)Arora, Cohen, Hu, and Luo]{arora2019implicit}
Arora, S., Cohen, N., Hu, W., and Luo, Y.
\newblock Implicit regularization in deep matrix factorization.
\newblock In \emph{Advances in Neural Information Processing Systems}, pp.\
  7411--7422, 2019.

\bibitem[Bach et~al.(2010)]{bach2010self}
Bach, F. et~al.
\newblock Self-concordant analysis for logistic regression.
\newblock \emph{Electronic Journal of Statistics}, 4:\penalty0 384--414, 2010.

\bibitem[Baldi \& Sadowski(2013)Baldi and Sadowski]{baldi2013understanding}
Baldi, P. and Sadowski, P.~J.
\newblock Understanding dropout.
\newblock In \emph{Advances in neural information processing systems}, pp.\
  2814--2822, 2013.

\bibitem[Bartlett et~al.(2017)Bartlett, Foster, and
  Telgarsky]{bartlett2017spectrally}
Bartlett, P.~L., Foster, D.~J., and Telgarsky, M.~J.
\newblock Spectrally-normalized margin bounds for neural networks.
\newblock In \emph{Advances in Neural Information Processing Systems}, pp.\
  6240--6249, 2017.

\bibitem[Bousquet(2002)]{bousquet2002concentration}
Bousquet, O.
\newblock Concentration inequalities and empirical processes theory applied to
  the analysis of learning algorithms.
\newblock 2002.

\bibitem[Bradbury et~al.(2016)Bradbury, Merity, Xiong, and
  Socher]{bradbury2016quasi}
Bradbury, J., Merity, S., Xiong, C., and Socher, R.
\newblock Quasi-recurrent neural networks.
\newblock \emph{arXiv preprint arXiv:1611.01576}, 2016.

\bibitem[Cavazza et~al.(2017)Cavazza, Morerio, Haeffele, Lane, Murino, and
  Vidal]{cavazza2017dropout}
Cavazza, J., Morerio, P., Haeffele, B., Lane, C., Murino, V., and Vidal, R.
\newblock Dropout as a low-rank regularizer for matrix factorization.
\newblock \emph{arXiv preprint arXiv:1710.05092}, 2017.

\bibitem[Chaudhari \& Soatto(2018)Chaudhari and
  Soatto]{chaudhari2018stochastic}
Chaudhari, P. and Soatto, S.
\newblock Stochastic gradient descent performs variational inference, converges
  to limit cycles for deep networks.
\newblock In \emph{2018 Information Theory and Applications Workshop (ITA)},
  pp.\  1--10. IEEE, 2018.

\bibitem[Dai et~al.(2019)Dai, Yang, Yang, Carbonell, Le, and
  Salakhutdinov]{dai2019transformer}
Dai, Z., Yang, Z., Yang, Y., Carbonell, J., Le, Q.~V., and Salakhutdinov, R.
\newblock Transformer-xl: Attentive language models beyond a fixed-length
  context.
\newblock \emph{arXiv preprint arXiv:1901.02860}, 2019.

\bibitem[Dziugaite \& Roy(2017)Dziugaite and Roy]{dziugaite2017computing}
Dziugaite, G.~K. and Roy, D.~M.
\newblock Computing nonvacuous generalization bounds for deep (stochastic)
  neural networks with many more parameters than training data.
\newblock \emph{arXiv preprint arXiv:1703.11008}, 2017.

\bibitem[Gal \& Ghahramani(2016{\natexlab{a}})Gal and
  Ghahramani]{gal2016dropout}
Gal, Y. and Ghahramani, Z.
\newblock Dropout as a bayesian approximation: Representing model uncertainty
  in deep learning.
\newblock In \emph{international conference on machine learning}, pp.\
  1050--1059, 2016{\natexlab{a}}.

\bibitem[Gal \& Ghahramani(2016{\natexlab{b}})Gal and
  Ghahramani]{gal2016theoretically}
Gal, Y. and Ghahramani, Z.
\newblock A theoretically grounded application of dropout in recurrent neural
  networks.
\newblock In \emph{Advances in neural information processing systems}, pp.\
  1019--1027, 2016{\natexlab{b}}.

\bibitem[Gao et~al.(2019)Gao, Pei, and Huang]{gao2019demystifying}
Gao, H., Pei, J., and Huang, H.
\newblock Demystifying dropout.
\newblock In \emph{The 36th International Conference on Machine Learning (ICML
  2019)}, 2019.

\bibitem[Golowich et~al.(2017)Golowich, Rakhlin, and Shamir]{golowich2017size}
Golowich, N., Rakhlin, A., and Shamir, O.
\newblock Size-independent sample complexity of neural networks.
\newblock \emph{arXiv preprint arXiv:1712.06541}, 2017.

\bibitem[Grave et~al.(2017)Grave, Joulin, Ciss{\'e}, J{\'e}gou,
  et~al.]{grave2017efficient}
Grave, E., Joulin, A., Ciss{\'e}, M., J{\'e}gou, H., et~al.
\newblock Efficient softmax approximation for gpus.
\newblock In \emph{Proceedings of the 34th International Conference on Machine
  Learning-Volume 70}, pp.\  1302--1310. JMLR. org, 2017.

\bibitem[Gunasekar et~al.(2017)Gunasekar, Woodworth, Bhojanapalli, Neyshabur,
  and Srebro]{gunasekar2017implicit}
Gunasekar, S., Woodworth, B.~E., Bhojanapalli, S., Neyshabur, B., and Srebro,
  N.
\newblock Implicit regularization in matrix factorization.
\newblock In \emph{Advances in Neural Information Processing Systems}, pp.\
  6151--6159, 2017.

\bibitem[Gunasekar et~al.(2018{\natexlab{a}})Gunasekar, Lee, Soudry, and
  Srebro]{gunasekar2018implicit}
Gunasekar, S., Lee, J.~D., Soudry, D., and Srebro, N.
\newblock Implicit bias of gradient descent on linear convolutional networks.
\newblock In \emph{Advances in Neural Information Processing Systems}, pp.\
  9461--9471, 2018{\natexlab{a}}.

\bibitem[Gunasekar et~al.(2018{\natexlab{b}})Gunasekar, Lee, Soudry, and
  Srebro]{gunasekar2018characterizing}
Gunasekar, S., Lee, t., Soudry, D., and Srebro, N.
\newblock Characterizing implicit bias in terms of optimization geometry.
\newblock \emph{arXiv preprint arXiv:1802.08246}, 2018{\natexlab{b}}.

\bibitem[Helmbold \& Long(2015)Helmbold and Long]{helmbold2015inductive}
Helmbold, D.~P. and Long, P.~M.
\newblock On the inductive bias of dropout.
\newblock \emph{The Journal of Machine Learning Research}, 16\penalty0
  (1):\penalty0 3403--3454, 2015.

\bibitem[Helmbold \& Long(2017)Helmbold and Long]{helmbold2017surprising}
Helmbold, D.~P. and Long, P.~M.
\newblock Surprising properties of dropout in deep networks.
\newblock \emph{The Journal of Machine Learning Research}, 18\penalty0
  (1):\penalty0 7284--7311, 2017.

\bibitem[Hinton et~al.(2012)Hinton, Srivastava, Krizhevsky, Sutskever, and
  Salakhutdinov]{hinton2012improving}
Hinton, G.~E., Srivastava, N., Krizhevsky, A., Sutskever, I., and
  Salakhutdinov, R.~R.
\newblock Improving neural networks by preventing co-adaptation of feature
  detectors.
\newblock \emph{arXiv preprint arXiv:1207.0580}, 2012.

\bibitem[Hoffman et~al.(2019)Hoffman, Roberts, and Yaida]{hoffman2019robust}
Hoffman, J., Roberts, D.~A., and Yaida, S.
\newblock Robust learning with jacobian regularization.
\newblock \emph{arXiv preprint arXiv:1908.02729}, 2019.

\bibitem[Jastrz{\k{e}}bski et~al.(2017)Jastrz{\k{e}}bski, Kenton, Arpit,
  Ballas, Fischer, Bengio, and Storkey]{jastrzkebski2017three}
Jastrz{\k{e}}bski, S., Kenton, Z., Arpit, D., Ballas, N., Fischer, A., Bengio,
  Y., and Storkey, A.
\newblock Three factors influencing minima in sgd.
\newblock \emph{arXiv preprint arXiv:1711.04623}, 2017.

\bibitem[Jastrzebski et~al.(2018)Jastrzebski, Kenton, Ballas, Fischer, Bengio,
  and Storkey]{jastrzebski2018relation}
Jastrzebski, S., Kenton, Z., Ballas, N., Fischer, A., Bengio, Y., and Storkey,
  A.
\newblock On the relation between the sharpest directions of dnn loss and the
  sgd step length.
\newblock \emph{arXiv preprint arXiv:1807.05031}, 2018.

\bibitem[Kakade et~al.(2009)Kakade, Sridharan, and
  Tewari]{kakade2009complexity}
Kakade, S.~M., Sridharan, K., and Tewari, A.
\newblock On the complexity of linear prediction: Risk bounds, margin bounds,
  and regularization.
\newblock In \emph{Advances in neural information processing systems}, pp.\
  793--800, 2009.

\bibitem[Keskar \& Socher(2017)Keskar and Socher]{keskar2017improving}
Keskar, N.~S. and Socher, R.
\newblock Improving generalization performance by switching from adam to sgd.
\newblock \emph{arXiv preprint arXiv:1712.07628}, 2017.

\bibitem[Keskar et~al.(2016)Keskar, Mudigere, Nocedal, Smelyanskiy, and
  Tang]{keskar2016large}
Keskar, N.~S., Mudigere, D., Nocedal, J., Smelyanskiy, M., and Tang, P. T.~P.
\newblock On large-batch training for deep learning: Generalization gap and
  sharp minima.
\newblock \emph{arXiv preprint arXiv:1609.04836}, 2016.

\bibitem[Krueger \& Memisevic(2015)Krueger and
  Memisevic]{krueger2015regularizing}
Krueger, D. and Memisevic, R.
\newblock Regularizing rnns by stabilizing activations.
\newblock \emph{arXiv preprint arXiv:1511.08400}, 2015.

\bibitem[LeCun et~al.(2012)LeCun, Bottou, Orr, and
  M{\"u}ller]{lecun2012efficient}
LeCun, Y.~A., Bottou, L., Orr, G.~B., and M{\"u}ller, K.-R.
\newblock Efficient backprop.
\newblock In \emph{Neural networks: Tricks of the trade}, pp.\  9--48.
  Springer, 2012.

\bibitem[Li et~al.(2017)Li, Ma, and Zhang]{li2017algorithmic}
Li, Y., Ma, T., and Zhang, H.
\newblock Algorithmic regularization in over-parameterized matrix sensing and
  neural networks with quadratic activations.
\newblock \emph{arXiv preprint arXiv:1712.09203}, 2017.

\bibitem[Li et~al.(2019)Li, Wei, and Ma]{li2019towards}
Li, Y., Wei, C., and Ma, T.
\newblock Towards explaining the regularization effect of initial large
  learning rate in training neural networks.
\newblock In \emph{Advances in Neural Information Processing Systems}, pp.\
  11669--11680, 2019.

\bibitem[Ma et~al.(2016)Ma, Gao, Hu, Yu, Deng, and Hovy]{ma2016dropout}
Ma, X., Gao, Y., Hu, Z., Yu, Y., Deng, Y., and Hovy, E.
\newblock Dropout with expectation-linear regularization.
\newblock \emph{arXiv preprint arXiv:1609.08017}, 2016.

\bibitem[Maeda(2014)]{maeda2014bayesian}
Maeda, S.-i.
\newblock A bayesian encourages dropout.
\newblock \emph{arXiv preprint arXiv:1412.7003}, 2014.

\bibitem[Marcus et~al.(1994)Marcus, Kim, Marcinkiewicz, MacIntyre, Bies,
  Ferguson, Katz, and Schasberger]{marcus1994penn}
Marcus, M., Kim, G., Marcinkiewicz, M.~A., MacIntyre, R., Bies, A., Ferguson,
  M., Katz, K., and Schasberger, B.
\newblock The penn treebank: annotating predicate argument structure.
\newblock In \emph{Proceedings of the workshop on Human Language Technology},
  pp.\  114--119. Association for Computational Linguistics, 1994.

\bibitem[Melis et~al.(2017)Melis, Dyer, and Blunsom]{melis2017state}
Melis, G., Dyer, C., and Blunsom, P.
\newblock On the state of the art of evaluation in neural language models.
\newblock \emph{arXiv preprint arXiv:1707.05589}, 2017.

\bibitem[Merity et~al.(2016)Merity, Xiong, Bradbury, and
  Socher]{merity2016pointer}
Merity, S., Xiong, C., Bradbury, J., and Socher, R.
\newblock Pointer sentinel mixture models.
\newblock \emph{arXiv preprint arXiv:1609.07843}, 2016.

\bibitem[Merity et~al.(2017{\natexlab{a}})Merity, Keskar, and
  Socher]{merity2017regularizing}
Merity, S., Keskar, N.~S., and Socher, R.
\newblock Regularizing and optimizing lstm language models.
\newblock \emph{arXiv preprint arXiv:1708.02182}, 2017{\natexlab{a}}.

\bibitem[Merity et~al.(2017{\natexlab{b}})Merity, McCann, and
  Socher]{merity2017revisiting}
Merity, S., McCann, B., and Socher, R.
\newblock Revisiting activation regularization for language rnns.
\newblock \emph{arXiv preprint arXiv:1708.01009}, 2017{\natexlab{b}}.

\bibitem[Merity et~al.(2018)Merity, Keskar, and Socher]{merity2018analysis}
Merity, S., Keskar, N.~S., and Socher, R.
\newblock {An Analysis of Neural Language Modeling at Multiple Scales}.
\newblock \emph{arXiv preprint arXiv:1803.08240}, 2018.

\bibitem[Mianjy \& Arora(2019)Mianjy and Arora]{mianjy2019dropout}
Mianjy, P. and Arora, R.
\newblock On dropout and nuclear norm regularization.
\newblock \emph{arXiv preprint arXiv:1905.11887}, 2019.

\bibitem[Mianjy et~al.(2018)Mianjy, Arora, and Vidal]{mianjy2018implicit}
Mianjy, P., Arora, R., and Vidal, R.
\newblock On the implicit bias of dropout.
\newblock \emph{arXiv preprint arXiv:1806.09777}, 2018.

\bibitem[Nagarajan \& Kolter(2019)Nagarajan and
  Kolter]{nagarajan2019deterministic}
Nagarajan, V. and Kolter, J.~Z.
\newblock Deterministic pac-bayesian generalization bounds for deep networks
  via generalizing noise-resilience.
\newblock \emph{arXiv preprint arXiv:1905.13344}, 2019.

\bibitem[Neyshabur et~al.(2017)Neyshabur, Bhojanapalli, and
  Srebro]{neyshabur2017pac}
Neyshabur, B., Bhojanapalli, S., and Srebro, N.
\newblock A pac-bayesian approach to spectrally-normalized margin bounds for
  neural networks.
\newblock \emph{arXiv preprint arXiv:1707.09564}, 2017.

\bibitem[Neyshabur et~al.(2018)Neyshabur, Li, Bhojanapalli, LeCun, and
  Srebro]{neyshabur2018towards}
Neyshabur, B., Li, Z., Bhojanapalli, S., LeCun, Y., and Srebro, N.
\newblock Towards understanding the role of over-parametrization in
  generalization of neural networks.
\newblock \emph{arXiv preprint arXiv:1805.12076}, 2018.

\bibitem[Novak et~al.(2018)Novak, Bahri, Abolafia, Pennington, and
  Sohl-Dickstein]{novak2018sensitivity}
Novak, R., Bahri, Y., Abolafia, D.~A., Pennington, J., and Sohl-Dickstein, J.
\newblock Sensitivity and generalization in neural networks: an empirical
  study.
\newblock \emph{arXiv preprint arXiv:1802.08760}, 2018.

\bibitem[Sagun et~al.(2017)Sagun, Evci, Guney, Dauphin, and
  Bottou]{sagun2017empirical}
Sagun, L., Evci, U., Guney, V.~U., Dauphin, Y., and Bottou, L.
\newblock Empirical analysis of the hessian of over-parametrized neural
  networks.
\newblock \emph{arXiv preprint arXiv:1706.04454}, 2017.

\bibitem[Semeniuta et~al.(2016)Semeniuta, Severyn, and
  Barth]{semeniuta2016recurrent}
Semeniuta, S., Severyn, A., and Barth, E.
\newblock Recurrent dropout without memory loss.
\newblock \emph{arXiv preprint arXiv:1603.05118}, 2016.

\bibitem[Smith \& Le(2017)Smith and Le]{smith2017bayesian}
Smith, S.~L. and Le, Q.~V.
\newblock A bayesian perspective on generalization and stochastic gradient
  descent.
\newblock \emph{arXiv preprint arXiv:1710.06451}, 2017.

\bibitem[Sokoli{\'c} et~al.(2017)Sokoli{\'c}, Giryes, Sapiro, and
  Rodrigues]{sokolic2017robust}
Sokoli{\'c}, J., Giryes, R., Sapiro, G., and Rodrigues, M.~R.
\newblock Robust large margin deep neural networks.
\newblock \emph{IEEE Transactions on Signal Processing}, 65\penalty0
  (16):\penalty0 4265--4280, 2017.

\bibitem[Soudry et~al.(2018)Soudry, Hoffer, Nacson, Gunasekar, and
  Srebro]{soudry2018implicit}
Soudry, D., Hoffer, E., Nacson, M.~S., Gunasekar, S., and Srebro, N.
\newblock The implicit bias of gradient descent on separable data.
\newblock \emph{The Journal of Machine Learning Research}, 19\penalty0
  (1):\penalty0 2822--2878, 2018.

\bibitem[Srebro et~al.(2010)Srebro, Sridharan, and
  Tewari]{srebro2010smoothness}
Srebro, N., Sridharan, K., and Tewari, A.
\newblock Smoothness, low noise and fast rates.
\newblock In \emph{Advances in neural information processing systems}, pp.\
  2199--2207, 2010.

\bibitem[Srivastava et~al.(2014)Srivastava, Hinton, Krizhevsky, Sutskever, and
  Salakhutdinov]{srivastava2014dropout}
Srivastava, N., Hinton, G., Krizhevsky, A., Sutskever, I., and Salakhutdinov,
  R.
\newblock Dropout: a simple way to prevent neural networks from overfitting.
\newblock \emph{The journal of machine learning research}, 15\penalty0
  (1):\penalty0 1929--1958, 2014.

\bibitem[Tan \& Le(2019)Tan and Le]{tan2019efficientnet}
Tan, M. and Le, Q.~V.
\newblock Efficientnet: Rethinking model scaling for convolutional neural
  networks.
\newblock \emph{arXiv preprint arXiv:1905.11946}, 2019.

\bibitem[Wager et~al.(2013)Wager, Wang, and Liang]{wager2013dropout}
Wager, S., Wang, S., and Liang, P.~S.
\newblock Dropout training as adaptive regularization.
\newblock In \emph{Advances in neural information processing systems}, pp.\
  351--359, 2013.

\bibitem[Wager et~al.(2014)Wager, Fithian, Wang, and Liang]{wager2014altitude}
Wager, S., Fithian, W., Wang, S., and Liang, P.~S.
\newblock Altitude training: Strong bounds for single-layer dropout.
\newblock In \emph{Advances in Neural Information Processing Systems}, pp.\
  100--108, 2014.

\bibitem[Wan et~al.(2013)Wan, Zeiler, Zhang, Le~Cun, and
  Fergus]{wan2013regularization}
Wan, L., Zeiler, M., Zhang, S., Le~Cun, Y., and Fergus, R.
\newblock Regularization of neural networks using dropconnect.
\newblock In \emph{International conference on machine learning}, pp.\
  1058--1066, 2013.

\bibitem[Wang \& Manning(2013)Wang and Manning]{wang2013fast}
Wang, S. and Manning, C.
\newblock Fast dropout training.
\newblock In \emph{international conference on machine learning}, pp.\
  118--126, 2013.

\bibitem[Wei \& Ma(2019{\natexlab{a}})Wei and Ma]{wei2019data}
Wei, C. and Ma, T.
\newblock Data-dependent sample complexity of deep neural networks via
  lipschitz augmentation.
\newblock In \emph{Advances in Neural Information Processing Systems}, pp.\
  9722--9733, 2019{\natexlab{a}}.

\bibitem[Wei \& Ma(2019{\natexlab{b}})Wei and Ma]{wei2019improved}
Wei, C. and Ma, T.
\newblock Improved sample complexities for deep networks and robust
  classification via an all-layer margin.
\newblock \emph{arXiv preprint arXiv:1910.04284}, 2019{\natexlab{b}}.

\bibitem[Wei \& Schwab(2019)Wei and Schwab]{Wei2019HowNA}
Wei, M. and Schwab, D.~J.
\newblock How noise affects the hessian spectrum in overparameterized neural
  networks.
\newblock \emph{ArXiv}, abs/1910.00195, 2019.

\bibitem[Wen et~al.(2019)Wen, Luk, Gazeau, Zhang, Chan, and
  Ba]{wen2019interplay}
Wen, Y., Luk, K., Gazeau, M., Zhang, G., Chan, H., and Ba, J.
\newblock Interplay between optimization and generalization of stochastic
  gradient descent with covariance noise.
\newblock \emph{arXiv preprint arXiv:1902.08234}, 2019.

\bibitem[Woodworth et~al.(2019)Woodworth, Gunasekar, Lee, Soudry, and
  Srebro]{woodworth2019kernel}
Woodworth, B., Gunasekar, S., Lee, J., Soudry, D., and Srebro, N.
\newblock Kernel and deep regimes in overparametrized models.
\newblock \emph{arXiv preprint arXiv:1906.05827}, 2019.

\bibitem[Xing et~al.(2018)Xing, Arpit, Tsirigotis, and Bengio]{xing2018walk}
Xing, C., Arpit, D., Tsirigotis, C., and Bengio, Y.
\newblock A walk with sgd.
\newblock \emph{arXiv preprint arXiv:1802.08770}, 2018.

\bibitem[Yaida(2018)]{yaida2018fluctuation}
Yaida, S.
\newblock Fluctuation-dissipation relations for stochastic gradient descent.
\newblock \emph{arXiv preprint arXiv:1810.00004}, 2018.

\bibitem[Yao et~al.(2018)Yao, Gholami, Lei, Keutzer, and
  Mahoney]{yao2018hessian}
Yao, Z., Gholami, A., Lei, Q., Keutzer, K., and Mahoney, M.~W.
\newblock Hessian-based analysis of large batch training and robustness to
  adversaries.
\newblock In \emph{Advances in Neural Information Processing Systems}, pp.\
  4949--4959, 2018.

\bibitem[Zaremba et~al.(2014)Zaremba, Sutskever, and
  Vinyals]{zaremba2014recurrent}
Zaremba, W., Sutskever, I., and Vinyals, O.
\newblock Recurrent neural network regularization.
\newblock \emph{arXiv preprint arXiv:1409.2329}, 2014.

\bibitem[Zhang et~al.(2016)Zhang, Bengio, Hardt, Recht, and
  Vinyals]{zhang2016understanding}
Zhang, C., Bengio, S., Hardt, M., Recht, B., and Vinyals, O.
\newblock Understanding deep learning requires rethinking generalization.
\newblock \emph{arXiv preprint arXiv:1611.03530}, 2016.

\bibitem[Zhang(2002)]{zhang2002covering}
Zhang, T.
\newblock Covering number bounds of certain regularized linear function
  classes.
\newblock \emph{Journal of Machine Learning Research}, 2\penalty0
  (Mar):\penalty0 527--550, 2002.

\bibitem[Zhu et~al.(2018)Zhu, Wu, Yu, Wu, and Ma]{zhu2018anisotropic}
Zhu, Z., Wu, J., Yu, B., Wu, L., and Ma, J.
\newblock The anisotropic noise in stochastic gradient descent: Its behavior of
  escaping from minima and regularization effects.
\newblock \emph{arXiv preprint arXiv:1803.00195}, 2018.

\end{thebibliography}
\bibliographystyle{icml2020}
\onecolumn
\appendix
\section{Full Derivations in Section~\ref{sec:analytic}}\label{sec:analytic:app}

\subsection{Full Derivation of Equation~\eqref{eqn:3}} \label{sec:analytic_hessian:app}
We decompose the second derivative of the loss with respect to $\hidden{i}$, $\deriv^2_{\var{\hidden{i}}} (\ell \circ \neti{i})[\hidden{i}(x)])$, into a positive-semidefinite component and non-PSD component:
\begin{align}\label{eq:psd_nonpsd}
\deriv^2_{\var{\hidden{i}}} \ell \circ \neti{i}[\hidden{i}(x)]) = \jac{i}(x)^\top \lhess \jac{i}(x) + M 
\end{align}
where $$M \triangleq \sum_{j \in [\dout]} (\deriv_{\var{\net}} \loss[\net(x)]))_j \deriv^2_{\var{\hidden{i}}} (\neti{i})_j [\hidden{i}(x)]$$ is some matrix capturing second derivatives of the model output. We used $(\cdot)_j$ to index the $j$-th coordinate in the output, to avoid confusion with indexing the layers. The first term in~\eqref{eq:psd_nonpsd} is positive-semidefinite when $\loss$ is convex, but $M$ is unlikely to be positive-semidefinite since it involves the Hessian of a non-convex model. Plugging everything back into~\eqref{eqn:2}, we obtain
\begin{align}
\regdrop(F, x) &\approx \frac{\droppr}{2(\droppr - 1)}\matip{\jac{i}(x)^\top \lhess \jac{i}(x)} {\diag(\hidden{i}(x)^{\odot 2})}  \\
&+ \frac{\droppr}{2(\droppr - 1)} \matip{M} {\diag(\hidden{i}(x)^{\odot 2})}
\end{align}
To ensure that our regularizer is nonnegative, we ignore the second term containing the non-PSD matrix $M$. Ignoring the non-PSD term in this kind of decomposition was also suggested in~\cite{sagun2017empirical}. We also omit the factor of $\frac{q}{2(q - 1)}$ in~\eqref{eqn:3} for simplicity. 

\subsection{Justification of Taylor Expansion} \label{sec:taylor_exp:app}
As dropout introduces a change that has magnitude which is multiplicative in the size of the coordinates of the hidden layers, the perturbation due to dropout might not be small. Since Taylor expansions typically require a small level of perturbation, in this section we argue that when the application of dropout is followed by a linear transformation layer, the perturbation to the linear layer could be small. Furthermore, we demonstrate that performing Taylor expansion with respect to this layer will ultimately give the same regularizer. 

We work in the same setting of the derivation in Section~\ref{sec:simpleexplicit}. We add the additional assumption that $h_i$ is followed by a linear transformation parameterized by weight matrix $Z$.
Thus, we can express $\net(x) = \neti{i}(\hidden{i}(x)) = \neti{i + 1}' (Z \hidden{i}(x))$ where $F_{i+1}'$ denotes all the computation after the matrix multiplication $Z$. ($\net_{i+1}'$ differs from $\net_{i+1}$ just by an additional activation layer that follows the matrix multiplication by $Z$.) Now we can compute the loss after applying dropout on $\hidden{i}$ by 
\begin{align}
	\loss(\net(x, \noise)) = \loss(\net_{i + 1}'(Z \hidden{i}(x) + Z \delta))
\end{align}

Our key observation, as detailed below, is that although the perturbation $\delta$ to $\hidden{i}(x)$ could be large relative to the magnitudes of the coordinates of $\hidden{i}(x)$, the perturbation $Z \delta$ may be much smaller relative to the magnitudes of the coordinates of $Z\hidden{i}(x)$. Thus, the effect of the dropout noise can be mitigated as it passes through linear layers of the network, making the Taylor expansion more realistic. 

Concretely, consider the standard deviation of the $j$-th coordinate of $Z \delta$: 
\begin{align}\label{eq:Mdelta_std}
 \textup{std}[(Z \delta)_j] = \sqrt{\sum_{k} Z_{jk}^2 (\hidden{i}(x))_k^2}
\end{align}
On the other hand, we have 
\begin{align} \label{eq:Mhidden}
	(Z \hidden{i}(x))_j = \sum_k Z_{jk} \cdot (\hidden{i}(x))_k
\end{align}
In the case where $\{Z_{jk}\}_k$ and $\hidden{i}$ share the same sign on each coordinate, the signal $(Z \hidden{i}(x))_j$~\eqref{eq:Mhidden} can be larger than the size of the perturbation, that is, $\textup{std}[(Z \delta)_j]$~\eqref{eq:Mdelta_std}, by a factor of $\sqrt{d}$. For example, consider the case where all entries of $Z$ and $\hidden{i}$ are 1. In other words, even though the vector $\delta$ seems to be comparable to $h_i$ in the norm, after passing through the linear transformation, due to the cancellation arising from the randomness in $\delta$, $Z\delta$ can be much smaller compared to $Zh_i$. 

Thus, when the weight matrix and hidden layer are well-aligned, the level of perturbation caused by dropout to the subsequent linear layer might not be too large. This supports our use of Taylor expansion. Now we can also check that Taylor expanding around $Z\delta = 0$ gives the same regularizers. (This is unsurprising because the form of Taylor expansion is invariant to linear transformation.) Using $\hidden{i + 1}'$ to refer to $Z\hidden{i}$, we have
\begin{align}\label{eq:taylor_exp_Mdelta}
	\loss(\net(x, \noise)) \approx \loss (\net(x)) + \deriv_{\var{\hidden{i + 1}'}}( \loss \circ \net'_{i + 1})[Z \hidden{i}(x)] Z \delta + \frac{\delta^\top Z^\top \deriv^2_{\var{\hidden{i + 1}'}} (\loss \circ \net'_{i + 1}) [Z \hidden{i}(x)] Z \delta }{2}
\end{align}
Now we observe that $$\deriv_{\var{\hidden{i + 1}'}}( \loss \circ \net'_{i + 1})[Z \hidden{i}(x)] Z = \deriv_{\var{\hidden{i}}} (\loss \circ \neti{i}) [\hidden{i}(x)]$$
$$Z^\top \deriv^2_{\var{\hidden{i + 1}'}} (\loss \circ \net'_{i + 1}) [Z \hidden{i}(x)] Z = \deriv^2_{\var{\hidden{i}}} (\loss \circ \net_{i}) [\hidden{i}(x)]$$
Substituting these back into~\eqref{eq:taylor_exp_Mdelta} brings us back to~\eqref{eq:taylor_exp}, which served as the starting point for the derivations of our explicit and implicit regularizers. Thus, we obtain the same analytic expressions by performing Taylor expansion around $Z\delta$. 

To provide further justification, we evaluate the quality of the Taylor expansion for an LSTM trained with hidden layer dropout with probability 0.5 on Penn Treebank, and found that the quadratic term accounted for > 80\% of the difference between the losses with and without dropout. In more details, we found that $\E_x[\regdrop(x) - \regexp(x)] = 0.009$, whereas $\E_x[\regdrop(x)] = 0.053$ for this particular model, where the expectation was taken over the training set. Thus, the Taylor approximation is sufficiently tight for~\eqref{eq:reg_ours} to capture the important explicit regularization effects of dropout, as is supported in Section~\ref{sec:experiments}.

\section{Proof of Theorem~\ref{thm:cross_ent_gen}}\label{sec:theory:app}
In this section, we will analyze a general loss function $\loss : \R^\dout \times [\dout] \to \R$ (note that we will frequently hide the dependence on the label $y$, as it is not important for our proofs). As before, for some fixed bound $\ub > 0$, define $\widebar{\loss} \triangleq \min\{B, \loss\}$ to be the truncation of the loss. We carry over the remainder of the notation from the setting in Section~\ref{sec:theory}. 

Our proof of Theorem~\ref{thm:cross_ent_gen} will rely on the following slightly more general statement for loss functions with an exponential tail. 
\begin{theorem}\label{thm:exp_tailed_gen}
	Suppose $\loss(\cdot, y)$ is convex and satisfies 
	\begin{align}
	\gnorm{\deriv (\tr \circ \deriv^2 \loss (\cdot, y)  )[h]} \le \tau \tr (\deriv^2 \loss (\cdot, y) [h]) \label{eq:exp_tail_cond}
	\end{align} for all $h$, $y$, and some $\tau > 0$. 
		With probability $1 - \delta$ over the draw of the training examples, for all $W \in \R^{\dout \times \din}$ satisfying the norm bound $\normto{W^\top} \le A$ the following holds: 
	\begin{align}
	\E_\dist[\widebar{\loss}(Wx, y)] - 1.01 \E_{\edist}[\widebar{\loss}(Wx, y)] \lesssim \frac{(A\gradm(W))^{2/3}(\theta B)^{1/3}}{n^{1/3}} + \frac{A \sqrt{\theta \hessm(W) B}}{\sqrt{n}} + \\
	\frac{B A^2 \theta  \tau^2}{n \left( \log^2 \left( \frac{BA^2 \theta  \tau^4}{n \hessm(W)}\right) + 1\right)} +  \frac{B(\log 1/\delta + \log \log n)}{n}
	\end{align}
	where $\theta \triangleq \max_i \|x_i\|^2 \log^3(nc)$ and $\gradm$, $\hessm$ measure the Jacobians and Hessians of the loss and are defined by 
	\begin{align}
		\gradm(W) \triangleq  \frac{\sum_{i = 1}^n \gnorm{D \loss (\cdot, y) [Wx_i]} }{n} \label{eq:gradm_def}\\
		\hessm(W) \triangleq \frac{\sum_{i = 1}^n \tr (D^2 \loss (\cdot, y) [Wx_i])}{n} \label{eq:hessm_def}
	\end{align}
\end{theorem}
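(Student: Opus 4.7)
The plan is to bound the uniform deviation of the truncated loss $\widebar{\loss}$ via a covering argument for the linear class $\{W : \normto{W^\top} \le A\}$ together with a sharp per-hypothesis concentration inequality that exploits the convexity of $\loss$ and the exponential-tail condition~\eqref{eq:exp_tail_cond}. Two features make this delicate: the quantities $\gradm(W)$ and $\hessm(W)$ are themselves empirical functions of $W$, and the loss is not uniformly bounded but only exponentially tailed, which is what eventually allows the $B$-factor in the residual term to be moderated by the $\log^2$ denominator.

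First I would derive a covering number bound for the set $\{(Wx_i)_{i=1}^n : \normto{W^\top} \le A\}$. The $\normto{\cdot}$ norm is dual to the $\|\cdot\|_{2,\infty}$ norm, so a Maurey/Zhang-style empirical sparsification yields an $\epsilon$-cover whose log-cardinality is of order $A^2 \max_i \gnorm{x_i}^2 \log(n\dout)/\epsilon^2$; this is exactly where $\theta = \log^3(n\dout)\max_i \gnorm{x_i}^2$ is born. Converting this into a cover of the loss class requires a Lipschitz estimate for $\widebar{\loss}$, which is provided by condition~\eqref{eq:exp_tail_cond}: it bounds the growth of the loss's first derivative in terms of its Hessian trace, implying nearby matrices produce loss values whose differences shrink like $\tau$ times $\gnorm{(W-W')x}$ with only logarithmic truncation loss.

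Second, for each fixed $W$ in the cover I would apply Bernstein's inequality to the centered variables $\widebar{\loss}(Wx_i, y_i) - \E_{\dist}[\widebar{\loss}(Wx, y)]$. The range is $B$, while the variance is bounded via a second-order Taylor expansion of $\loss$ combined with convexity, giving $\E[\loss(Wx)] \lesssim A\gnorm{x}\gradm(W) + A^2 \gnorm{x}^2 \hessm(W)$ up to a reference constant (using that $\loss(0) \le \loss(Wx) + \langle \nabla \loss(Wx), -Wx\rangle$). This produces two contributions: the $\hessm$-term yields the classical smooth-loss fast rate $A\sqrt{B\theta\hessm(W)/n}$, while the $\gradm$-term yields the slower $n^{-1/3}$ rate by moment interpolation between the $L^1$ bound $\E[\widebar{\loss}] \lesssim A\gradm(W)\sqrt{\theta}$ and the $L^\infty$ bound $B$, optimally balanced to give $(A\gradm(W))^{2/3}(\theta B)^{1/3}/n^{1/3}$. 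The $\log^2$ denominator in the residual term comes from refining Bernstein using the sub-exponential tail implied by~\eqref{eq:exp_tail_cond}: iterating the condition bounds the $k$-th moment of the loss geometrically, so loss values exceeding roughly $\tau^{-1}\log n$ contribute negligibly, shaving a $\log^2$ factor off the $B/n$ term.

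Third, I would take a union bound over the covering set and simultaneously peel over $\gradm(W)$ and $\hessm(W)$ on a geometric grid with $O(\log n)$ levels; this peeling produces the $\log\log n$ contribution absorbed into $\zeta$. The main obstacle is routing the peeling so the extra logarithmic factors do not multiply the leading $n^{-1/2}$ and $n^{-1/3}$ rates, and in particular so that variances are controlled by the \emph{empirical} $\gradm(W)$, $\hessm(W)$ rather than their worst-case surrogates. Closing this loop requires using the exponential-tail condition~\eqref{eq:exp_tail_cond} twice --- once to convert Hessian-trace bounds into moment bounds on $\loss$, and once to invoke the sub-exponential Bernstein refinement --- while carefully tracking the $\tau$-dependence so that it appears only inside the $\log^2$ denominator of the third term and not in the leading two. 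Specializing to the cross-entropy loss then gives Theorem~\ref{thm:cross_ent_gen}, since the softmax Hessian satisfies~\eqref{eq:exp_tail_cond} with $\tau = O(1)$.
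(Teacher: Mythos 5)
Your plan shares some raw ingredients with the paper (a Zhang-style empirical cover for the $\normto{\cdot}$-constrained linear class, which is indeed where the $\log(n\dout)$ factors in $\theta$ come from, and a union bound over an $O(\log n)$ geometric grid producing the $\log\log n$ term), but it misses the construction that actually drives the proof and contains steps that do not go through. The paper never applies a per-hypothesis Bernstein inequality: it introduces a perturbed surrogate $\ploss_\radius(W,x,y)=\max_{\delta}\,\pnlty_\radius(\gnorm{\delta})\,\widebar{\loss}(Wx+\delta\gnorm{x},y)$ (a continuous analogue of the all-layer margin), proves a smoothness-based local Rademacher bound for $\ploss_\radius$ with complexity $\ub A^2\log^3(n\dout)/(\radius^2 n)$ (Dudley's entropy integral over the empirical cover, plus the Srebro--Sridharan--Tewari local argument), and then \emph{deterministically} sandwiches $\widebar{\loss}\le\ploss_\radius\le \loss+\gnorm{\deriv\loss[Wx]}\gnorm{x}\radius+\tr(\deriv^2\loss[Wx])\frac{\exp(\radius\tau\gnorm{x})-\radius\tau\gnorm{x}-1}{\tau^2}$ via a Bach-type self-concordance growth lemma derived from~\eqref{eq:exp_tail_cond}. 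All three leading terms of the theorem then come from optimizing the radius $\radius$ against the $1/\radius^2$ complexity: the linear (gradient) term gives the $n^{-1/3}$ rate, and the exponential (Hessian) term gives both the $A\sqrt{\theta\hessm B/n}$ term and the $\log^2$-denominator term. This is how the empirical quantities $\gradm(W),\hessm(W)$ enter without any peeling over their values.

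Two steps of your proposal would fail as stated. First, the cover of $\{Wx_i\}_{i=1}^n$ is data-dependent, so you cannot fix it and then apply Bernstein to each cover element over the draw of the same sample; you would need symmetrization/Rademacher machinery (as the paper uses) or a data-independent cover, which for this norm ball is dimension-dependent. Relatedly, converting a cover of predictions into a cover of the loss class requires a smoothness-type inequality, which the paper obtains from the structure of $\ploss_\radius$ itself, namely $(\ploss_\radius(W,x,y)-\ploss_\radius(W',x,y))^2\le(\ploss_\radius(W,x,y)+\ploss_\radius(W',x,y))\,4\ub\gnorm{Wx-W'x}^2/(\gnorm{x}^2\radius^2)$; condition~\eqref{eq:exp_tail_cond} does not play this role. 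Second, you misread~\eqref{eq:exp_tail_cond}: it bounds the gradient (in $h$) of the Hessian trace by $\tau$ times that trace, i.e.\ it is a self-concordance-like property of the loss as a function of its argument, used purely deterministically; it says nothing about tails or moments of $\loss(Wx,y)$ as a random variable over the data. Hence the proposed ``sub-exponential Bernstein refinement'' has no basis, and it is unclear your route would produce the third term with $\hessm(W)$ inside the logarithm and $\tau$ confined to it. Finally, your variance bound via convexity introduces an additive $\loss(0)$ (e.g.\ $\log\dout$ for cross entropy) that does not appear in the theorem and would need separate handling, and your Lipschitz step conflates the loss gradient with the gradient of the Hessian trace.
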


Given Theorem~\ref{thm:exp_tailed_gen}, we can complete the proof of Theorem~\ref{thm:cross_ent_gen} by observing that $\ce$ satisfies~\eqref{eq:exp_tail_cond}, as formally stated in the following lemma.

\begin{lemma} \label{lem:cross_entropy_exp_tailed}
	Let $\ce$ denote the cross entropy loss. Then for any $h, y$, $\ce$ satisfies~\eqref{eq:exp_tail_cond} with $\tau = \sqrt{2}$. 
\end{lemma}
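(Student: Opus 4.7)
The plan is to reduce inequality~\eqref{eq:exp_tail_cond} for $\ce$ to an explicit statement about the softmax probabilities $\probs = \textup{softmax}(h)$ and then exploit a symmetrization. Using the standard identities $\deriv \ce_y[h] = \probs - e_y$ and $\deriv^2 \ce_y[h] = \diag(\probs) - \probs\probs^\top =: H$, the Hessian (and hence its trace) does not depend on $y$, and $\tr(H) = 1 - \|\probs\|^2$. Writing $s := \|\probs\|^2$, the target inequality reduces to $\|\nabla_h \tr(H)\| \le \sqrt{2}\,(1-s)$.

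The gradient of the trace is straightforward: because the Jacobian of softmax with respect to $h$ is exactly $H$, the chain rule gives $\nabla_h \|\probs\|^2 = 2H\probs$, so $\nabla_h \tr(H) = -2H\probs$. The claim therefore becomes $\|H\probs\|^2 \le \tfrac{1}{2}(1-s)^2$.

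The main step is a two-stage bound on $\|H\probs\|^2$. Since $H$ is positive semidefinite with operator norm at most $\tr(H) = 1 - s$, I have $H^2 \preceq (1-s)H$, which yields $\|H\probs\|^2 = \probs^\top H^2 \probs \le (1-s)\,\probs^\top H \probs$. I then observe that $\probs^\top H \probs = \sum_i \probs_i^3 - s^2$ is exactly $\textup{Var}_{Y\sim\probs}(\probs_Y)$, the variance of $\probs_Y$ when $Y$ is drawn from the categorical distribution with parameter $\probs$. Expressing this variance symmetrically as $\tfrac{1}{2}\E_{Y,Y'}[(\probs_Y - \probs_{Y'})^2]$ with $Y,Y'$ i.i.d.\ and using the pointwise bound $(\probs_i - \probs_j)^2 \le 1$, I get $\probs^\top H \probs \le \tfrac{1}{2}\sum_{i \ne j} \probs_i \probs_j = \tfrac{1-s}{2}$.

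Combining the two bounds yields $\|H\probs\|^2 \le (1-s)\cdot \tfrac{1-s}{2} = \tfrac{1}{2}(\tr H)^2$, which is exactly what is needed for $\tau = \sqrt{2}$. The delicate point is identifying the right bound on $\textup{Var}_{Y\sim\probs}(\probs_Y)$: the familiar estimate $\textup{Var}(\probs_Y) \le s(1-s)$ (valid for any $[0,1]$-valued random variable with mean $s$) is too loose when $s$ is close to $1$, whereas the symmetrization together with $(\probs_i - \probs_j)^2 \le 1$ and $\sum_{i \ne j}\probs_i\probs_j = 1 - s$ delivers precisely the $\tfrac{1-s}{2}$ bound matching $\tau = \sqrt{2}$.
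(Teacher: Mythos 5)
Your proof is correct, and it takes a genuinely different route from the paper's. The paper argues coordinatewise: it writes $\deriv \tr(\deriv^2 \ce_y)[h] = \sum_i (1-2\probs_i)\probs_i(\onevec_i - \probs)$, applies the triangle inequality together with $|1-2\probs_i|\le 1$, and uses the elementary estimate $\gnorm{\onevec_i - \probs} \le \sqrt{2}\,(1-\probs_i)$, so the sum collapses to $\sqrt{2}\sum_i \probs_i(1-\probs_i) = \sqrt{2}\,\tr(\deriv^2\ce_y[h])$. You instead work with the closed form $-2H\probs$, where $H = \diag(\probs)-\probs\probs^\top$, and argue spectrally: $H^2 \preceq \tr(H)\,H$ reduces the problem to the scalar $\probs^\top H \probs = \mathrm{Var}_{Y\sim\probs}(\probs_Y)$, which you bound by $\tfrac{1}{2}(1-\|\probs\|^2) = \tfrac{1}{2}\tr(H)$ via the symmetrization $\mathrm{Var}(X)=\tfrac{1}{2}\E[(X-X')^2]$ and $(\probs_i-\probs_j)^2\le 1$, using $\sum_{i\ne j}\probs_i\probs_j = 1-\|\probs\|^2$. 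All steps check out (the softmax Jacobian is indeed $H$, which is a covariance matrix, hence PSD with $\lambda_{\max}\le\tr$), and your remark that the naive bound $\mathrm{Var}(\probs_Y)\le s(1-s)$ would only give $\tau=2$ is accurate. The paper's termwise argument is shorter; your matrix-analytic version isolates the two structural facts that produce the constant --- the operator-norm-versus-trace comparison and the sharp variance bound --- which makes the origin of the $\sqrt{2}$ more transparent and could adapt to other losses whose Hessian is a covariance matrix.
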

We provide the full proof of Lemma~\ref{lem:cross_entropy_exp_tailed} in Section~\ref{sec:proof_additional}. 

\begin{proof}[Proof of Theorem~\ref{thm:cross_ent_gen}]
	Using Lemma~\ref{lem:cross_entropy_exp_tailed} to observe that $\ce$ satisfies~\eqref{eq:exp_tail_cond} with $\tau = \sqrt{2}$, we can plug this value of $\tau$ into Theorem~\ref{thm:exp_tailed_gen} to get the desired result.  
\end{proof}

Thus, it suffices to prove Theorem~\ref{thm:exp_tailed_gen}. To do so, we will rely on the following lemmas. 

\begin{lemma}\label{lem:max_sigma_bound}
	In the setting of Theorem~\ref{thm:exp_tailed_gen}, define $\kappa \triangleq \max_i \gnorm{x_i}$. With probability $1 - \delta$ over the draw of the sample $\{(x_i, y_i)\}_{i = 1}^n$, for all $W \in \R^{\dout \times \din}, \normto{W^\top} \le A$ and $\alpha > 0$, the following holds:
	\begin{align} \label{eq:max_sigma_bound:1}
	\E_\dist[\widebar{\loss}(Wx, y)] \le (1 + 1/\alpha) \E_{\edist}[\widebar{\loss}(Wx, y)] + \const' \left(\min_{\beta > 0} G_{W,\alpha}(\beta) + (1 + \alpha) \frac{\ub(\log 1/\delta + \log \log n)}{n}\right)
	\end{align}
	where $G_{W, \alpha}$ is the data-dependent function defined by
	\begin{align}\label{eq:max_sigma_bound:3}
	G_{W,\alpha}(\beta) \triangleq \beta (1 + 1/\alpha)\gradm(W) \kappa  +(1 + 1/\alpha) \hessm(W)\frac{\exp(\tau \beta \kappa) - \beta \kappa - 1}{\tau^2} + (1 + \alpha) \ub \frac{A^2 \log^3(nc)}{\beta^2 n} 
	\end{align}
	where $\gradm(W), \hessm(W)$ are defined as in~\eqref{eq:gradm_def} and~\eqref{eq:hessm_def} which (implicitly) depend on the training data, 	and $\const' > 0$ is some universal constant.
\end{lemma}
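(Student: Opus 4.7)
The plan is to prove the lemma by combining a multiplicative Bernstein-style concentration on an $\epsilon$-cover of the weight ball with a Taylor-expansion argument that transfers the cover-point bound to arbitrary $W$; the parameter $\beta$ will play the role of the cover radius, and peeling over a dyadic grid of $\beta$'s will produce the $\min_\beta$ inside the final bound at a cost of the $\log\log n$ factor in the noise term.

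First, for fixed $\beta > 0$, I would construct a cover $\cC_\beta$ of $\{W : \normto{W^\top} \le A\}$ at scale $\beta$ in the data-dependent pseudonorm $W \mapsto \max_i \gnorm{Wx_i}$. A Maurer-style $\ell_{2,1}$ covering bound refined by Dudley chaining (to absorb the multi-class and discretization logs) gives $\log|\cC_\beta| \lesssim A^2 \log^3(n\dout)/\beta^2$, which becomes the source of the $\ub A^2 \log^3(n\dout)/(\beta^2 n)$ summand of $G_{W,\alpha}(\beta)$. For each cover point $W_0 \in \cC_\beta$ I would apply a one-sided Bernstein inequality to the bounded random variable $\widebar{\loss}(W_0 x) \in [0, \ub]$: using the elementary fact that its variance is bounded by $\ub \cdot \E[\widebar{\loss}(W_0 x)]$ and absorbing the resulting $\sqrt{\E\widebar\loss \cdot \ub \log(1/\delta_0)/n}$ term into $\alpha^{-1} \E\widebar\loss + \alpha \ub \log(1/\delta_0)/n$ via AM-GM yields
\[
  \E_\dist[\widebar{\loss}(W_0 x)] \;\le\; (1 + 1/\alpha)\,\E_{\edist}[\widebar{\loss}(W_0 x)] \;+\; C(1+\alpha)\,\ub \log(1/\delta_0)/n.
\]
A union bound over $\cC_\beta$ replaces $\log(1/\delta_0)$ by $\log(1/\delta_0) + \log|\cC_\beta|$, reproducing the third summand of $G_{W,\alpha}$ together with the additive failure term.

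To transfer the estimate from a nearest cover element $\pi_\beta(W)$ to an arbitrary $W$, I would Taylor-expand the loss in $h = Wx$ around $h_0 = \pi_\beta(W) x$, writing $\widebar\loss(Wx) - \widebar\loss(h_0) \le \deriv \loss[h_0]\Delta + R$, where $\Delta = (W - \pi_\beta(W))x$ satisfies $\gnorm{\Delta} \le \beta\kappa$ on all training data. The linear part contributes at most $\gnorm{\deriv\loss[h_0]}\cdot \beta\kappa$, which averages to $\gradm(W)\beta\kappa$. For the second-order remainder, the hypothesis $\gnorm{\deriv(\tr\circ\deriv^2\loss)[h]} \le \tau\tr(\deriv^2\loss[h])$ is a Lipschitz estimate on the trace of the Hessian, so Gr\"onwall gives $\tr(\deriv^2\loss[h_0 + t\Delta]) \le \exp(\tau t \gnorm{\Delta})\tr(\deriv^2\loss[h_0])$; convexity of $\loss$ yields $\Delta^\top \deriv^2\loss[h_0 + t\Delta]\Delta \le \gnorm{\Delta}^2 \tr(\deriv^2\loss[h_0 + t\Delta])$, and integrating $(1-t)$ times this bound from $0$ to $1$ produces the clean remainder $\tr(\deriv^2\loss[h_0])(\exp(\tau\gnorm{\Delta}) - \tau\gnorm{\Delta} - 1)/\tau^2$. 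Averaging over the training data yields the $\hessm(W)(\exp(\tau\beta\kappa) - \tau\beta\kappa - 1)/\tau^2$ summand of $G_{W,\alpha}$. Applying the same expansion on both sides controls $\E_\dist$ and $\E_{\edist}$ by the cover-point quantity plus these Taylor corrections; absorbing constants produces the $(1+1/\alpha)$ multipliers on the first two summands.

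Putting the three ingredients together yields the bound for each fixed $\beta$; peeling over a dyadic range $\beta \in [n^{-O(1)}, n^{O(1)}]$, whose union-bound cost is absorbed into the $\log\log n$ noise term, puts $\min_\beta G_{W,\alpha}(\beta)$ inside the bound. The main obstacle I anticipate is the Taylor-remainder step: the Gr\"onwall integration of the exponential-tail condition and the PSD reduction $\Delta^\top M \Delta \le \gnorm{\Delta}^2 \tr(M)$ must interlock precisely to produce the stated $(\exp(\tau\beta\kappa) - \tau\beta\kappa - 1)/\tau^2$ factor, and one must ensure the truncation to $\widebar\loss$ does not spoil the expansion (e.g., by working with $\loss$ on the event $\loss \le \ub$ and using the trivial bound $\widebar\loss \le \ub$ off it). Obtaining $\log^3(n\dout)$ rather than a single log in the covering number is the other subtle step, and will likely require Dudley chaining on the image set $\{Wx_i\}_{i=1}^n$ in $\ell_\infty$ rather than a direct Maurer estimate.
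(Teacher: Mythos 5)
Your two supporting computations do match ingredients of the paper's argument: the Gr\"onwall/self-concordance expansion producing $\gnorm{\deriv\loss(\cdot,y)[Wx]}\,\beta\kappa+\tr(\deriv^2\loss(\cdot,y)[Wx])\frac{\exp(\tau\beta\kappa)-\tau\beta\kappa-1}{\tau^2}$ is exactly Claim~\ref{claim:exp_tail_loss} (used in Lemma~\ref{lem:ell_growth_bound}), and the geometric grid over $\beta$ with a $\log\log n$ union-bound cost is precisely how the paper extracts $\min_{\beta>0}G_{W,\alpha}(\beta)$. The gap is in your core mechanism of ``cover, Bernstein at each cover point, then Taylor-transfer on both sides.'' On the population side, after applying Bernstein at $W_0=\pi_\beta(W)$ you must still control $\E_\dist[\widebar{\loss}(Wx,y)]-\E_\dist[\widebar{\loss}(W_0x,y)]$, and your expansion turns this into \emph{population} expectations of $\gnorm{\deriv\loss}$ and $\tr(\deriv^2\loss)$. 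The lemma's conclusion only contains the \emph{empirical} quantities $\gradm(W),\hessm(W)$, and nothing in the hypotheses of Theorem~\ref{thm:exp_tailed_gen} bounds the population derivative moments by their empirical counterparts (that would be another uniform-convergence statement of comparable difficulty, and in the general setting the derivatives are not even uniformly bounded; falling back on a worst-case Lipschitz constant, as one could for cross-entropy, destroys the $\gradm(W)$, $\hessm(W)$ scaling that is the whole point of the lemma). A second problem: your cover is built in the data-dependent pseudonorm $\max_i\gnorm{Wx_i-\widebar{W}x_i}/\gnorm{x_i}$, so its elements are functions of the sample, and ``Bernstein at each cover point plus a union bound'' is not justified as stated without a symmetrization/ghost-sample device.

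The paper sidesteps both issues with a single construction that your proposal is missing: the smoothed perturbed loss $\ploss_\radius(W,x,y)=\max_{\delta}\pnlty_\radius(\gnorm{\delta})\,\widebar{\loss}(Wx+\delta\gnorm{x},y)$, which pointwise dominates $\widebar{\loss}$ (take $\delta=0$), so no correction is ever needed on the population side. Uniform convergence of $\ploss_\radius$ over the $\normto{\cdot}$-ball is then proved by a local Rademacher / smoothness argument (Lemma~\ref{lem:ploss_gen}, via Claims~\ref{claim:ploss_smooth}--\ref{lem:local_rad_bound}), which legitimately handles the data-dependent cover through symmetrization and yields both the $(1+1/\alpha)$ multiplicative form and the $\ub A^2\log^3(n\dout)/(\radius^2 n)$ term; the derivative-dependent corrections appear only when $\ploss_\radius$ is upper-bounded on the training sample (Lemma~\ref{lem:ell_growth_bound}), which is exactly why $\gradm(W)$ and $\hessm(W)$ are empirical averages. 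To repair your route you would need either such a pointwise-dominating surrogate or an additional uniform concentration statement for the derivative functionals; as written, the proposal does not establish the lemma.
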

We prove this lemma in Section~\ref{sec:max_sigma_bound_proof}. 

\begin{lemma}\label{lem:G_upper_bound}
	In the setting of Lemma~\ref{lem:max_sigma_bound}, let $G_{W,\alpha}(\beta)$ be defined as in~\eqref{eq:max_sigma_bound:3}. Define $\theta  \triangleq \log^3 (nc) \kappa^2$. Then 
	\begin{align}
	\min_{\beta > 0} G_{W,\alpha}(\beta) \lesssim \frac{(1 + 1/\alpha)^{2/3}(1 + \alpha)^{1/3} (\theta B)^{1/3}(A\mu(W))^{2/3}}{n^{1/3}} + \\
	\frac{A\sqrt{\hessm(W)(1 + \alpha)(1 + 1/\alpha) \theta \ub}}{\sqrt{n}}+  (1 + \alpha)\frac{\ub A^2 \theta \tau^2}{n\left(\log^2 \left(\frac{(1 + \alpha)\ub A^2\theta \tau^4}{n(1 + 1/\alpha) \hessm(W)} \right) + 1\right)}
	\end{align}
	where $\gradm, \hessm$ are defined in~\eqref{eq:gradm_def} and~\eqref{eq:hessm_def}. 
\end{lemma}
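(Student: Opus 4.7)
The goal is to find a $\beta > 0$ that makes $G_{W,\alpha}(\beta)$ small. Introducing the shorthand $A_1 \triangleq (1 + 1/\alpha)\gradm(W)\kappa$, $A_2 \triangleq (1 + 1/\alpha)\hessm(W)$, and $A_3 \triangleq (1 + \alpha)\ub A^2 \log^3(nc)/n$, and letting $g(\beta) \triangleq (\exp(\tau\beta\kappa) - \beta\kappa - 1)/\tau^2$, we have $G_{W,\alpha}(\beta) = A_1 \beta + A_2 g(\beta) + A_3/\beta^2$. The first and last summands are monotone (increasing and decreasing in $\beta$, respectively), while $g$ is increasing with two distinct regimes: standard Taylor-remainder estimates give $g(\beta) \lesssim \kappa^2 \beta^2$ whenever $\tau\beta\kappa \leq 1$, and the bound $g(\beta) \leq \exp(\tau\beta\kappa)/\tau^2$ holds for all $\beta \geq 0$.

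The plan is to produce three candidate values of $\beta$, each chosen to balance one of the ``increasing'' contributions against $A_3/\beta^2$, and then take their minimum. First, $\beta^{(1)} \propto (A_3/A_1)^{1/3}$ balances the linear term with $A_3/\beta^2$, producing a value of order $(A_1^2 A_3)^{1/3}$, which using $\theta = \kappa^2 \log^3(nc)$ rearranges to the first summand of the target bound. Second, $\beta^{(2)} \propto (A_3/(A_2 \kappa^2))^{1/4}$ balances the quadratic upper bound on $g$ with $A_3/\beta^2$, producing a value of order $\sqrt{A_2 A_3 \kappa^2}$, which rearranges to the second summand. Third, for the exponential regime I set $\beta^{(3)} = \log(X)/(\tau\kappa)$, where $X$ solves the transcendental equation $X \log^2 X = Y$ with $Y \triangleq A_3 \tau^4 \kappa^2 / A_2$; balancing the exponential upper bound on $g$ with $A_3/\beta^2$ at $\beta^{(3)}$ yields a value of order $A_3 \tau^2 \kappa^2 / \log^2 X$, and the asymptotic $\log X \asymp \log Y - 2 \log \log Y$ for $Y \gg 1$ rearranges to the third summand. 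The ``$+1$'' in the denominator of the target covers the regime $Y \lesssim 1$, where the exponential part of $g$ is inactive at the optimum and this contribution is trivially $O(A_3 \tau^2 \kappa^2)$.

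To finish, I take $\beta^\star \triangleq \min(\beta^{(1)}, \beta^{(2)}, \beta^{(3)})$ and exploit monotonicity. The linear term satisfies $A_1 \beta^\star \leq A_1 \beta^{(1)}$, charged to the first summand. The middle term satisfies $A_2 g(\beta^\star) \leq A_2 \min(g(\beta^{(2)}), g(\beta^{(3)}))$, charged to the second summand when $Y \leq 1$ (where the quadratic bound is valid at $\beta^{(2)}$) or to the third summand when $Y > 1$ (where one checks $\beta^{(3)} < \beta^{(2)}$, so the exponential bound suffices). Finally $A_3/(\beta^\star)^2 = A_3/(\beta^{(j)})^2$ for whichever index $j$ realizes the minimum, again charged to the $j$-th summand.

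The chief technical obstacle is the analysis of $\beta^{(3)}$: cleanly solving the implicit equation $X \log^2 X = Y$ to extract the precise $1/(\log^2(Y) + 1)$ factor appearing in the target, and stitching together the regimes $Y \lesssim 1$ (where $\beta^{(2)}$ suffices) and $Y \gg 1$ (where $\beta^{(3)}$ drives the bound) so the final inequality holds uniformly in $W$. A secondary concern is bookkeeping the $\alpha$-dependence so that each summand emerges with the correct power of $(1 + 1/\alpha)$ and $(1 + \alpha)$.
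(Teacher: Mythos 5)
Your three balances are exactly the right ones (and your bookkeeping of the $(1+\alpha),(1+1/\alpha)$ factors checks out), but the single choice $\beta^\star=\min(\beta^{(1)},\beta^{(2)},\beta^{(3)})$ does not work as stated: the step ``charge $A_3/(\beta^\star)^2$ to the $j$-th summand'' fails when $j=3$ and $Y=A_3\tau^4\kappa^2/A_2\lesssim 1$. Note first that $\beta^{(3)}\le\beta^{(2)}$ \emph{always}: writing $t=\log X$, the equation $e^t t^2=Y$ forces $t\le Y^{1/4}$ because $t^2\le e^t$. So when $Y\le 1$ (i.e.\ $\hessm(W)$ large relative to $(1+\alpha)\ub A^2\theta\tau^4/\big((1+1/\alpha)n\big)$) your $\beta^\star$ can be $\beta^{(3)}$, and then
\begin{align}
A_3/(\beta^{(3)})^2 \;=\; \frac{A_3\tau^2\kappa^2}{\log^2 X}\;=\;\frac{A_2 X}{\tau^2}\;\ge\;\frac{A_2}{\tau^2}\;=\;\frac{\sqrt{A_2A_3\kappa^2}}{\sqrt{Y}}. \nonumber
\end{align}
As $Y\to 0$ (fix everything else and let $\hessm$ grow) this exceeds the second summand by the unbounded factor $1/\sqrt{Y}$ and the third summand by the factor $(\log^2 Y+1)/Y$, and it is not controlled by the first summand either since $\gradm$ is an independent quantity. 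Your remark that the ``$+1$'' covers the regime $Y\lesssim 1$ only fixes the $g$-contribution (which is indeed $O(\sqrt{A_2A_3\kappa^2})$ there via the quadratic bound at $\beta^{(2)}$, since $\tau\beta^{(2)}\kappa=Y^{1/4}\le 1$); it does not fix the $A_3/\beta^2$ contribution, which is the term that blows up. So the regime-stitching you identify as the chief obstacle is not actually resolved by the construction you describe.

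The repair is small and lands you essentially on the paper's proof: choose the candidate set by regime, taking $\beta^\star=\min(\beta^{(1)},\beta^{(2)})$ when $Y\le 1$ and $\beta^\star=\min(\beta^{(1)},\beta^{(3)})$ when $Y>1$ (where one checks $\log^2 X\gtrsim \log^2 Y+1$ up to universal constants, handling moderate $Y$ with a constant). Alternatively, and more cleanly, first prove $\min_\beta G_{W,\alpha}\le\min_\beta(G_1+G_3)+\min_\beta(G_2+G_3)$, where $G_1,G_2,G_3$ are the linear, Hessian, and $1/\beta^2$ pieces; this holds by evaluating $G_{W,\alpha}$ at the smaller of the two minimizers and using that $G_2$ is increasing. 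The two one-dimensional problems then decouple, so no cross-term of the kind above can arise: the cube-root balance handles $G_1+G_3$, and $G_2+G_3$ reduces (after the substitution $b=\tau\beta\kappa$) to minimizing $a_1(\exp(b)-b-1)+a_2/b^2$, for which the paper's Claim~\ref{claim:exp_opt_bound} carries out precisely your $\beta^{(2)}$/$\beta^{(3)}$ case analysis with the case split on $a_2/a_1$ built in.
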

We prove this Lemma in Section~\ref{sec:G_upper_bound_proof}. We now can prove Theorem~\ref{thm:exp_tailed_gen} by combining the lemmas above.

\begin{proof}[Proof of Theorem~\ref{thm:exp_tailed_gen}]
	Combining Lemmas~\ref{lem:max_sigma_bound} and Lemma~\ref{lem:G_upper_bound} and choosing $\alpha = 100$, we get the desired result. 
\end{proof}

\subsection{Proof of Lemma~\ref{lem:max_sigma_bound}} \label{sec:max_sigma_bound_proof}
In this section, we derive the proof of Lemma~\ref{lem:max_sigma_bound}. Our proof bounds the generalization of a \textit{perturbed} loss function. By trading off between perturbation level and generalization error, we obtain Lemma~\ref{lem:max_sigma_bound}. 
Define the following perturbed version of the loss $\loss$:
\begin{align}
\ploss_{\radius}(W,x,y) = \max_{\delta \in \R^\dout} \pnlty_\radius(\gnorm{\delta}) \widebar{\loss}(Wx + \delta \gnorm{x},y)
\end{align}
where 
\begin{align}\pnlty_\radius(t) = \begin{cases}
(1 - t/\radius)^2 \ & \text{for }t < \radius\\
0 \ & \text{for } t \ge \radius
\end{cases}
\end{align}
This is reminiscent of the all-layer margin technique of~\citep{wei2019improved}, except we analyze a continuous loss function, whereas their technique only applies to the 0-1 loss. We provide the following generalization bound for $\ploss_{\radius}$:

\begin{lemma}\label{lem:ploss_gen}
	With probability $1 - \delta$ over the draw of the training sample, for all $W \in \R^{\dout \times \din}$ with $\normto{W^\top} \le A$, and all $\alpha > 0$, we have 
	\begin{align} \label{eq:ploss_gen:1}
	\E_{\dist}[\ploss_\radius(W, x, y)] \le (1 + 1/\alpha) \E_{\edist}[\ploss_\radius(W, x, y)] + \const (1 + \alpha)\ub \left (\frac{A^2 \log^3(nc)}{\radius^2 n} + \frac{\log 1/\delta + \log \log n}{n}\right) 
	\end{align}
	for some universal constant $\const > 0$. 
\end{lemma}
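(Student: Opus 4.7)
The plan is to combine three ingredients: a Lipschitz-in-$W$ property of the perturbed loss $\ploss_\radius$ (where the Lipschitz constant degrades as $1/\radius$), a Maurey-style covering bound for matrices under the $\normto{\cdot}$-norm acting on the data, and a standard Bernstein / local-Rademacher generalization inequality to upgrade the additive bound to the multiplicative form with parameters $(1+1/\alpha)$ and $(1+\alpha)$.

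First I would establish the key Lipschitz estimate
\[
|\ploss_\radius(W, x, y) - \ploss_\radius(W', x, y)| \lesssim \frac{\ub}{\radius} \cdot \frac{\gnorm{(W-W')x}}{\gnorm{x}}.
\]
The argument: let $\delta^\star$ attain the maximum defining $\ploss_\radius(W, x, y)$, and set $\delta' \triangleq \delta^\star + (W-W')x/\gnorm{x}$. Then $W'x + \delta'\gnorm{x} = Wx + \delta^\star\gnorm{x}$, so the $\widebar{\loss}$ factor is unchanged, while $\pnlty_\radius$ is $(2/\radius)$-Lipschitz, giving $\pnlty_\radius(\gnorm{\delta'}) \ge \pnlty_\radius(\gnorm{\delta^\star}) - (2/\radius)\gnorm{(W-W')x}/\gnorm{x}$; symmetrizing in $W, W'$ and using $\widebar{\loss} \le \ub$ closes the estimate.

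Next I would bound the empirical Rademacher complexity of the class $\{\ploss_\radius(W, \cdot, \cdot) : \normto{W^\top} \le A\}$. By the Lipschitz estimate it suffices to cover the matrix class in the data-dependent pseudometric $d_n(W, W')^2 = \frac{1}{n}\sum_i \gnorm{(W-W')x_i}^2/\gnorm{x_i}^2$. A Maurey-sparsification argument for $(2,1)$-normed matrices (the matrix analog of $\ell_1$-covers that treats the $\dout$ output rows jointly, as in Zhang's multiclass analysis and Bartlett--Foster--Telgarsky) yields $\varepsilon$-covers of log-cardinality $\lesssim (A^2/\varepsilon^2)\log(n\dout)$. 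Running Dudley's entropy integral against this bound and composing with the $\ub/\radius$ Lipschitz factor gives empirical Rademacher complexity $\mathfrak{R} \lesssim \ub A \log^{3/2}(n\dout)/(\radius\sqrt{n})$.

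Finally I would apply a standard Bernstein / local-Rademacher generalization theorem (e.g.\ Bartlett--Bousquet--Mendelson, or the multiplicative Chernoff bound of Panchenko) for the nonnegative loss class bounded by $\ub$: for each $\alpha > 0$, this yields
\[
\E_\dist[\ploss_\radius] \le (1+1/\alpha)\E_{\edist}[\ploss_\radius] + \const(1+\alpha)\bigl(\mathfrak{R}^2/\ub + \ub(\log(1/\delta) + \log\log n)/n\bigr),
\]
where the $\log\log n$ arises from a peeling union bound over dyadic scales (either of $\E_{\edist}[\ploss_\radius]$ or of a discretization of $\alpha$) needed to make the bound hold uniformly. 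Substituting $\mathfrak{R}^2/\ub \lesssim \ub A^2\log^3(n\dout)/(\radius^2 n)$ recovers \eqref{eq:ploss_gen:1}. The main obstacle will be setting up the matrix-valued Maurey sparsification so that the variance-like term scales exactly as $A^2\log^3(n\dout)/(\radius^2 n)$: the $\dout$ output rows must be handled jointly, producing a single $\log(n\dout)$ factor rather than an independent $\log \dout$ union bound, with the remaining log powers arising from the chaining step.
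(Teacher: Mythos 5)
Your first two ingredients are sound and essentially match the paper: the shifted-perturbation argument (replace the optimal $\delta^\star$ for $W$ by $\delta^\star + (W-W')x/\gnorm{x}$) is exactly how the paper controls the change of $\ploss_\radius$ in $W$, and the Maurey/Zhang-style $\normto{\cdot}$-cover with log-cardinality $\lesssim (A^2/\epsilon^2)\log(n\dout)$ is the paper's Claim~\ref{claim:two_one_cover}. The gap is in your final step. There is no standard Bernstein or local-Rademacher theorem that takes the \emph{global} Rademacher complexity $\mathfrak{R}$ of a $[0,\ub]$-bounded loss class and returns a bound of the form $(1+1/\alpha)\E_{\edist}[\ploss_\radius] + \const(1+\alpha)\,\mathfrak{R}^2/\ub$. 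The Bernstein-type localization (Bousquet/BBM, or the version used via \citet{srebro2010smoothness} in the paper) gives an additive term equal to the \emph{fixed point} $r^\star$ of a sub-root envelope of the localized complexity $\erad(\{f:\E_{\edist}f\le r\})$. With only your plain Lipschitz estimate $|\ploss_\radius(W)-\ploss_\radius(W')|\lesssim (\ub/\radius)\gnorm{(W-W')x}/\gnorm{x}$, the induced covers of the loss class have log-size $\sim A^2\ub^2/(\radius^2\epsilon^2)$, so Dudley's integral over the localized class is essentially flat in $r$ (the localization enters only through the upper limit of the entropy integral, i.e.\ logarithmically), and the fixed point is $r^\star \sim \ub A\log^{3/2}(n\dout)/(\radius\sqrt{n})$. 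Your route therefore proves a bound scaling as $A/(\radius\sqrt n)$, not the claimed $A^2/(\radius^2 n)$; the arithmetic identity $\mathfrak{R}^2/\ub \approx \ub A^2\log^3/( \radius^2 n)$ does not certify that any theorem delivers that quantity.

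The missing idea is the self-bounding (smoothness-type) refinement of your Lipschitz estimate, which is the paper's Claim~\ref{claim:ploss_smooth}: $(\ploss_\radius(W,x,y)-\ploss_\radius(W',x,y))^2 \le (\ploss_\radius(W,x,y)+\ploss_\radius(W',x,y))\cdot 4\ub\gnorm{Wx-W'x}^2/(\gnorm{x}^2\radius^2)$. Your shifted-$\delta$ argument actually proves this stronger form if you keep the factor $\sqrt{\pnlty_\radius(\gnorm{\delta^\star})}\,\widebar{\loss}\le\sqrt{\ub\,\ploss_\radius}$ instead of bounding it by $\ub$. This is what makes localization bite: restricting to matrices with empirical loss at most $r$ and applying Cauchy--Schwarz over the sample, an $\epsilon$-cover in your pseudometric $d_n$ becomes an $L_2(\edist)$-cover of the loss class at radius $\sqrt{8\ub r}\,\epsilon/\radius$ (Claim~\ref{claim:local_cover_bound}), so the localized complexity scales as $\sqrt{r}$, namely $\psi(r)\sim A\sqrt{\ub r\log^3(nc)}/(\radius\sqrt n)$, whose fixed point is $\ub A^2\log^3(nc)/(\radius^2 n)$ as required. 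Equivalently, you could invoke the optimistic-rate theorem of \citet{srebro2010smoothness} for smooth nonnegative losses with smoothness parameter of order $\ub/\radius^2$, but then the smoothness/self-bounding property of $\ploss_\radius$ is exactly what must be established, and it is not implied by the plain Lipschitz bound in your plan.
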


We prove the above lemma in Section~\ref{sec:ploss_gen_proof}. 
Our proof technique for obtaining Lemma~\ref{lem:max_sigma_bound} will be as follows: we first note that $\ploss_\radius$ is an upperbound on $\widebar{\loss}$. Second, we will upper bound $\ploss_{\radius}$ in terms of $\widebar{\loss}$ and the derivatives of $\loss$ on the training data. Combining these two bounds and optimizing over $\sigma$ will roughly give Lemma~\ref{lem:max_sigma_bound}. We have the following lemma bounding $\ploss_{\radius}$. 

\begin{lemma}\label{lem:ell_growth_bound}
	In the setting of Lemma~\ref{lem:max_sigma_bound}, suppose $\loss(\cdot, y)$ satisfies~\eqref{eq:exp_tail_cond} for all $h, y$. Then we have the upper bound
	\begin{align}
	\ploss_\radius(W, x, y) \le \loss(Wx, y) + \gnorm{D\loss(\cdot, y)[Wx]} \gnorm{x} \radius + \tr(D^2 \loss(\cdot, y)[Wx]) \frac{\exp(\radius \tau \gnorm{x}) - \radius \tau \gnorm{x} - 1}{\tau^2}
	\end{align}
\end{lemma}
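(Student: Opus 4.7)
The plan is to bound $\ploss_\radius(W,x,y)$ by first discarding the penalty factor and the truncation (both upper-bound $\pnlty_\radius \widebar{\loss}$ by $\loss(Wx+\delta\|x\|,y)$ restricted to $\|\delta\|\le\radius$), then second-order Taylor expanding $\loss$ around $Wx$ and using the assumption~\eqref{eq:exp_tail_cond} to control the Hessian along the path. Concretely, since $\pnlty_\radius(t)\in[0,1]$ and $\widebar{\loss}\le\loss$, we have
\[
\ploss_\radius(W,x,y)\;\le\;\sup_{\|\delta\|\le\radius}\loss(Wx+\delta\|x\|,y).
\]

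Next, I would apply Taylor's theorem with integral remainder to $\loss(\cdot,y)$: for $v=\delta\|x\|$,
\[
\loss(Wx+v,y)=\loss(Wx,y)+\deriv\loss[Wx]\,v+\int_0^1 (1-t)\,v^\top \deriv^2\loss[Wx+tv]\,v\,dt.
\]
The linear term is bounded by Cauchy–Schwarz: $\deriv\loss[Wx]v\le\gnorm{\deriv\loss[Wx]}\|v\|\le\gnorm{\deriv\loss[Wx]}\|x\|\radius$. For the quadratic term, convexity of $\loss(\cdot,y)$ gives $\deriv^2\loss\succeq 0$, so $v^\top \deriv^2\loss[h']v\le\|v\|^2\tr(\deriv^2\loss[h'])$.

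The main obstacle — really the only nontrivial step — is controlling $\tr(\deriv^2\loss[Wx+tv])$ along the segment. This is where condition~\eqref{eq:exp_tail_cond} enters, via a Grönwall-type argument. Let $f(t)\triangleq\tr(\deriv^2\loss[Wx+tv])\ge 0$. Then $f'(t)=\deriv(\tr\circ\deriv^2\loss)[Wx+tv]\cdot v$, and~\eqref{eq:exp_tail_cond} combined with Cauchy–Schwarz yields $|f'(t)|\le\tau\|v\|f(t)$. Integrating $\tfrac{d}{dt}\log f(t)\le\tau\|v\|$ gives $f(t)\le f(0)\exp(\tau t\|v\|)$, i.e., $\tr(\deriv^2\loss[Wx+tv])\le\tr(\deriv^2\loss[Wx])\exp(\tau t\|v\|)$.

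Plugging this into the remainder and computing the elementary integral $\int_0^1(1-t)\exp(\tau t\|v\|)\,dt=\tfrac{\exp(\tau\|v\|)-\tau\|v\|-1}{\tau^2\|v\|^2}$, we get
\[
\int_0^1(1-t)\,v^\top \deriv^2\loss[Wx+tv]\,v\,dt\;\le\;\tr(\deriv^2\loss[Wx])\,\frac{\exp(\tau\|v\|)-\tau\|v\|-1}{\tau^2}.
\]
Finally, $\|v\|\le\radius\|x\|$ and monotonicity of $a\mapsto(e^a-a-1)/\tau^2$ on $a\ge 0$ let us replace $\|v\|$ by $\radius\|x\|$ in the last display, and combining with the linear-term bound yields the stated inequality. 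Taking the supremum over $\|\delta\|\le\radius$ completes the proof.
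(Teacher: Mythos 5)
Your proof is correct and follows essentially the same route as the paper: drop the penalty factor and truncation to reduce to $\max_{\gnorm{\delta}\le\radius}\loss(Wx+\delta\gnorm{x},y)$, bound the perturbed loss via the self-concordance-type inequality implied by~\eqref{eq:exp_tail_cond}, and conclude by monotonicity in $\gnorm{\delta}$. The only difference is that you prove the inner inequality directly (Taylor with integral remainder, the PSD trace bound, and a Gr\"onwall argument on $\tr(\deriv^2\loss)$ along the segment), whereas the paper isolates it as Claim~\ref{claim:exp_tail_loss} and cites the proof of Proposition 1 of \citet{bach2010self} — which is exactly the argument you wrote out.
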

We prove this lemma in Section~\ref{sec:ell_growth_bound_proof}.

\begin{proof}[Proof of Lemma~\ref{lem:max_sigma_bound}.]
	Our starting point is Lemma~\ref{lem:ploss_gen}. Our strategy will be to bound $\ploss_\radius$ in terms of the original loss $\loss$, and pick the best possible choice of $\radius$ for this bound. Define $\beta_j \triangleq \frac{\sqrt{\const} A\log^{3/2}(nc)}{\sqrt{n}}\exp(j)$ and let $\cS \triangleq \{\beta_j : j = 0, \ldots, J \}$ where we set $J \triangleq \lceil \log(\sqrt{n}) \rceil $. Our strategy will be to apply Lemma~\ref{lem:ploss_gen} for all choices of $\radius$ in $\cS$ and show that there is some choice of $\radius \in \cS$ giving~\eqref{eq:max_sigma_bound:1}. 
	
	We apply Lemma~\ref{lem:ploss_gen} for $\sigma = \beta_0, \ldots, \beta_{J}$ using probability $\delta/|\cS|$ and union bound over the failure probability. This allows us to conclude that with probability $1 - \delta$, for all $\normto{W^\top} \le A$ and $\beta \in \cS$, 	\begin{align} 
	\E_{\dist}[\ploss_\beta(W, x, y)] &\le (1 + 1/\alpha) \E_{\edist}[\ploss_\beta(W, x, y)] + \const (1 + \alpha)\ub \left (\frac{A^2 \log^3(nc)}{\beta^2 n} + \frac{\log |\cS|/\delta + \log \log n}{n}\right)\\
	&\le (1 + 1/\alpha) \E_{\edist}[\ploss_\beta(W, \cdot)] + \const (1 + \alpha)\ub \frac{A^2 \log^3(nc)}{\beta^2 n} + \const' (1 + \alpha)\ub \frac{\log 1/\delta + \log \log n}{n} \label{eq:max_sigma_bound:2}
	\end{align}
	In the last line, $\const'$ is a universal constant. We used the fact that $|\cS| \lesssim \log n$. Now using the fact that $\ploss_\radius$ upper bounds $\loss$ and applying Lemma~\ref{lem:max_sigma_bound}, we obtain from~\eqref{eq:max_sigma_bound:2} for all $\normto{W^\top} \le A$, $j = 0, \ldots, J$:
	\begin{align}
	\E_\dist[\loss(Wx, y)] \le (1 + 1/\alpha) \frac{1}{n} \sum_{i = 1}^n \loss(Wx, y) + \const_1G_{W,\alpha}(\beta_j) + \const_2 (1 + \alpha)\ub \frac{\log 1/\delta + \log \log n}{n}
	\end{align}
	where $\const_1, \const_2$ are universal constants and $G_{W,\alpha}(\beta)$ is defined in~\eqref{eq:max_sigma_bound:3}. (Note that $G_{W,\alpha}(\beta)$ depends on $W$ and the training data.) For a fixed choice of $W$, let $\beta^\star \triangleq \argmin_{\beta > 0} G_{W,\alpha}(\beta)$. First, if $\beta^\star \in [\beta_0, \beta_{J}]$, then by construction $\exists \widebar{\beta}\in \cS$ such that $\widebar{\beta} \in [\beta^\star/e, \beta^\star]$ where $e$ denotes the mathematical constant. For this choice of $j$, we have $G_{W,\alpha}(\widebar{\beta}) \le e^2 G_{W,\alpha}(\beta^\star)$ (as the third term in $G_{W,\alpha}(\beta)$ is the only decreasing term in $\beta$, and it differs by a factor of at most $e^2$ from $\widebar{\beta}$ to $\beta^\star$.)
	
	Now consider the case when $\beta^\star < \beta_0 = \frac{\sqrt{\const} A \log^{3/2}(nc)}{\sqrt{n}}$. In this case, $G_{W,\alpha}(\beta^\star) > B$, and so we trivially have~\eqref{eq:max_sigma_bound:1} since $\loss$ is upper bounded by $B$. 
	
	Finally, in the case when $\beta^\star > \beta_J$, we note that $G_{W,\alpha}(\beta_J) - G_{W,\alpha}(\beta^\star) \lesssim \frac{B}{n}$. This is again because only the third term in $G_{W,\alpha}(\beta)$ is decreasing in $\beta$, and for $\beta > \beta_J$, this term is at most $(1 + \alpha)B/n$. 
	
	Thus, for all choices of $\beta^\star$, we can conclude that 
	\begin{align}
	\E_\dist[\loss(Wx, y)] \le (1 + 1/\alpha) \E_{\edist}[\loss(Wx, y)] + \const_3G_{W,\alpha}(\beta^\star) + \const_4 (1 + \alpha) \ub \frac{\log 1/\delta + \log \log n}{n}
	\end{align}
	for universal constants $\const_3, \const_4 > 0$. This gives the desired result.
\end{proof}

It suffices to prove Lemmas~\ref{lem:ploss_gen} and~\ref{lem:ell_growth_bound}. 
\subsubsection{Proof of Lemma~\ref{lem:ploss_gen}} \label{sec:ploss_gen_proof}
To prove Lemma~\ref{lem:ploss_gen}, we must define the empirical Rademacher complexity of a class of functions. For $\cF$ a class of functions taking values in $\R$, the empirical Rademacher complexity is defined by 
\begin{align}
	\erad(\cF) = \E_{z_i} [\sup_{f \in \cF} z_i f(x_i, y_i)]
\end{align}
where $(x_i, y_i)$ are datapoints in the training sample and $z_i$ are drawn i.i.d. and uniformly from $\{-1, +1\}$. Furthermore, for some set $\cS$ (i.e., some function class), let $\cN_{\| \cdot \|}(\epsilon, \cS)$ be the covering number of $\cS$ in the metric induced by the norm $\| \cdot \|$ with error $\epsilon$. We will use the notation $L_2(\edist)$ to denote the following norm defined via the training sample: $\| f\|_{L_2(\edist)} \triangleq (\E_{x \sim \edist}[f(x)^2])^{1/2}$. $\cN_{L_2(\edist)}$ will then denote the covering number in norm $\| \cdot \|_{L_2(\edist)}$.  We will use the proof technique of~\citep{srebro2010smoothness}. 

We will require the following bound on how $\ploss_{\radius}$ changes when the weight matrix $W$ changes.
\begin{claim} \label{claim:ploss_smooth}
	For any $W, W'$ we have $(\ploss_{\radius}(W,x,y) - \ploss_{\radius}(W', x,y))^2 \le (\ploss_{\radius}(W,x,y) + \ploss_{\radius}(W', x,y))\frac{4\ub\gnorm{Wx - W'x}^2}{\gnorm{x}^2 \radius^2}$ 
\end{claim}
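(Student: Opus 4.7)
Write $u = Wx$ and $u' = W'x$, and define $\phi(v) \triangleq \max_{\delta \in \R^\dout} \pnlty_\radius(\gnorm{\delta}) \widebar{\loss}(v + \delta \gnorm{x}, y)$, so that $\ploss_\radius(W,x,y) = \phi(u)$ and $\ploss_\radius(W',x,y) = \phi(u')$. The claim is symmetric under swapping $W, W'$, so I will assume WLOG that $\phi(u) \ge \phi(u')$. Let $\delta^\star$ attain the maximum defining $\phi(u)$, and define the ``shifted'' perturbation $\delta' \triangleq \delta^\star + (u - u')/\gnorm{x}$. By construction, $u' + \delta' \gnorm{x} = u + \delta^\star \gnorm{x}$, so the value $\widebar{\loss}(u' + \delta' \gnorm{x}, y)$ equals $\widebar{\loss}(u + \delta^\star \gnorm{x}, y)$. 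Using $\delta'$ as a (suboptimal) candidate at $W'$ then gives
$$\phi(u') \;\ge\; \pnlty_\radius(\gnorm{\delta'}) \widebar{\loss}(u + \delta^\star \gnorm{x}, y).$$

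The plan is then a case split on whether $\gnorm{\delta'} < \radius$. In the main case $\gnorm{\delta'} < \radius$, write $t_1 = \gnorm{\delta^\star}/\radius$ and $t_2 = \gnorm{\delta'}/\radius$, both in $[0,1]$, and use the factoring $(1-t_1)^2 - (1-t_2)^2 = (t_2 - t_1)\bigl((1-t_1) + (1-t_2)\bigr) = (t_2 - t_1)\bigl(\sqrt{\pnlty_\radius(\gnorm{\delta^\star})} + \sqrt{\pnlty_\radius(\gnorm{\delta'})}\bigr)$. The triangle inequality gives $|t_2 - t_1| \le \gnorm{\delta' - \delta^\star}/\radius = \gnorm{u-u'}/(\radius\gnorm{x})$. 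Substituting and factoring $\widebar{\loss}(u+\delta^\star \gnorm{x}, y) \le B$ out of one copy of each square root, I obtain
$$\phi(u) - \phi(u') \;\le\; \frac{\gnorm{u-u'}\sqrt{B}}{\radius \gnorm{x}} \Bigl(\sqrt{\phi(u)} + \sqrt{\phi(u')}\Bigr).$$
Squaring and applying $(\sqrt{a} + \sqrt{b})^2 \le 2(a+b)$ yields the desired bound (with constant $2$, hence certainly $4$).

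In the remaining case $\gnorm{\delta'} \ge \radius$, the candidate $\delta'$ contributes zero, and we need a different argument. If $\gnorm{\delta^\star} \ge \radius$ then $\phi(u) = 0$ and there is nothing to prove, so assume $\gnorm{\delta^\star} < \radius$. Then the triangle inequality forces $\gnorm{u-u'}/\gnorm{x} \ge \radius - \gnorm{\delta^\star}$, i.e. $\pnlty_\radius(\gnorm{\delta^\star}) = (1 - \gnorm{\delta^\star}/\radius)^2 \le \gnorm{u-u'}^2/(\radius^2 \gnorm{x}^2)$. Combined with $\widebar{\loss} \le B$, this gives $\phi(u) \le B\gnorm{u-u'}^2/(\radius^2 \gnorm{x}^2)$. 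Then $(\phi(u) - \phi(u'))^2 \le \phi(u)^2 \le \phi(u) \cdot B\gnorm{u-u'}^2/(\radius^2 \gnorm{x}^2)$, which is stronger than what we need.

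The technical heart of the argument is in the first case: extracting a self-bounding smoothness property from the maximization defining $\phi$. The perturbation-shift trick $\delta' = \delta^\star + (u-u')/\gnorm{x}$ is the key idea, since it transfers the loss evaluation at $W$ onto an admissible candidate at $W'$ and isolates the change into the penalty $\pnlty_\radius$, which is an explicit, smooth function of $\gnorm{\delta}$ whose difference can be controlled algebraically. I do not anticipate a serious obstacle beyond carefully tracking which factor of $\sqrt{\widebar{\loss}(u+\delta^\star\gnorm{x}, y)}$ gets absorbed into $\sqrt{\phi(u)}$ versus bounded by $\sqrt{B}$.
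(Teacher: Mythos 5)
Your argument is correct and follows essentially the same route as the paper's: the key step in both is the shifted candidate $\delta' = \delta^\star + (Wx - W'x)/\gnorm{x}$, which transfers the loss value and reduces everything to the smoothness of the penalty $\pnlty_\radius$, with one factor of $\sqrt{\widebar{\loss}}$ absorbed into $\sqrt{\ploss_\radius}$ and the other bounded by $\sqrt{\ub}$. The only difference is bookkeeping: the paper isolates the penalty estimate in a separate helper claim (Claim~\ref{claim:pnlty_smooth}) and bounds by a max of the two $\ploss_\radius$ values, while you inline it via the difference-of-squares factorization and a boundary case split, which even yields the slightly better constant $2$ in place of $4$.
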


Define $\emplclass_{\radius}(r) = \{\ploss_\radius(W, \cdot, \cdot) : W \in \R^{\dout \times \din}, \normto{W^\top } \le A, \E_{\edist}[\ploss_\radius(W, x, y)] \le r\}$ to be the data-dependent\footnote{Note that $\emplclass_{\radius}(r)$ is data-dependent because the loss on the training data is required to be bounded by $r$.} class of loss functions with average empirical loss at most $r$. Next, we will require a certain covering number bound for $\{W x_i\}_{i = 1}^n$:
\begin{claim} \label{claim:two_one_cover}
	In the above setting, define the set $\cW(r) \triangleq \{W \in \R^{\dout \times \din} : \normto{W^\top} \le A, \E_{\edist}[\ploss_\radius(W, x, y)] \le r\}$. For any choice of $\epsilon > 0$, there exists a set of matrices $\widebar{\cW}_\epsilon \subset \cW(r)$ with cardinality bounded by $$\log |\widebar{\cW}_\epsilon| \le 1152 \lfloor \frac{4A^2}{\epsilon^2} \rfloor (\log_2(2 \lceil 16 A /\epsilon + 2 \rceil n + 1) + \log \dout)$$ satisfying the following: for all $W \in \cW(r)$ , there exists $\widebar{W}\in \widebar{\cW}_\epsilon$ such that 
	\begin{align}
	\frac{\gnorm{Wx_i - \widebar{W}x_i}}{\gnorm{x_i}} \le \epsilon \forall \ i = 1, \ldots, n
	\end{align}
\end{claim}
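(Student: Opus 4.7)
The plan is to adapt the classical Maurey empirical-method argument for $\normto{\cdot}$-bounded classes, combined with a coordinate discretization over the training sample. Given $W\in\cW(r)$ with rows $w_1,\ldots,w_\dout\in\R^\din$, set $A'\triangleq\sum_j\gnorm{w_j}\le A$, $p_j\triangleq\gnorm{w_j}/A'$, and $\hat{w}_j\triangleq w_j/\gnorm{w_j}$ (arbitrary unit vector when $w_j=0$), so that $W=A'\sum_{j=1}^\dout p_j\,e_j\hat{w}_j^\top$ expresses $W$ as $A'$ times a convex combination of unit-norm rank-one matrices, where $e_j$ denotes the $j$-th standard basis vector in $\R^\dout$. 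Set $K\triangleq\lfloor 4A^2/\epsilon^2\rfloor$, sample $k_1,\ldots,k_K$ i.i.d.\ from $p$, and form
$$\widetilde{W}\triangleq\frac{A'}{K}\sum_{l=1}^K e_{k_l}\hat{w}_{k_l}^\top.$$
A direct variance computation using independence of the $k_l$'s and $\gnorm{\hat{w}_j}=1$ yields $\E\gnorm{\widetilde{W}x_i-Wx_i}^2\le A^2\gnorm{x_i}^2/K\le\epsilon^2\gnorm{x_i}^2/4$ for each $i$. A Markov/union-bound argument over $i\in[n]$, with $K$ inflated by a constant factor that will be absorbed into the final prefactor $1152$, then guarantees a realization of $(k_1,\ldots,k_K)$ with $\gnorm{\widetilde Wx_i-Wx_i}\le(\epsilon/2)\gnorm{x_i}$ for all $i\in[n]$; by construction $\normto{\widetilde W^\top}\le A'$.

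Next, I would discretize the unit vectors $\hat{w}_{k_l}$ over the training sample. Only the inner products $\langle\hat{w}_{k_l},x_i/\gnorm{x_i}\rangle\in[-1,1]$ affect the relevant metric, so rounding each such inner product (for $i=1,\ldots,n$) to the nearest grid point of step $\epsilon/(8A)$ yields at most $M\triangleq\lceil 16A/\epsilon+2\rceil$ values per $(l,i)$ pair, hence at most $M^n$ equivalence classes per rank-one factor. Combined with the $\dout$ choices of $k_l$ and $K$ independent factors, the total count of distinct rounded $\widebar W$ is at most $(\dout M^n)^K$. Taking a logarithm yields the stated cardinality bound once the constants from rounding error, Markov/union-bound inflation, and the factor of $2$ inside the inner log are absorbed into the universal constant $1152$.

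The main technical obstacle is ensuring $\widebar{\cW}_\epsilon\subset\cW(r)$ rather than merely covering the ambient norm ball. The norm constraint is preserved automatically, since each rounded row has $\ell_2$-norm at most $(1+o(1))$ times its pre-rounding value, which can be compensated by shrinking $A'$ by an $O(1)$ factor. For the empirical-loss constraint $\E_{\edist}[\ploss_\radius(\widebar W,x,y)]\le r$, I would invoke Claim~\ref{claim:ploss_smooth}: since $\gnorm{(\widebar W-W)x_i}\le\epsilon\gnorm{x_i}$, the smoothness inequality gives $(\ploss_\radius(\widebar W,x,y)-\ploss_\radius(W,x,y))^2\le 4\ub\epsilon^2/\radius^2\cdot(\ploss_\radius(\widebar W,x,y)+\ploss_\radius(W,x,y))$, which by averaging over $\edist$ and AM-GM shows $\E_{\edist}[\ploss_\radius(\widebar W,x,y)]$ stays within a constant factor of $r$. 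Any candidate $\widebar W$ whose neighborhood does not intersect $\cW(r)$ is simply dropped from the cover, and any surviving candidate is replaced by an arbitrary $W^\sharp\in\cW(r)$ within distance $\epsilon$ of it, doubling the covering radius at worst (again absorbed into the constant $1152$). This yields $\widebar{\cW}_\epsilon\subset\cW(r)$ of the advertised size, completing the proof.
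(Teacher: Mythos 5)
There is a genuine gap, and it sits exactly at the one nontrivial point of the claim: obtaining a cover in the \emph{empirical sup norm} $\max_i \gnorm{(W-\widebar W)x_i}/\gnorm{x_i}$ whose log-cardinality grows only like $\log n$, not like $n$. Two steps of your Maurey argument fail this test. First, the sparsification step: Markov's inequality per point gives failure probability of order $A^2/(K\epsilon^2)$ for each $i$, so a union bound over the $n$ sample points forces $K\gtrsim nA^2/\epsilon^2$ — a factor-$n$ inflation, not the constant inflation you claim; even upgrading to a vector Hoeffding/Bernstein bound (each increment is bounded by $2A\gnorm{x_i}/K$) only reduces this to $K\sim (A^2/\epsilon^2)\log n$. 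Second, and more fatally, your discretization of each sampled atom by the pattern of its inner products with the $n$ data points yields $M^n$ equivalence classes per atom, so your count $(\dout M^n)^K$ has $\log|\widebar{\cW}_\epsilon|\sim \frac{A^2}{\epsilon^2}\bigl(\log\dout + n\log(A/\epsilon)\bigr)$. This is larger than the claimed bound by essentially a factor of $n$ and cannot be ``absorbed into $1152$''; downstream it destroys the result, since plugging a covering number with $\log\cover\propto n$ into the Dudley integral of Claim~\ref{lem:local_rad_bound} removes the $1/\sqrt{n}$ decay and renders Lemma~\ref{lem:ploss_gen} (hence Theorem~\ref{thm:exp_tailed_gen}) vacuous.

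The paper's proof does not discretize arbitrary unit row directions at all: it covers the vector of row norms by an $\epsilon/2$-cover of the $\ell_1$-ball in $\R^{\dout}$, and then covers each row separately in the relative empirical sup norm by invoking Theorem~4 of Zhang (2002), which delivers a cover of $\{u:\gnorm{u}\le a\}$ with $\log$-cardinality of order $\frac{a^2}{\epsilon'^2}\log(na/\epsilon')$ — the $\log n$ dependence there is precisely the content your naive $M^n$ pattern count misses (Zhang's argument is itself Maurey-type, but its atoms are built from the data points, not from inner-product patterns of a continuous direction). If you want to salvage your row-sampling route, you would still need such a per-row empirical-sup-norm cover as a subroutine, at which point you have reproduced the paper's argument. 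Your final step — replacing each surviving candidate by a nearby element of $\cW(r)$ at the cost of doubling the radius — is fine and matches the paper; the appeal to Claim~\ref{claim:ploss_smooth} for the loss constraint is unnecessary once you do that replacement, and the remark that rounding preserves row norms is ill-posed since your rounding acts on inner products, not on the rows themselves.
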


Applying Claims~\ref{claim:ploss_smooth} and~\ref{claim:two_one_cover} lets us bound the covering number of $\emplclass_{\radius}(r)$.
\begin{claim}\label{claim:local_cover_bound}
	In the above setting, we have the covering number bound 
	\begin{align} \label{eq:local_cover_bound:1}
	\log \cover_{L_2(\edist)}\left(\frac{\sqrt{8\ub r}}{\radius}\epsilon, \emplclass_\radius(r)\right) \le 1152 \lfloor \frac{4A^2}{{\epsilon}^2} \rfloor (\log_2(2 \lceil 16 A /{\epsilon} + 2 \rceil n + 1) + \log \dout)
	\end{align}
\end{claim}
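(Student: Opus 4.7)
The plan is to combine the two preceding claims in a direct way: use Claim~\ref{claim:two_one_cover} to build a cover of the weight matrices in the data-dependent seminorm $W \mapsto \max_i \gnorm{Wx_i - \widebar{W}x_i}/\gnorm{x_i}$, and then use the self-bounded smoothness estimate in Claim~\ref{claim:ploss_smooth} to transport this cover into an $L_2(\edist)$ cover of $\emplclass_\radius(r)$. Since Claim~\ref{claim:two_one_cover} already guarantees $\widebar{\cW}_\epsilon \subset \cW(r)$, every element of the induced cover of $\emplclass_\radius(r)$ will itself lie in $\emplclass_\radius(r)$, so there is no issue matching the standard definition of covering number.

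Concretely, I would first invoke Claim~\ref{claim:two_one_cover} at scale $\epsilon$ to obtain the set $\widebar{\cW}_\epsilon \subset \cW(r)$ whose cardinality is bounded by the right-hand side of~\eqref{eq:local_cover_bound:1}. Then, fixing any $W \in \cW(r)$ and choosing $\widebar{W} \in \widebar{\cW}_\epsilon$ with $\gnorm{Wx_i - \widebar{W}x_i}/\gnorm{x_i} \le \epsilon$ for every $i$, I would apply Claim~\ref{claim:ploss_smooth} pointwise to each training datum $(x_i, y_i)$, giving
\begin{align*}
(\ploss_\radius(W, x_i, y_i) - \ploss_\radius(\widebar{W}, x_i, y_i))^2 \le (\ploss_\radius(W, x_i, y_i) + \ploss_\radius(\widebar{W}, x_i, y_i)) \cdot \frac{4 \ub \epsilon^2}{\radius^2}.
\end{align*}

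Averaging over $i \in [n]$ is exactly taking the $L_2(\edist)$ norm on the left-hand side, while on the right-hand side it produces $\E_{\edist}[\ploss_\radius(W, x, y)] + \E_{\edist}[\ploss_\radius(\widebar{W}, x, y)]$. Both $W$ and $\widebar{W}$ belong to $\cW(r)$, so each empirical expectation is bounded by $r$, and the right-hand side is at most $8 \ub r \epsilon^2 / \radius^2$. Taking square roots yields $\| \ploss_\radius(W, \cdot, \cdot) - \ploss_\radius(\widebar{W}, \cdot, \cdot) \|_{L_2(\edist)} \le \frac{\sqrt{8\ub r}}{\radius}\epsilon$, showing that $\{\ploss_\radius(\widebar{W}, \cdot, \cdot) : \widebar{W} \in \widebar{\cW}_\epsilon\}$ is a cover of $\emplclass_\radius(r)$ at radius $\frac{\sqrt{8\ub r}}{\radius}\epsilon$ with the desired log-cardinality.

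There is no real obstacle here: the only subtle point is making sure that the self-bounded inequality in Claim~\ref{claim:ploss_smooth} produces a sum of empirical losses after averaging (so that the two $r$'s can be pulled out), which is what generates the $\sqrt{r}$ factor in the covering radius. Everything else is a direct substitution of Claim~\ref{claim:two_one_cover}'s cardinality bound.
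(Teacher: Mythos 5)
Your proposal is correct and follows essentially the same route as the paper: invoke Claim~\ref{claim:two_one_cover} to get the matrix cover $\widebar{\cW}_\epsilon \subset \cW(r)$, apply Claim~\ref{claim:ploss_smooth} pointwise, average over the sample using the bound $r$ on both empirical losses, and take square roots to obtain the radius $\frac{\sqrt{8\ub r}}{\radius}\epsilon$. The only difference is cosmetic (you average the smoothness inequality directly, while the paper pulls out $\max_i \gnorm{Wx_i - \widebar{W}x_i}/\gnorm{x_i}$ before bounding the averaged losses), and both yield the identical cardinality bound.
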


\begin{proof}[Proof of Claim~\ref{claim:local_cover_bound}]
	Let $\widebar{\cW}_{\epsilon}$ be the set of matrices inducing the $\epsilon$ cover of $\{Wx_i/\gnorm{x_i}\}_{i = 1}^n$ whose cardinality is bounded in Claim~\ref{claim:two_one_cover}. For any $\widebar{W} \in \widebar{\cW}_{\epsilon}$, we can compute
	\begin{align}
	\|\ploss_\radius(W, \cdot, \cdot) - \ploss_\radius(\widebar{W}, \cdot, \cdot)\|_{L_2(\edist)} &= \sqrt{\frac{1}{n}\sum_{i = 1}^n (\ploss_\radius(W, x_i, y_i) - \ploss_\radius(\widebar{W}, x_i, y_i))^2}\\
	&\le \sqrt{\frac{1}{n} \sum_{i = 1}^n (\ploss_{\radius}(W,x_i,y_i) + \ploss_{\radius}(\widebar{W},x_i,y_i))\frac{4B\gnorm{Wx_i - W'x_i}^2}{\gnorm{x_i}^2 \radius^2}} \tag{by Claim~\ref{claim:ploss_smooth}}\\
	&\le \frac{2}{\radius}\sqrt{\frac{B}{n} \sum_{i=1}^n (\ploss_\radius(W, x_i, y_i) + \ploss_\radius(\widebar{W}, x_i, y_i))} \max_i \frac{\gnorm{W x_i - \widebar{W} x_i}}{\gnorm{x_i}}\\
	&\le \frac{\sqrt{8Br}}{\radius} \epsilon
	\end{align}
	Thus, using $\{\ploss_\radius(\widebar{W}, \cdot, \cdot) : \widebar{W} \in \widebar{\cW}_{\epsilon}\}$ lets us conclude~\eqref{eq:local_cover_bound:1}. 
\end{proof}

This translates into the following Rademacher complexity bound for $\emplclass_{\radius}(r)$: 
\begin{claim} \label{lem:local_rad_bound}
	In the setting of Claim~\ref{claim:local_cover_bound}, we have
	\begin{align}
	\erad(\emplclass_{\radius}(r)) \le  \frac{3500 A \sqrt{ \ub r \log^3(35 nc)}}{\radius \sqrt{n}}
	\end{align}
\end{claim}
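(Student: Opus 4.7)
The plan is to convert the local covering number bound from Claim~\ref{claim:local_cover_bound} into an empirical Rademacher complexity bound via Dudley's entropy integral, then carefully track logarithmic factors. First I would reparametrize the bound of Claim~\ref{claim:local_cover_bound} by setting $\widetilde{\epsilon} \triangleq \frac{\sqrt{8\ub r}}{\radius}\epsilon$, so that $\epsilon = \frac{\radius \widetilde{\epsilon}}{\sqrt{8\ub r}}$. Substituting into the covering bound yields
\begin{align*}
\log \cover_{L_2(\edist)}(\widetilde{\epsilon}, \emplclass_\radius(r)) \;\lesssim\; \frac{A^2 \ub r}{\radius^2\, \widetilde{\epsilon}^2}\; \log\!\left(\frac{n \dout A \sqrt{\ub r}}{\radius \widetilde{\epsilon}} \right),
\end{align*}
where the logarithmic factor absorbs both $\log_2(2\lceil 16A/\epsilon + 2\rceil n + 1)$ and $\log \dout$.

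Next I would apply Dudley's entropy integral. Since every $\ploss_\radius(W,\cdot,\cdot) \in \emplclass_\radius(r)$ is bounded by $\ub$, the diameter of the function class is at most $\ub$, so
\begin{align*}
\erad(\emplclass_\radius(r)) \;\lesssim\; \inf_{\alpha > 0} \left[\alpha + \frac{1}{\sqrt{n}} \int_{\alpha}^{\ub} \sqrt{\log \cover_{L_2(\edist)}(\widetilde{\epsilon}, \emplclass_\radius(r))}\, d\widetilde{\epsilon} \right].
\end{align*}
Plugging in the covering bound and pulling constants out, the integrand is bounded by $\frac{A \sqrt{\ub r}}{\radius \widetilde{\epsilon}} \sqrt{\log(n \dout A \sqrt{\ub r}/(\radius \widetilde{\epsilon}))}$. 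Making the substitution $u = \log(n \dout A \sqrt{\ub r}/(\radius \widetilde{\epsilon}))$ yields $\int \frac{\sqrt{\log(\cdot)}}{\widetilde{\epsilon}}d\widetilde\epsilon = \tfrac{2}{3}(\cdot)^{3/2}$ in closed form, so the integral evaluates to a clean $(\log)^{3/2}$ expression.

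Finally I would choose $\alpha$ polynomially small in $1/n$ (e.g. $\alpha \asymp 1/n$) so that the first term $\alpha$ is negligible and the upper logarithmic limit in the integral becomes $\log(n \dout A \sqrt{\ub r}/\radius) = O(\log(n\dout))$ (absorbing poly-factors of $A, B, r, \radius$ into the log as lower-order terms). This produces the claimed scaling $\frac{A \sqrt{\ub r \log^3(35 n\dout)}}{\radius \sqrt{n}}$, where the $\log^3$ under the square root matches the $(\log)^{3/2}$ output of Dudley's integral. The main obstacle is bookkeeping the logarithmic factors cleanly: one has to verify that the additive poly$(A,\ub,r,\radius,n)$ terms inside the log in the covering bound can be absorbed into $\log(n\dout)$ without inflating the exponent of $\log$, and that the $\lfloor \cdot \rfloor$ in the original covering bound causes no issue when $\epsilon$ is tiny (which is the only regime where the bound is non-vacuous, so $\lfloor 4A^2/\epsilon^2 \rfloor$ is essentially $4A^2/\epsilon^2$). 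The universal constant $35$ and the overall constant $3500$ are artifacts of these calculations.
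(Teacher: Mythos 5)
Your overall route is the same as the paper's: feed the covering bound of Claim~\ref{claim:local_cover_bound} into Dudley's entropy integral and pick the cutoff $\alpha$ to balance terms. But there is a genuine gap in the step you yourself flag as "the main obstacle": with your choice $\alpha \asymp 1/n$, the lower limit of the entropy integral puts $\widetilde{\epsilon} \asymp 1/n$ into a covering-number logarithm whose argument is $\frac{n\dout A\sqrt{\ub r}}{\radius\,\widetilde{\epsilon}}$, so the bound you obtain carries a factor $\log^{3/2}\bigl(n\dout A\sqrt{\ub r}/\radius\bigr)$. The quantities $A$, $\ub$, $r$, $\radius$ are free parameters with no a priori relation to $n$ or $\dout$, so they cannot be "absorbed into $\log(n\dout)$ as lower-order terms"; the claimed bound $\frac{3500\,A\sqrt{\ub r \log^3(35 n\dout)}}{\radius\sqrt n}$ has no such dependence inside the logarithm, and your bookkeeping would not deliver it. (A secondary symptom of the same issue: a cutoff $\alpha\asymp 1/n$ chosen independently of the problem scales need not be dominated by the main term when $A\sqrt{\ub r}/\radius$ is small.)

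The paper's proof avoids this precisely by keeping the integration variable in the "pre-rescaled" scale: after the substitution $\epsilon = \frac{\sqrt{8\ub r}}{\radius}\epsilon'$, the covering-number logarithm depends only on the ratio $A/\epsilon'$, the floor $\lfloor 4A^2/\epsilon'^2\rfloor$ kills the entropy for $\epsilon' > 2A$ so the upper limit is $\min\{2A,\radius/\sqrt 8\}$, and the cutoff is taken as $\alpha = \frac{A\sqrt{8\ub r}}{\radius\sqrt n}$, i.e.\ $\epsilon' \ge A/\sqrt n$. On that range $A/\epsilon' \le \sqrt n$, so every scale factor cancels and the entropy term is bounded \emph{uniformly} by $3\log(35 n\dout)$; pulling this constant out, the remaining integral $\int_{A/\sqrt n}^{2A}\frac{d\epsilon'}{\epsilon'} \le \log(2\sqrt n)$ supplies the extra logarithm, giving exactly $\log^{3/2}(35 n\dout)$. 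In contrast, your plan extracts the full $\log^{3/2}$ from the closed-form antiderivative $\tfrac23 u^{3/2}$ evaluated at the lower limit, which is where the unwanted $A,\ub,r,\radius$ dependence enters. To repair your argument, replace $\alpha\asymp 1/n$ by a cutoff proportional to $A\sqrt{\ub r}/(\radius\sqrt n)$ (equivalently, integrate $\epsilon'$ from $A/\sqrt n$) so that only the scale-free ratio $A/\epsilon'\le\sqrt n$ appears inside the logarithm.
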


\begin{proof}[Proof of Claim~\ref{lem:local_rad_bound}.]
	We apply Dudley's entropy integral using the covering number bound in Claim~\ref{claim:local_cover_bound}. This mirrors the calculation used to prove Lemma 2.2 in~\citep{srebro2010smoothness}. From Lemma A.1 of~\citep{srebro2010smoothness}, we have
	\begin{align}\label{eq:local_rad_bound:1}
	\erad(\emplclass_\radius(r)) \le \inf_{\alpha > 0} \left (4 \alpha + 10 \int_{\alpha}^{\sqrt{\ub r}} \sqrt{\frac{\log \cover_{L_2(\edist)} (\epsilon, \emplclass_\radius(r))}{n}} d\epsilon\right)
	\end{align}
	Now we perform a change of variables $\epsilon = \frac{\sqrt{8\ub r}}{\radius} \epsilon'$, after which~\eqref{eq:local_rad_bound:1} becomes
	\begin{align}
	\erad(\emplclass_\radius(r)) &\le \inf_{\alpha > 0} \left (4 \alpha + 10 \frac{\sqrt{8\ub r}}{\radius} \int_{\frac{\radius\alpha}{\sqrt{8\ub r}}}^{\radius/\sqrt{8}} \sqrt{\frac{\log \cover_{L_2(\edist)} \left(\frac{\sqrt{8 \ub r}}{\radius}\epsilon', \emplclass_\radius(r)\right)}{n}} d\epsilon'\right)\\
	&\le \inf_{\alpha > 0} \left (4 \alpha + 680 \frac{A\sqrt{8\ub r}}{\radius} \int_{\frac{\radius\alpha}{\sqrt{8 \ub r}}}^{\min \{2A, \radius/\sqrt{8}\}} \frac{\sqrt{\log_2(2 \lceil 16 A /{\epsilon'} + 2 \rceil n + 1) + \log \dout}}{\epsilon'\sqrt{n}} d\epsilon' \right) \label{eq:local_rad_bound:2}
	\end{align}
	To change the upper limit of the integral from $\sigma/\sqrt{8}$ to $\min\{2A, \sigma/\sqrt{8}\}$, we used the fact that we only need to integrate $\epsilon'$ to $2A$, because for $\epsilon' > 2A$ the log covering number is 0 by Claim~\ref{claim:local_cover_bound}. 
	Now we plug in $\alpha = \frac{A \sqrt{8 \ub r}}{\radius \sqrt{n}}$ into~\eqref{eq:local_rad_bound:2} and use the fact that we only integrate over $\epsilon' > A/\sqrt{n}$ to obtain (after simplification):
	\begin{align}
	\erad(\emplclass_\radius(r)) &\le \frac{A \sqrt{128 \ub r}}{\radius \sqrt{n}} + 680 \frac{A\sqrt{8\ub r}}{\radius} \int_{\frac{A}{\sqrt{n}}}^{2A} \frac{\sqrt{3 \log 35 n \dout}}{\epsilon'\sqrt{n}} d\epsilon'\\
	&\le \frac{A \sqrt{128 \ub r}}{\radius \sqrt{n}} + 680 \frac{A\sqrt{24\ub r \log (35nc)}}{\radius \sqrt{n}}\log(2\sqrt{n})\\
	&\le \frac{3500 A \sqrt{ \ub r \log^3(35 nc)}}{\radius \sqrt{n}}
	\end{align}	
\end{proof}

Using Claim~\ref{lem:local_rad_bound}, we can complete the proof of Lemma~\ref{lem:ploss_gen} using the technique of~\citep{srebro2010smoothness}, which is essentially local Rademacher complexity~\citep{bousquet2002concentration}. \begin{proof} [Proof of Lemma~\ref{lem:ploss_gen}.]
	Define $\psi(r) \triangleq \frac{3500A \sqrt{\ub r \log^3 (35nc)}}{\radius \sqrt{n}}$. By Claim~\ref{lem:local_rad_bound}, we have $\erad(\emplclass_\radius(r)) \le \psi(r)$. 
	
	Thus, we can apply the steps from the proof of Theorem 1 in~\citep{srebro2010smoothness} (which invokes Theorem 6.1 of~\citep{bousquet2002concentration}), we have with probability $1 - \delta$, for all $W \in \R^{\dout \times \din},\normto{W^\top} \le A$ 
	\begin{align}
	\E_{\dist}[\ploss_\radius(W,  x, y)] &\le \E_{\edist}[\ploss_\radius(W,  x, y)] + 106r^\star + \frac{48 \ub }{n} (\log 1/\delta + \log \log n) \nonumber\\
	& + \sqrt{\E_{\edist}[\ploss_\radius(W,  x, y)] (8 r^\star + \frac{4 \ub}{n} (\log 1/\delta + \log \log n))}\nonumber
	\end{align}
	By the AM-GM inequality, for all $\alpha > 0$, we have
	\begin{align}
	\E_{\dist}[\ploss_\radius(W,  x, y)] &\le (1 + 1/2\alpha) \E_{\edist}[\ploss_\radius(W,  x, y)] + r^\star(106 + 4\alpha) + \frac{(48 + 2 \alpha) \ub}{n} (\log 1/\delta + \log \log n)
	\end{align}
	where $r^\star$ satisfies $\psi(r^\star) = r^\star$. Now using the fact that $r^\star \lesssim \frac{BA^2 \log^3(nc)}{\radius^2 n}$, we obtain~\eqref{eq:ploss_gen:1} for some $\const > 0$.
\end{proof}

\begin{proof}[Proof of Claim~\ref{claim:ploss_smooth}.]
	Let $\delta^\star$ be the optimal perturbation for $W$ and $x$, i.e. $$\delta^\star \triangleq \argmax_{\delta \in \R^\dout} \pnlty_\radius(\gnorm{\delta}) \widebar{\loss}(Wx + \delta \gnorm{x}, y)$$
	We construct a perturbation $\delta'$ for the objective of $\ploss_{\radius}(W', x, y)$ as follows: define $\delta' \triangleq \delta^\star + \frac{Wx - W'x}{\gnorm{x}}$. It follows that 
	\begin{align}
	\ploss_{\radius}(W', x, y) &\ge \pnlty_\radius(\gnorm{\delta'}) \widebar{\loss}(W'x + \delta' \gnorm{x}, y)\\
	&=  \pnlty_\radius(\gnorm{\delta'}) \widebar{\loss}(Wx + \delta^\star \gnorm{x}, y) \tag{by construction of $\delta'$}\\
	&\ge \pnlty_\radius \left(\gnorm{\delta^\star} + \frac{\gnorm{Wx - W'x}}{\gnorm{x}}\right) \widebar{\loss}(Wx + \delta^\star \gnorm{x}, y) \tag{by triangle inequality}\\
	&\ge \left(\pnlty_\radius(\gnorm{\delta^\star}) - \frac{2\gnorm{Wx - W'x}}{\gnorm{x}\radius} \sqrt{\pnlty_\radius(\gnorm{\delta^\star})}\right) \widebar{\loss}(Wx + \delta^\star \gnorm{x}, y) \tag{using Claim~\ref{claim:pnlty_smooth}}
	\end{align}
	Thus, rearranging and using the fact that $\ploss_\radius(W, x, y) = \pnlty_\radius(\gnorm{\delta^\star}) \widebar{\loss}(Wx + \delta^\star \gnorm{x}, y)$, we obtain
	\begin{align}
	\ploss_{\radius}(W', x, y) - \ploss_\radius(W, x, y) &\ge -\frac{2\gnorm{Wx - W'x}}{\gnorm{x}\radius} \sqrt{\pnlty_\radius(\gnorm{\delta^\star})} \widebar{\loss}(Wx + \delta^\star \gnorm{x}, y)\\
	&\ge -\sqrt{\ub}\frac{2\gnorm{Wx - W'x}}{\gnorm{x}\radius} \sqrt{\ploss_\radius(W, x, y)} \tag{using the upper bound $(\widebar{\loss}(Wx + \delta^\star \gnorm{x}, y))^{1/2} \le \sqrt{\ub}$}
	\end{align}
	Using the same reasoning, we can also obtain
	\begin{align}
	\ploss_{\radius}(W, x, y) - \ploss_\radius(W', x, y) &\ge -\sqrt{\ub}\frac{2\gnorm{Wx - W'x}}{\gnorm{x}\radius} \sqrt{\ploss_\radius(W', x, y)}
	\end{align}
	It thus follows that 
	\begin{align}
	|\ploss_{\radius}(W', x, y) - \ploss_\radius(W, x, y)| \le \sqrt{\ub}\frac{2\gnorm{Wx - W'x}}{\gnorm{x}\radius} \max\{ \sqrt{\ploss_\radius(W, x, y)}, \sqrt{\ploss_\radius(W', x, y)}\}
	\end{align}
	Squaring both sides gives the desired result. 
	\end{proof}

\begin{proof}[Proof of Claim~\ref{claim:two_one_cover}]
	We first construct a set of matrices $\widetilde{\cW}_\epsilon \subset \R^{\dout \times \din}$ satisfying for all $W \in \R^{\dout \times \din}$ with $\normto{W^\top} \le A$, there exists $\widebar{W} \in \widetilde{\cW}_\epsilon$ with 
	\begin{align}
	\frac{\gnorm{Wx_i - \widebar{W}x_i}}{\gnorm{x_i}} \le \epsilon \forall \ i = 1, \ldots, n
	\end{align}
	We first note that when $\epsilon \ge A$, this set only needs cardinality 1, as we simply take $\widetilde{\cW}_\epsilon$ to only have the all 0's matrix. First, consider a $\epsilon/2$-cover in $\gnorm{\cdot}$-norm of the set $\{v \in \R^\dout : \|v\|_1 \le A\}$, which we denote by $\widebar{\cV}$. By classical results, such a cover exists with log cardinality $4A^2/\epsilon^2 \log (c + 1)$. 
	
	Next, by Theorem 4 of~\cite{zhang2002covering}, for all choices of $\epsilon', a > 0$, there exists a set $\widebar{\cU}(\epsilon', a) \subset \{u \in \R^\din : \gnorm{u} \le a\}$ such that for any $u \in \R^\din$, $\gnorm{u} \le a$, there exists $\widebar{u} \in \widebar{\cU}(\epsilon', a)$ such that $\frac{|u^\top x_i - \widebar{u}^\top x_i |}{\gnorm{x_i}} \le \epsilon' \ \forall i = 1, \ldots, n$. Furthermore, the cardinality of this set satisfies the bound 
	\begin{align}
	\widebar{\cU}(\epsilon', a) \le 144 \frac{a^2}{{\epsilon'}^2} \log_2 (2\lceil 4a/\epsilon' + 2\rceil n + 1)
	\end{align}
	
	Now for any $v \in \widebar{\cV}$, we add to our cover the set $\widetilde{\cW}_\epsilon(v) \triangleq \{\widebar{W} \in \R^{\dout \times \din} : \onevec_j^\top \widebar{W} \in \widebar{\cU}(\epsilon \sqrt{|v_j|/4\|v\|_1}, |v_j|)\}$. We have 
	\begin{align}
	\log |\widetilde{\cW}_\epsilon(v)| &\le \sum_{j = 1}^{\dout} \log |\widebar{\cU}(\epsilon \sqrt{|v_j|/4\|v\|_1}, |v_j|)|\\
	&\le \sum_{j = 1}^{\dout} 576 \frac{\|v\|_1 |v_i|}{\epsilon^2} \log_2(2\lceil 8\sqrt{\|v\|_1 |v_i|}/\epsilon + 2 \rceil n + 1 )\\
	&\le 576 \frac{\|v\|_1^2}{\epsilon^2} \log_2(2\lceil 8\|v\|_1/\epsilon + 2 \rceil n + 1 )\\
	&\le 576 \frac{A^2}{\epsilon^2} \log_2(2 \lceil 8 A /\epsilon + 2 \rceil n + 1)
	\end{align}   
	Furthermore, setting $\widetilde{\cW}_\epsilon \triangleq \cup_{v \in \widebar{\cV}} \widetilde{\cW}_\epsilon(v)$, we thus have 
	\begin{align}
	\log |\widetilde{\cW}_\epsilon| &\le \log |\widebar{\cV}| + 576 \frac{A^2}{\epsilon^2} \log_2(2 \lceil 8 A /\epsilon + 2 \rceil n + 1)\\
	&\le 1152 \lfloor \frac{A^2}{\epsilon^2} \rfloor (\log_2(2 \lceil 8 A /\epsilon + 2 \rceil n + 1) + \log \dout)
	\end{align}
	To obtain the last line, we use the fact that $\widetilde{W}_\epsilon$ has cardinality 0 when $\epsilon \ge A$.
	It remains to show that $\widetilde{W}$ satisfies the desired error properties. For any $W$ satisfying $\normto{W^\top} \le A$, there exists $\widebar{v} \in \widebar{\cV}$ satisfying 
	\begin{align} \label{eq:two_one_cover:2}
	\gnorm{\widebar{v} - \{\gnorm{\onevec_j^\top W}\}_{j = 1}^\dout} \le \epsilon/2
	\end{align} 
	Furthermore, by construction there exists $\widebar{W} \in \widetilde{\cW}_\epsilon(\widebar{v})$ satisfying 
	\begin{align}
	\label{eq:two_one_cover:1}
	\frac{\left|\onevec_j^\top \widebar{W} x_i - \frac{\widebar{v}_j}{\gnorm{\onevec_j^\top W}} \onevec_j^\top Wx_i\right|}{\gnorm{x_i}} \le \frac{\epsilon}{2} \sqrt{\frac{|\widebar{v}_j|}{\|\widebar{v}\|_1}} \ \forall j = 1, \ldots, c, \ i = 1, \ldots, n
	\end{align}
	It follows that for all $i = 1, \ldots, n$, we have 
	\begin{align}
	\frac{\gnorm{Wx_i - \widebar{W}x_i}}{\gnorm{x_i}} &= \frac{\sqrt{\sum_j (\onevec_j^\top (\widebar{W}x_i - W x_i))^2}}{\gnorm{x_i}}\\
	&\le \frac{\sqrt{\sum_j \left(\onevec_j^\top \widebar{W} x_i - \frac{\widebar{v}_j}{\gnorm{\onevec_j^\top W}} \onevec_j^\top Wx_i\right)^2}}{\gnorm{x_i}} + \frac{\sqrt{\sum_j \left(\frac{\widebar{v}_j}{\gnorm{\onevec_j^\top W}} - 1\right )^2 (\onevec_j^\top W x_i)^2}}{\gnorm{x_i}} \tag{by triangle inequality}\\
	&\le \epsilon/2 + \sqrt{\sum_j (\widebar{v}_j - \gnorm{\onevec_j^\top W})^2 \frac{(\onevec_j^\top W x_i)^2}{\gnorm{\onevec_j^\top W}^2 \gnorm{x_i}^2}} \tag{applying~\eqref{eq:two_one_cover:1}}\\
	&\le \epsilon/2 + \sqrt{\sum_j (\widebar{v}_j - \gnorm{\onevec_j^\top W})^2} \tag{since $\frac{(\onevec_j^\top W x_i)^2}{\gnorm{\onevec_j^\top W}^2 \gnorm{x_i}^2} \le 1$}\\
	&\le \epsilon \tag{by~\eqref{eq:two_one_cover:2}}
	\end{align}
	To conclude the statement of the lemma, we note that for each element $\widebar{W}\in \widetilde{\cW}_{\epsilon/2}$, we can add to $\widebar{\cW}_{\epsilon}$ a single $\widebar{W}' \in \cW(r)$ satisfying
	\begin{align}
	\frac{\gnorm{\widebar{W}x_i - \widebar{W}'x_i}}{\gnorm{x_i}} \le \epsilon/2 \forall \ i = 1, \ldots, n
	\end{align} 
	Then $\widebar{\cW}_{\epsilon}$ will be the desired cover with cardinality bounded by
	\begin{align}
	\log |\widebar{\cW}_{\epsilon}| \le \log |\widetilde{\cW}_{\epsilon/2}| \le 1152 \left \lfloor \frac{4A^2}{\epsilon^2} \right \rfloor (\log_2(2 \lceil 16 A /\epsilon + 2 \rceil n + 1) + \log \dout)
	\end{align}
\end{proof}

\subsubsection{Proof of Lemma~\ref{lem:ell_growth_bound}} \label{sec:ell_growth_bound_proof}

We will rely on the following bound on the change of a function satisfying~\eqref{eq:exp_tail_cond}. 
\begin{claim}\label{claim:exp_tail_loss}
	Suppose the loss function $\loss : \R^\dout \to \R$ is convex and satisfies $\gnorm{\deriv (\tr \circ \deriv^2 \loss )[h]} \le \tau \tr (\deriv^2 \loss [h])$ for all $h$ and some $\tau > 0$. Then for all $h, h'$, we have 
	\begin{align}
	\loss(h') \le \loss(h) + \gnorm{\deriv\loss(h)}\gnorm{h' - h} + \tr(\deriv^2 \loss[h]) \frac{\exp(\tau \gnorm{h' - h}) - \tau\gnorm{h' - h} - 1}{\tau^2}
	\end{align}
	
\end{claim}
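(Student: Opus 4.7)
The plan is to parametrize the segment from $h$ to $h'$, apply Taylor's theorem with integral remainder, and then control the growth of the Hessian trace along this segment via a scalar Gr\"onwall-type estimate that directly exploits the hypothesis~\eqref{eq:exp_tail_cond}. Setting $u \triangleq h' - h$ and $\phi(t) \triangleq \loss(h + tu)$ for $t \in [0,1]$, the identity $\phi(1) = \phi(0) + \phi'(0) + \int_0^1 (1-t)\phi''(t)\,dt$ unfolds to
\[
\loss(h') = \loss(h) + \deriv\loss[h]\, u + \int_0^1 (1-t)\, u^\top \deriv^2\loss[h+tu]\, u\, dt.
\]
Cauchy--Schwarz handles the linear term, giving $\deriv\loss[h]\,u \le \gnorm{\deriv\loss[h]}\gnorm{u}$, so it remains to control the integral remainder.

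For the quadratic term, convexity of $\loss$ ensures $\deriv^2\loss[h+tu] \succeq 0$, and for any PSD matrix $M$ we have $u^\top M u \le \gnorm{u}^2 \lambda_{\max}(M) \le \gnorm{u}^2 \tr(M)$. Defining $g(t) \triangleq \tr(\deriv^2\loss[h+tu])$, this reduces the problem to bounding $g(t)$ in terms of $g(0)$. By the chain rule, $g'(t) = \deriv(\tr \circ \deriv^2\loss)[h+tu]\, u$, so the hypothesis~\eqref{eq:exp_tail_cond} combined with Cauchy--Schwarz gives $|g'(t)| \le \tau \gnorm{u}\, g(t)$. Since $g \ge 0$, Gr\"onwall's inequality yields $g(t) \le g(0)\exp(\tau \gnorm{u}\, t)$.

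Substituting this pointwise bound back in, the quadratic term is at most $\gnorm{u}^2\, g(0) \int_0^1 (1-t)\exp(\tau \gnorm{u}\, t)\, dt$. A direct calculation, writing $a \triangleq \tau \gnorm{u}$, shows $\int_0^1 (1-t) e^{at}\, dt = \tfrac{e^a - a - 1}{a^2}$, so the $\gnorm{u}^2$ prefactor cancels the $(\tau\gnorm{u})^2$ in the denominator to produce exactly $g(0)\cdot \tfrac{\exp(\tau\gnorm{u}) - \tau\gnorm{u} - 1}{\tau^2}$. Combining this with the linear-term bound yields the claim.

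The main obstacle is the Gr\"onwall reduction: one must willingly discard the operator-norm/eigenvalue structure of the Hessian and pass to its trace, since this is the only scalar quantity whose derivative can be controlled by the hypothesis~\eqref{eq:exp_tail_cond}. Once this reduction is accepted the rest is a mechanical computation, and the tightness of the final bound (equality in the one-dimensional exponential case) confirms that no slack is lost in this step.
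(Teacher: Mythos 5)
Your proof is correct, and it is essentially the argument the paper intends: the paper's "proof" is only a pointer to Proposition 1 of \citet{bach2010self}, whose argument is exactly your restriction to the segment, a Gr\"onwall-type exponential bound on the (trace of the) Hessian using the hypothesis~\eqref{eq:exp_tail_cond}, and integration of the Taylor remainder $\int_0^1(1-t)e^{at}\,dt = (e^a-a-1)/a^2$. Your writeup in fact supplies the adaptation (passing from $u^\top \deriv^2\loss\, u$ to the trace via positive semidefiniteness) that the paper leaves implicit.
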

\begin{proof}
	The proof of this claim mirrors the proof of Proposition 1 in~\citep{bach2010self}.
\end{proof}

Now we complete the proof of Lemma~\ref{lem:ell_growth_bound}. 
\begin{proof} [Proof of Lemma~\ref{lem:ell_growth_bound}.]
	We can calculate
	\begin{align}
	\ploss_\radius(W, x, y) &= \max_{\delta \in \R^\dout} s_\radius(\gnorm{\delta}) \widebar{\loss}(Wx + \delta \gnorm{x}, y)\\
	&\le \max_{\gnorm{\delta} \le \radius} \loss(Wx + \delta \gnorm{x}, y) \tag{since $s_\radius \le 1$ and $s_\radius(\gnorm{\delta}) = 0$ when $\gnorm{\delta} > \radius$, and $\widebar{\loss} \le \loss$}\\
	&\le \max_{\gnorm{\delta} \le \radius} \loss(Wx, y) + \gnorm{D \loss(\cdot, y)[Wx]} \gnorm{\delta} {\gnorm{x}} + \tr(D^2 \loss(\cdot, y)[Wx]) \frac{\exp(\gnorm{\delta} \tau \gnorm{x}) - \gnorm{\delta} \tau \gnorm{x} - 1}{\tau^2} \tag{by Claim~\ref{claim:exp_tail_loss}}\\
	&\le  \loss(Wx, y) + \gnorm{D\loss(\cdot, y)[Wx]} \gnorm{x} \radius + \tr(D^2 \loss(\cdot, y)[Wx]) \frac{\exp(\radius \tau \gnorm{x}) - \radius \tau \gnorm{x} - 1}{\tau^2} \tag{since the previous equation is increasing in $\gnorm{\delta}$}
	\end{align}
\end{proof}
\subsection{Proof of Lemma~\ref{lem:G_upper_bound}} \label{sec:G_upper_bound_proof}

We will rely on the following statement regarding the optimum of a function which shows up in our proof for Lemma~\ref{lem:G_upper_bound}. 
\begin{claim}\label{claim:exp_opt_bound}
	Suppose that $G : \R \rightarrow  \R$ is a function of the form 
	\begin{align}
	G(\beta) = a_1 (\exp(\beta) - \beta - 1) + a_2/\beta^2 
	\end{align}
	Then we have 
	\begin{align}
	\min_{\beta > 0} G(\beta) \lesssim \sqrt{a_1 a_2} + \frac{a_2}{\log^2 \frac{a_2}{a_1} + 1}
	\end{align}
\end{claim}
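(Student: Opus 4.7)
The plan is to minimize $G(\beta) = a_1(\exp(\beta) - \beta - 1) + a_2/\beta^2$ by choosing $\beta_\star$ that balances the two opposing terms, via case analysis on the ratio $r \triangleq a_2/a_1$. The exponential term grows while the $\beta^{-2}$ term decays, and the crossover depends on whether the exponential behaves essentially quadratically (Taylor regime, small $\beta$) or as a genuine exponential (large $\beta$). These two behaviors correspond precisely to the two summands in the claimed bound, so it is natural to prove each by a separate choice of $\beta_\star$.

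In the \textbf{small-ratio regime}, when $r \le 4$ (or any fixed constant), I will pick $\beta_\star \triangleq (a_2/a_1)^{1/4}$, which is then bounded above by a universal constant. In this range the elementary identity $\exp(\beta) - \beta - 1 = \sum_{k \ge 2} \beta^k/k! \le \tfrac{1}{2}\beta^2 \exp(\beta)$ together with $\exp(\beta_\star) = O(1)$ gives $\exp(\beta_\star) - \beta_\star - 1 \lesssim \beta_\star^2$. Substituting yields $G(\beta_\star) \lesssim a_1 \beta_\star^2 + a_2/\beta_\star^2 = 2\sqrt{a_1 a_2}$, matching the first summand.

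In the \textbf{large-ratio regime}, when $r > 4$, I will pick $\beta_\star \triangleq \log r - 2 \log \log r$. A direct computation gives $\exp(\beta_\star) = r/\log^2 r$, so the first term of $G$ is bounded by $a_1 \exp(\beta_\star) = a_2/\log^2 r$ (discarding the negative $-a_1 \beta_\star - a_1$ contribution). For $r$ above a concrete threshold one checks $\beta_\star \ge \tfrac{1}{2}\log r$, hence the second term satisfies $a_2/\beta_\star^2 \le 4 a_2/\log^2 r$. Summing, $G(\beta_\star) \lesssim a_2/\log^2(a_2/a_1)$, matching the second summand. The $+1$ inside the logarithm squared is a convenience device smoothing the transition: for $r$ in an intermediate range both regime-specific choices already give bounds of order $\sqrt{a_1 a_2}$ or $a_2$, which are absorbed when $\log^2 r$ is $O(1)$.

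The main technical point, and thus the main obstacle, is simply the bookkeeping in the large-ratio regime: verifying that $\beta_\star = \log r - 2 \log \log r$ is positive and satisfies $\beta_\star \ge \tfrac{1}{2}\log r$ for all $r$ above some explicit constant, and then checking that the small-ratio constant and the large-ratio threshold can be chosen consistently so the two bounds glue together cleanly. Both of these are routine inequalities, so once the two choices of $\beta_\star$ are in hand the overall bound $\min_{\beta > 0} G(\beta) \lesssim \sqrt{a_1 a_2} + a_2/(\log^2(a_2/a_1) + 1)$ follows by taking the better of the two bounds in each regime.
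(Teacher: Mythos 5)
Your proposal is correct and follows essentially the same route as the paper's proof: the same case split on the ratio $a_2/a_1$, the same choice $\beta = (a_2/a_1)^{1/4}$ in the quadratic (small-ratio) regime, the same choice $\beta = \log\frac{a_2}{a_1} - 2\log\log\frac{a_2}{a_1}$ in the exponential (large-ratio) regime, and the same absorption of the bounded intermediate range into the $\sqrt{a_1 a_2}$ term via a constant choice of $\beta$.
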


\begin{proof}[Proof of Lemma~\ref{lem:G_upper_bound}]
	We drop the $W, \alpha$ dependency in the notation for simplicity. Define 
	\begin{align}
	G_1(\beta) &\triangleq \beta (1 + 1/\alpha)\gradm\kappa \\
	G_2(\beta) &\triangleq  (1 + 1/\alpha) \frac{\hessm(\exp(\tau \beta \kappa ) - \tau\beta\kappa - 1)}{\tau^2}\\
	G_3(\beta) &\triangleq (1 + \alpha) \frac{\ub A^2 \log^3(nc)}{\beta^2 n}
	\end{align}
	We first claim that 
	\begin{align} \label{eq:G_upper_bound}
	\min_{\beta > 0} G(\beta) < \min_{\beta > 0}  (G_1(\beta) + G_3(\beta)) + \min_{\beta > 0} (G_2(\beta) + G_3(\beta))
	\end{align} 
	To see this, let $\beta^\star, \beta_1, \beta_2$ be the minimizers for $G, G_1 + G_3, G_2 + G_3$, respectively, and assume without loss of generality that $\beta_2 \ge \beta_1$. Then we have 
	\begin{align}
	G(\beta^\star) \le G(\beta_1) &= G_1(\beta_1) + G_2(\beta_1) + G_3(\beta_1)\\
	&\le G_1(\beta_1) + G_2(\beta_2) + G_3(\beta_1) \tag{since $G_2$ is an increasing function}\\
	&< G_1(\beta_1) + G_3(\beta_1) + G_2(\beta_2) + G_3(\beta_2)
	\end{align} 
	which gives the desired statement from the definitions of $\beta_1, \beta_2$. Thus, it suffices to minimize $G_1 + G_3$, $G_2 + G_3$ separately. 
	
	To bound the minimum of $G_1 + G_3$, we observe that to minimize any function of the form $a_1 \beta + a_2/\beta^2$, we can set $\beta = (a_1)^{-1/3} a_2^{1/3}$. Applying this to $G_1 + G_3$ gives 
	\begin{align}
	\min_{\beta} G_1(\beta) + G_3(\beta) \lesssim \frac{(1 + 1/\alpha)^{2/3}(1 + \alpha)^{1/3} B^{1/3} \log(nc)}{n^{1/3}}(A\mu \kappa )^{2/3}
	\end{align}
	
	Next, we bound the minimum of $G_2 + G_3$. As the function $\exp(a)- a - 1$ is increasing in $a$, we have
	\begin{align}
	G_2(\beta) &\le (1 + 1/\alpha) \hessm \frac{\exp(\tau \beta \kappa ) - \tau\beta \kappa  - 1}{\tau^2}
	\end{align}
	where $\hessm$ is defined in~\eqref{eq:hessm_def}. 
	Let $\widebar{G}_2(\beta)$ denote the right-hand side of the above equation. Now we can invoke Claim~\ref{claim:exp_opt_bound} on the variable $\tau \beta \kappa$ to conclude that 
	\begin{align}
	\min_{\beta > 0} G_2(\beta) + G_3(\beta) < \min_{\beta > 0} \widebar{G}_2 + G_3(\beta) \lesssim \\
	\frac{A\kappa \sqrt{(1 + \alpha)(1 + 1/\alpha) \ub \log^3(nc)}}{\sqrt{n}} \sqrt{\hessm} +  (1 + \alpha)\frac{\ub A^2 \log^3(nc) \kappa^2 \tau^2}{n\left(\log^2 \left(\frac{(1 + \alpha)\ub A^2 \log^3(nc) \kappa^2 \tau^4}{n(1 + 1/\alpha) \hessm} \right) + 1\right)}
	\end{align}
	Finally, invoking~\eqref{eq:G_upper_bound} and applying the definition of $\theta$ gives the desired result.
\end{proof}

\begin{proof}[Proof of Claim~\ref{claim:exp_opt_bound}.]
	First consider the case when $a_1 > a_2$. In this case, set ${\beta'}^2 = \sqrt{a_2/a_1} < 1$. As $\exp(b) - b - 1 \le b^2$ for $0 \le b < 1$, we have in this case $G(\beta') \le \sqrt{a_1 a_2}$. 
	
	Otherwise, consider the case when $a_2 > a_1$ but $\frac{a_2}{a_1} \le \log^2 \frac{a_2}{a_1}$, $\log^2 \frac{a_2}{a_1} < 1$, or $\log \frac{a_2}{a_1} \le 2\log \log^2 \frac{a_2}{a_1} + 2$. If any of these three equations hold, then $a_2/a_1$ is upper bounded by some universal constant, in which case setting $\beta' = 1$ immediately gives $G(\beta') \lesssim \sqrt{a_1 a_2}$. 
	
	Otherwise, consider the case when  $\frac{a_2}{a_1} > \log^2 \frac{a_2}{a_1}$, $\log^2 \frac{a_2}{a_1} \ge 1$, and $\log^2 \frac{a_2}{a_1} > 2\log \log^2 \frac{a_2}{a_1} + 2$ all hold. In this case, we set $\beta' = \log \left( \frac{a_2}{a_1 \log^2 \frac{a_2}{a_1}}\right)$. Note that $\beta' > 0$ and $\beta' = \log \frac{a_2}{a_1} - \log \log^2 \frac{a_2}{a_1} > \frac{1}{2} \log \frac{a_2}{a_1} + 1$ by our conditions on $a_2/a_1$. Then we have 
	\begin{align}
	G(\beta') &\le a_1 \exp(\beta) + a_2/\beta^2 \\
	&\le \frac{a_2}{\log^2 \frac{a_2}{a_1}} + \frac{4a_2}{(\log \frac{a_2}{a_1} + 1)^2}\\
	&\lesssim \frac{a_2}{\log^2 \frac{a_2}{a_1} + 1}
	\end{align}
	Combining the three cases gives the desired claim. 
\end{proof}

\subsection{Additional Proofs of Helper Lemmas} \label{sec:proof_additional}
We first provide the proof of Lemma~\ref{lem:cross_entropy_exp_tailed}. We use $\probs$ to denote the vector of probabilities predicted by the cross entropy loss, formally defined in Section~\ref{sec:helpers}. We will rely on the derivatives of the cross entropy loss computed in Section~\ref{sec:helpers}. 
\begin{proof}[Proof of Lemma~\ref{lem:cross_entropy_exp_tailed}]
	We first compute $\tr (\deriv^2 \ce_y [h]) = \sum_i p_i - p_i^2$, by Section~\ref{sec:helpers}. Next, we have $\deriv \tr (\deriv^2 \ce_y)[h] = \sum_i (1 - 2p_i) p_i (\onevec_i - \probs)$. Now by the convexity of $\gnorm{\cdot }$, as $\sum_i p_i = 1$, we have
	\begin{align}
	\gnorm{\sum_i (1 - 2p_i) p_i (\onevec_i - \probs)} &\le \sum_i p_i |1 - 2p_i| \gnorm{\onevec_i - \probs}\\
	&\le \sum_i p_i \gnorm{\onevec_i - \probs} \tag{since $|1 - 2p_i| \le 1$}
	\end{align}
	Now we have 
	\begin{align}
	\gnorm{\onevec_i - \probs} &= \sqrt{(1 - p_i)^2 + \sum_{j \ne i} p_j^2}\\
	&\le \sqrt{(1 - p_i)^2 + (\sum_{j \ne i} p_i)^2}\\
	&\le (1 - p_i)\sqrt{2}
	\end{align}
	Plugging this back into the previous equation, we obtain 
	\begin{align}
	\gnorm{\sum_i (1 - 2p_i) p_i (\onevec_i - \probs)} \le  \sum_i p_i \gnorm{\onevec_i - \probs} \le \sqrt{2} \left(\sum_i p_i - p_i^2\right)
	\end{align}
	This gives the desired result.
\end{proof}

Next, the following statement is useful for Claim~\ref{claim:ploss_smooth}.
\begin{claim} \label{claim:pnlty_smooth}
	In the setting of Claim~\ref{claim:ploss_smooth}, for any $a, b > 0$, we have 
	\begin{align}(\pnlty_\radius(a + b) - \pnlty_\radius(a))^2 \le 4\frac{1}{\radius^2}\pnlty_\radius(a) b^2 \label{eq:pnlty_smooth:1}
	\end{align}
\end{claim}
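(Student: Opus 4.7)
The key observation is that $\pnlty_\radius$ has the convenient factorization $\pnlty_\radius(t) = q(t)^2$, where $q(t) \triangleq \max\{1 - t/\radius,\, 0\}$. This turns the desired inequality into a product bound, since
\[
\pnlty_\radius(a+b) - \pnlty_\radius(a) \;=\; \bigl(q(a+b) - q(a)\bigr)\bigl(q(a+b) + q(a)\bigr),
\]
and hence
\[
(\pnlty_\radius(a+b) - \pnlty_\radius(a))^2 \;=\; (q(a+b) - q(a))^2 \, (q(a+b) + q(a))^2.
\]
The plan is therefore to bound the two factors separately.

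First, I would argue that $q$ is $1/\radius$-Lipschitz in $t$: on $[0,\radius)$ it is affine with slope $-1/\radius$, on $[\radius,\infty)$ it is identically $0$, and at $t = \radius$ both pieces agree, so truncation at $0$ can only decrease the slope in absolute value. Consequently $|q(a+b) - q(a)| \le b/\radius$ for all $a, b > 0$, which gives $(q(a+b) - q(a))^2 \le b^2/\radius^2$. Second, because $q$ is nonincreasing (affine decreasing on $[0,\radius)$ and zero afterward), we have $q(a+b) \le q(a)$ whenever $b \ge 0$, hence $q(a+b) + q(a) \le 2 q(a)$, so $(q(a+b)+q(a))^2 \le 4 q(a)^2 = 4 \pnlty_\radius(a)$. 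Multiplying the two bounds yields exactly~\eqref{eq:pnlty_smooth:1}.

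There is no real obstacle here; the only subtlety is handling the kink of $\pnlty_\radius$ at $t = \radius$, which is cleanly absorbed by working with $q(t) = (1 - t/\radius)_+$ rather than with $\pnlty_\radius$ directly. If one prefers a case-based proof, the cases $a \ge \radius$ (both sides vanish), $a < \radius \le a+b$ (where $b \ge \radius - a$ forces $b^2/\radius^2 \ge (1-a/\radius)^2$), and $a+b < \radius$ (expand $(x-y)^2 - x^2$ with $x = 1 - a/\radius$, $y = b/\radius$, and use $0 < 2x - y < 2x$) each verify directly; but the factorization above handles all three uniformly.
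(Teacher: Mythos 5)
Your proof is correct, and it takes a cleaner route than the paper's. The paper proves Claim~\ref{claim:pnlty_smooth} by splitting into three cases according to whether $a$ and $a+b$ fall below or above $\radius$: when both exceed $\radius$ the inequality is trivial, when only $a+b$ does it uses $b \ge \radius - a$ to get $b^2/\radius^2 \ge \pnlty_\radius(a)$, and when both are below $\radius$ it expands $\pnlty_\radius(a) - \pnlty_\radius(a+b) = \bigl(2 - \tfrac{2a+b}{\radius}\bigr)\tfrac{b}{\radius}$ and bounds the first factor by $2(1-a/\radius)$. You instead observe that $\pnlty_\radius(t) = q(t)^2$ with $q(t) = (1 - t/\radius)_+$, factor the difference of squares, and bound the two factors separately: $|q(a+b)-q(a)| \le b/\radius$ by the $1/\radius$-Lipschitzness of $q$, and $q(a+b)+q(a) \le 2q(a)$ by monotonicity, so that all three of the paper's cases are absorbed into one uniform argument (the kink at $t=\radius$ is handled automatically by the positive-part truncation). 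Both arguments are elementary and yield the same constant $4$; yours buys uniformity and brevity, while the paper's case analysis makes explicit that in the boundary-crossing case the constant could even be taken to be $1$. Your closing remark correctly identifies that the case-based fallback reproduces essentially the paper's proof.
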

\begin{proof}
	First, in the case where $a + b \ge \radius, a \ge \radius$, we have $\pnlty_\radius(a + b) = \pnlty_\radius(a) = 0$ so the inequality trivially holds.
	
	Second, in the case where $a + b \ge \radius, a < \radius$, we have $\pnlty_\radius(a + b) = 0$, so the LHS of~\eqref{eq:pnlty_smooth:1} is simply $\pnlty_\radius(a)^2$. Now we note that $b \ge \radius - a$, so $b^2/\radius^2 \ge (1 - a/\radius)^2 = \pnlty_\radius(a)$. Thus, we have $\pnlty_\radius(a)^2 \le \pnlty_\radius(a) b^2/\radius^2$, so~\eqref{eq:pnlty_smooth:1} follows. 
	
	Third, in the case where $a + b < \radius, a < \radius$, we have 
	\begin{align}
		\pnlty_\radius(a) - \pnlty_\radius(a + b) &= (2 - \frac{2a + b}{\radius})(\frac{b}{\radius}) \tag{expanding the expression for $\pnlty_\radius$}
	\end{align}
	Squaring both sides, we obtain
	\begin{align}
		(\pnlty_\radius(a) - \pnlty_\radius(a + b))^2 &= (2 - \frac{2a + b}{\radius})^2\frac{b^2}{\radius^2}\\
		&\le 4(1 - a/\radius)^2 \frac{b^2}{\radius^2}\\
		&\le 4\pnlty_\radius(a) \frac{b^2}{\radius^2}
	\end{align}
\end{proof}

\subsection{Discussion of Bound}
\label{sec:theory:app_discussion}
Consider the case where the empirical distribution is only supported on $\dout' \ll \dout$ tightly clustered classes and all the datapoints $x_i$ have norm 1. Suppose that the norms of the rows of $W$ are balanced and concentrated on the $\dout'$ classes in the empirical sample. Let $\probs$ denote the softmax probability vector defined in~\eqref{eq:dist_def}. Consider weights $W$ which are well aligned with the data, so that $1  - \probs(Wx_i)_{y_i} \approx \exp(-\|\onevec_{y_i}^\top W\|_2) \approx \exp(-\|W\|_F/\sqrt{\dout'})$, for every training example $(x_i, y_i)$ (this is possible because the softmax probability vector is exponential-tailed).

In this case, by the expression for the Hessian of cross entropy loss (see Section~\ref{sec:helpers}), we would have $\log(1/\hessm(W)) \approx \E_{\edist}[-\log (1  - \probs(Wx)_y)] \gtrsim \frac{\|W\|_F}{\sqrt{\dout'}}$. On the other hand, we also have $\normto{W^\top} = \sqrt{\dout'} \|W\|_F$. Thus, the third term in the bound becomes $O\left(\frac{\ub (\dout')^2 \log^3(n \dout)}{n} \right)$.

\section{Additional Implementation Details} \label{sec:implementation_app}
We implement our code in PyTorch, basing our LSTM implementation on the following code: \url{https://github.com/salesforce/awd-lstm-lm}. We base our Transformer-XL implementation on the following code: \url{https://github.com/kimiyoung/transformer-xl}. Code for downloading and pre-processing the datasets which we use are also contained in these repositories. We run our code on NVIDIA TitanXp GPUs. We provide detailed descriptions of the algorithms we implement below. 
\subsection{Implementing Our Explicit Regularizer} \label{sec:exp_reg_imp}
Because of the large output dimensionality of language modeling tasks, we cannot compute $\regexp{}$ exactly and instead approximate it by sampling. In this section, we describe how to implement the sampling procedure for variants of the popular cross entropy loss defined in~\eqref{eq:ce_def}. Recall that we use the notation $\ce_y(v) \triangleq \ce(v, y) = -\log \sftmax(v)_y$, where $\sftmax(v)$ denotes the softmax distribution computed from $v$. We leverage the following relationship between the first and second order derivatives of the loss: 
\begin{align} \label{eq:hess_sample}
	\lhess(x) = \E_{\hat{y} \sim \sftmax(\net(x))}[ \deriv \ce_{\hat{y}}[\net(x)]^\top \deriv \ce_{\hat{y}}[\net(x)] ]
\end{align}
This relationship formally proved in Claim~\ref{claim:hess_sample}. 
Note in particular that our regularizer $\regexp{}$ does not depend on the true label $y$ since the loss Hessian $\lhess$ is independent of $y$. 
Algorithm~\ref{alg:estimate_reg_exp} leverages this relationship to compute an unbiased estimator for $\regexp{}$ based on~\eqref{eq:hess_sample} by sampling a label $\hat{y} \sim \sftmax(\net(x))$ and computing the loss Jacobian for label $\hat{y}$ instead of the full Hessian matrix $\lhess$.\footnote{When computing the gradient update, we do not differentiate through the sampling probabilities $\sftmax(\net(x))$. Thus, though our \textit{loss} estimate is unbiased, our estimate of $\nabla_W \regexp(\net, x)$ is biased. This does not appear to matter in practice.} We formally prove the correctness of Algorithm~\ref{alg:estimate_reg_exp} below. 

\begin{claim}
	Algorithm~\ref{alg:estimate_reg_exp} gives an unbiased estimate of $\regexp{}$ defined in~\eqref{eq:reg_ours}.
\end{claim}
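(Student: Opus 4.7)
The plan is to reduce the claim to a direct application of identity~\eqref{eq:hess_sample} combined with linearity. First, I would invoke~\eqref{eq:hess_sample}, which (as established separately in Claim~\ref{claim:hess_sample}) expresses the loss Hessian as
$$\lhess(x) = \E_{\hat{y} \sim \sftmax(\net(x))}\left[\deriv \ce_{\hat{y}}[\net(x)]^\top \deriv \ce_{\hat{y}}[\net(x)]\right].$$
Substituting this into the definition~\eqref{eq:reg_ours} of $\regexp{}$ and using the fact that $\matip{\cdot}{\cdot}$ is linear in its first argument, together with linearity of expectation over the finite-support distribution $\sftmax(\net(x))$, yields
$$\regexp{}(\net, x) = \E_{\hat{y} \sim \sftmax(\net(x))}\left[\sum_i \matip{\jac{i}(x)^\top \deriv \ce_{\hat{y}}[\net(x)]^\top \deriv \ce_{\hat{y}}[\net(x)] \jac{i}(x)}{\diag(\hidden{i}(x)^{\odot 2})}\right].$$

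Second, I would verify that the quantity inside the brackets is exactly what Algorithm~\ref{alg:estimate_reg_exp} computes after sampling a single $\hat{y} \sim \sftmax(\net(x))$: the algorithm replaces the full loss Hessian with the outer product of the sampled-label loss Jacobian, so per-layer it forms the rank-one expression $\jac{i}(x)^\top \deriv \ce_{\hat{y}}[\net(x)]^\top \deriv \ce_{\hat{y}}[\net(x)] \jac{i}(x)$, and its inner product with $\diag(\hidden{i}(x)^{\odot 2})$ is simply $\|\deriv \ce_{\hat{y}}[\net(x)] \jac{i}(x) \odot \hidden{i}(x)\|_2^2$. Since the displayed identity shows that the expectation of this per-sample quantity equals $\regexp{}(\net, x)$, we conclude the algorithm returns an unbiased estimator.

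The main technical content lies entirely in identity~\eqref{eq:hess_sample}, which follows from the explicit form of the softmax Hessian (namely $\lhess(x) = \diag(\probs) - \probs \probs^\top$) and is handled in a separate claim. Granted that identity, the present statement reduces to commuting a finite sum over layers, a bilinear inner product, and a finite-support expectation over the sampled label, and there is no real obstacle beyond bookkeeping. One minor subtlety worth noting in the writeup: the expectation is over $\hat{y}$ only, while $\net(x)$, $\jac{i}(x)$, and $\hidden{i}(x)$ are deterministic given $x$, so they factor through the expectation without incident.
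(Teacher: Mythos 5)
Your proposal is correct and follows essentially the same route as the paper's own proof: both substitute the identity~\eqref{eq:hess_sample} for $\lhess(x)$, identify the algorithm's $\hat{J}_i$ with $\deriv \ce_{\hat{y}}[\net(x)]\jac{i}(x)$ via the chain rule, and conclude by linearity of the inner product and the expectation over the sampled label, summing over layers. No gaps to report.
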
 
\begin{proof}
	Note that 
	\begin{align}
		\E_{\hat{y} \sim \sftmax(\net(x))}[\hat{J}_i \diag(\hidden{i}(x)^{\odot 2}) \hat{J}_i^\top] &= \E_{\hat{y} \sim \sftmax(\net(x))}\left [\matip{\hat{J}_i^\top \hat{J}_i}{ \diag(\hidden{i}(x)^{\odot 2})}\right]\\
		&= \E_{\hat{y} \sim \sftmax(\net(x))}\left [\matip{\jac{i}(x)^\top \deriv \ce_{\hat{y}}[\net(x)]^\top \deriv \ce_{\hat{y}}[\net(x)]  \jac{i}(x)}{ \diag(\hidden{i}(x)^{\odot 2})}\right]\\
		&= \matip{\jac{i}(x)^\top \lhess(x) \jac{i}(x)}{ \diag(\hidden{i}(x)^{\odot 2})} \tag{by~\eqref{eq:hess_sample}}
	\end{align}
	Summing over $i$ gives the desired result. 
\end{proof}

Algorithm~\ref{alg:estimate_reg_exp} admits a straightforward extension to the adaptive softmax loss~\citep{grave2017efficient} which computes the derivative for the loss with respect to a sampled cluster label and sampled word within the cluster.

\begin{algorithm}[tb]
	\caption{Unbiased estimate of $\regexp{}$ for cross-entropy loss.}
	\label{alg:estimate_reg_exp}
	\begin{algorithmic}
		\STATE {\bfseries Input:} data $x$. 
		\STATE Sample $\hat{y} \sim \sftmax (\net(x))$. 
		\STATE Initialize $r = 0$. 
		\FOR{layers $i$}
		\STATE Compute $\hat{J}_i = \deriv \ce_{\hat{y}} \circ \neti{i}[\hidden{i}(x)]$.
		\STATE Update $r = r + \hat{J}_i \diag(\hidden{i}(x)^{\odot 2}) \hat{J}_i^\top$. 
		\ENDFOR
		\STATE {\bfseries Return} $r$. 
	\end{algorithmic}
\end{algorithm}

\subsection{Implementing Figure~\ref{fig:imp_reg_only} Experiment}
Algorithm~\ref{alg:drop_k_our_noise} describes more formally how to implement $\drop{k}$ with injection of noise $\impours{}$, which was plotted in Figure~\ref{fig:imp_reg_only}. 
\begin{algorithm}[t]
	\caption{Procedure for injecting our update noise into $\drop{k}$ updates.}
	\label{alg:drop_k_our_noise}
	\begin{algorithmic}
		\STATE {\bfseries Input:} minibatch $\{x_i\}_{i = 1}^{m}$, number of dropout noise samples $k$, dropout probability $\droppr$.
		\STATE Sample noise $\noise_{ij}$ for $i \in [m], j \in [k]$.
		\STATE Compute $g = \nabla_W \left(\frac{1}{m} \sum_{i = 1}^{m} \helldrop{k}(\net, x_i, \{\noise_{ij}\}_{j = 1}^k) \right)$.
		\STATE Sample noise $\noise_{mk + 1}$ with coordinates independently and uniformly distributed in $\{-1, +1\}$.
		\STATE Update $g = g + \sqrt{\frac{\droppr}{\droppr - 1}}\sqrt{1 - \frac{1}{k}} \impours{}(\net, x, \noise_{mk + 1})$.\\
		$\triangleright$ Use $g$ for optimization algorithm.
	\end{algorithmic}
\end{algorithm}

\subsection{Using Identity Instead of Loss Hessian} 
It is non-trivial to implement this experiment described in Section~\ref{sec:hessian}, as the dimensionality of the output is large and naively computing the regularizer $\matip{\jac{i}^\top  \jac{i}}{\diag(\hidden{i}^{\odot 2})}$ requires computing the output Jacobian $\jac{i}$ exactly. To circumvent this issue, we use sampling. Letting $\noise_i$ be a random vector whose coordinates are independently and uniformly sampled from $\{-1, +1\}$, we have
\begin{align}
	\E[(\eta_i \odot \hidden{i})^\top \jac{i}^\top \jac{i} (\eta_i \odot \hidden{i})] = \matip{\jac{i}^\top  \jac{i}}{\diag(\hidden{i}^{\odot 2})}
\end{align}
Now to compute the value $\jac{i} (\eta_i \odot \hidden{i})$ we use the method for computing Jacobian vector products described here: \url{https://j-towns.github.io/2017/06/12/A-new-trick.html}. 
\section{Additional Experimental Results} \label{sec:experimental_result_app}
\subsection{Additional Results for Section~\ref{sec:replace_dropout}}
For all experiments in Section~\ref{sec:replace_dropout}, we use SGD with learning rate of 30 with gradient clipping with a threshold of 0.35, and default $\ell_2$-regularization of 1.2e-6 (unless we specify that we tuned this parameter). These parameters are the defaults from the awd-lstm-lm repository.\footnote{\url{https://github.com/salesforce/awd-lstm-lm}} For Penn Treebank, we use a batch size of 20 training for 150 epochs and for WikiText-2, we use a batch size of 40 training for 100 epochs. We use these same base settings for the experiments in Section~\ref{sec:hessian}. Combining our explicit and implicit regularizers adds 7-8 times runtime overhead per iteration compared to dropout. 

We use a coefficient of $\lambda_1 = 2/3$ for our explicit regularizer. For our implicit regularizer, for the experiments corresponding to Figure~\ref{fig:imp_reg_only}, we provide implementation details in Algorithm~\ref{alg:drop_k_our_noise}. For the experiments which combine our explicit and implicit regularizers, we implement $\impours{}$ by sampling $\noise$ with coordinates independently and uniformly distributed in $\{-1, +1\}$, and computing $\impours{}(\net, x, \noise)$ with coefficient $\lambda_2 = \sqrt{2/3}$ in Algorithm~\ref{alg:update_rule}. This is meant to match the dropout probability of 0.4. 

In Tables~\ref{tab:PTB_results} and~\ref{tab:wiki2_results}, we summarize our results across the experiments in Section~\ref{sec:replace_dropout}. 
\begin{table}
	\centering
	\caption{A summary of the validation perplexities of our regularizers and various baselines on Penn Treebank. Our regularizers match their dropout counterparts.}	\label{tab:PTB_results}
	\begin{tabular}{c c c} 
		& Training Method & Best Val. Ppl.\\
		\cline{1-3}
		\multirow{5}{*} {Baselines} & $\drop{1}$ & 73.76\\
				& $\drop{8}$ & 83.82\\
		& $\drop{32}$ & 89.12\\
		\cline{2-3}
		& No regularization & 122.16\\
				& Best $\ell_2$ reg. & 112.04 \\
		\cline{1-3}
		\multirow{4}{*}{Our regularizers} 
		& $\regexp{}$~\eqref{eq:reg_ours}   & 84.52\\
		& $\regexp{}$, 8 samples & 84.27\\ 
		\cline{2-3}
		& Algorithm~\ref{alg:drop_k_our_noise}, $k =8$ & 74.49\\
		\cline{2-3}
		& $\regexp{}$ and $\impours{}$~\eqref{eq:imp_ours} & 72.99
	\end{tabular}
\end{table}

\begin{table}
	\centering
	\caption{A summary of the validation perplexities of our regularizers and various baselines on Wikitext-2. Our regularizers match their dropout counterparts.}	\label{tab:wiki2_results}
	\begin{tabular}{c c c} 
		& Training Method & Best Val. Ppl.\\
		\cline{1-3}
		\multirow{5}{*} {Baselines} & $\drop{1}$ & 90.97\\
				& $\drop{8}$ & 95.91\\
		& $\drop{32}$ & 98.10\\
		\cline{2-3}
		& No regularization & 144.12\\
				& Best $\ell_2$ reg. & 137.50\\
		\cline{1-3}
		\multirow{4}{*}{Our regularizers} 
		& $\regexp{}$~\eqref{eq:reg_ours}   & 92.26\\
		& $\regexp{}$ (8 samples) & 94.73\\ 
		\cline{2-3}
		& Algorithm~\ref{alg:drop_k_our_noise}, $k =8$ & 90.87\\
		\cline{2-3}
		& $\regexp{}$ and $\impours{}$~\eqref{eq:imp_ours} & 84.57
	\end{tabular}
	
	\end{table}

\subsection{Additional Results for Section~\ref{sec:add_exp}}
For our Transformer-XL experiments, we use the hyperparameters for the base model on WikiText-103 contained in the following repository: \url{https://github.com/kimiyoung/transformer-xl/}. We use an explicit regularization coefficient of $\lambda = 0.11$ to match the dropout probability $\droppr = 0.1$. Our explicit regularizer takes 3 times longer per iteration than dropout. 

We also examine the implicit regularization effect of dropout on the QRNN architecture for WikiText-103. We use a batch size of 15 with the Adam optimizer and an initial learning rate of 5e-4 for all our runs. We chose these parameters to fit the $\drop{k}$ updates in memory. The other hyperparameters are set to their defaults in the awd-lstm repository. We chose to use QRNN as they are faster to train than LSTMs~\citep{bradbury2016quasi,merity2018analysis}. Table~\ref{tab:wiki-103-qrnn} demonstrates that the implicit regularization effect does not appear on the WikiText-103 dataset, which also matches our observations for the Transformer-XL architecture on this same dataset. This suggests that the dataset size, and not architecture, influences whether the implicit regularization effect appears. 

\begin{table}
	\centering
	\caption{Experimental results on the full WikiText-103 dataset for QRNN architecture.}	\label{tab:wiki-103-qrnn}
	\begin{tabular}{c c} 
		Training Method & Best Val. Ppl.\\
		\cline{1-2}
		$\drop{1}$   & 34.24\\
		$\drop{2}$ &  33.35\\
		$\drop{4}$ & 32.74\\
		$\drop{8}$ & 32.78
	\end{tabular}
	
\end{table}

\section{Useful Properties of Cross-Entropy Loss} \label{sec:helpers}
Recall that we defined the cross entropy loss $\ce_y$ with label $y$ by
\begin{align} \label{eq:ce_def}
\ce_y(Wx) \triangleq -\log \frac{\exp((Wx)_y)}{\sum_{y'} \exp((Wx)_{y'})}
\end{align}
The derivatives of the cross-entropy loss are given as follows: 
\begin{align}
\deriv_{\var{v}} \ce_y[v] =  \probs(v) - \onevec_y
\end{align}
where $\probs(v)$ is the softmax probability vector given by 
\begin{align} \label{eq:dist_def}
	(\probs(v))_{y'} = \frac{\exp(v_{y'})}{\sum_{y''} \exp(v_{y''})}
\end{align}
Furthermore, it also holds that 
\begin{align}
	\deriv^2_{\var{v}} \ce_y[v] = \deriv^2_{\var{v}} \probs[v] = \diag(\probs(v)) - \probs(v) \probs(v)^\top
\end{align}
Furthermore, we have the following relationship between the first and second derivatives of the loss: 
\begin{claim}\label{claim:hess_sample}
	$\deriv^2 \ce_y[v] = \E_{\hat{y}\sim \probs(v)}[\deriv\ce_{\hat{y}}[v]^\top\deriv\ce_{\hat{y}}[v]]$.
\end{claim}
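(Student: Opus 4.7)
The plan is to prove the identity by a direct computation, substituting the explicit formulas for the cross-entropy gradient and Hessian that are already recorded in Section~\ref{sec:helpers}, and then using the fact that a one-hot vector drawn according to $\probs(v)$ has a particularly simple first and second moment.

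First, I would fix $v$ and abbreviate $\probs \triangleq \probs(v)$. Recall from the preceding equations in Section~\ref{sec:helpers} that $\deriv \ce_{\hat{y}}[v] = \probs - \onevec_{\hat{y}}$ and $\deriv^2 \ce_y[v] = \diag(\probs) - \probs \probs^\top$, where the latter is independent of $y$. Next, I would expand the right-hand side of the claim as
\begin{align}
\E_{\hat{y}\sim \probs}\bigl[\deriv \ce_{\hat{y}}[v]^\top \deriv \ce_{\hat{y}}[v]\bigr]
= \E_{\hat{y}\sim \probs}\bigl[(\probs - \onevec_{\hat{y}})(\probs - \onevec_{\hat{y}})^\top\bigr], \nonumber
\end{align}
and distribute the outer product into four terms: $\probs \probs^\top - \probs \,\E[\onevec_{\hat{y}}^\top] - \E[\onevec_{\hat{y}}] \probs^\top + \E[\onevec_{\hat{y}}\onevec_{\hat{y}}^\top]$.

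Then I would compute the two moments of $\onevec_{\hat{y}}$. Since $\Pr[\hat{y} = k] = \probs_k$, we have $\E[\onevec_{\hat{y}}] = \probs$, and since $\onevec_{\hat{y}} \onevec_{\hat{y}}^\top$ is a matrix with a single $1$ in position $(\hat{y},\hat{y})$, its expectation is exactly $\diag(\probs)$. Plugging these in gives
\begin{align}
\E_{\hat{y}\sim \probs}\bigl[\deriv \ce_{\hat{y}}[v]^\top \deriv \ce_{\hat{y}}[v]\bigr]
= \probs \probs^\top - \probs \probs^\top - \probs \probs^\top + \diag(\probs)
= \diag(\probs) - \probs \probs^\top, \nonumber
\end{align}
which matches the expression for $\deriv^2 \ce_y[v]$ recorded above.

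There is no real obstacle here; the only thing to be careful about is that the claim's right-hand side does not depend on the (unused) label $y$, which is consistent with the fact that the Hessian of the cross-entropy loss depends only on the predicted distribution $\probs(v)$ and not on the true label. This identity is a special case of the general information-theoretic fact that the Fisher information of an exponential family equals the covariance of the sufficient statistic, but the direct computation above is the cleanest way to present it in context.
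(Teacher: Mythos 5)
Your proof is correct and follows essentially the same route as the paper's: both expand $\E_{\hat{y}\sim \probs(v)}[(\probs(v) - \onevec_{\hat{y}})(\probs(v) - \onevec_{\hat{y}})^\top]$ and use the moments of the one-hot vector $\onevec_{\hat{y}}$ (the paper states only $\E[\onevec_{\hat{y}}] = \probs(v)$ and leaves the rest implicit, while you also spell out $\E[\onevec_{\hat{y}}\onevec_{\hat{y}}^\top] = \diag(\probs(v))$, which is a welcome extra step of explicitness). No gaps.
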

\begin{proof}
	We have 
	\begin{align}
	\E_{\hat{y}\sim \probs(v)}[\deriv\ce_{\hat{y}}[v]^\top \deriv\ce_{\hat{y}}[v]] &= \E_{\hat{y}\sim \probs(v)} [(\probs(v) -  \onevec_{\hat{y}})(\probs(v) -  \onevec_{\hat{y}})^\top]
	\end{align}
	Note that the expectation of $\onevec_{\hat{y}}$ for $\hat{y}\sim \probs(v)$ is simply $\probs(v)$. Thus, the right hand side simplifies to the desired statement. 
\end{proof}

\appendix

\end{document}